\documentclass[pdflatex,sn-apa]{sn-jnl}

\usepackage{graphicx}
\usepackage{multirow}
\usepackage{amsmath,amssymb,amsfonts}
\usepackage{amsthm}
\usepackage{mathrsfs}
\usepackage[title]{appendix}
\usepackage{xcolor}
\usepackage{textcomp}
\usepackage{manyfoot}
\usepackage{booktabs}
\usepackage{algorithm}
\usepackage[noend]{algpseudocode}
\algnewcommand{\algorithmicbreak}{\textbf{break}}
\algnewcommand{\algorithmiccontinue}{\textbf{continue}}
\algnewcommand{\algorithmicforeach}{\textbf{for each}}
\algnewcommand\Break{\algorithmicbreak}
\algnewcommand\Continue{\algorithmiccontinue}
\algnewcommand{\LeftComment}[1]{\Statex \(\triangleright\) #1}
\algdef{SE}[FOR]{ForEach}{EndForEach}[1]
	{\algorithmicforeach\ #1\ \algorithmicdo}
	{\algorithmicend\ \algorithmicforeach}
\algtext*{EndForEach} 
\algdef{SE}[DOWHILE]{DoWhile}{EndDoWhile}{\algorithmicdo}[1]{\algorithmicwhile\ #1}
\usepackage[capitalise, nameinlink]{cleveref}
\makeatletter
	\newcounter{HALG@line}
	\renewcommand{\theHALG@line}{\thealgorithm.\arabic{ALG@line}}
\makeatother
\usepackage{acronym}
\usepackage{csquotes}
\usepackage{paralist}
\usepackage{tikz}
\usetikzlibrary{angles, arrows, arrows.meta, automata, backgrounds, calc, patterns, positioning, quotes, shapes, shadows}
\usepackage{subcaption}

\theoremstyle{thmstyleone}
\newtheorem{theorem}{Theorem}
\newtheorem{corollary}{Corollary}

\newtheorem{proposition}[theorem]{Proposition}
\newtheorem{example}{Example}

\theoremstyle{thmstylethree}
\newtheorem{definition}{Definition}

\crefname{algocf}{Alg.}{Algs.}
\Crefname{algocf}{Algorithm}{Algorithms}
\crefname{algorithm}{Alg.}{Algs.}
\Crefname{algorithm}{Algorithm}{Algorithms}
\crefname{corollary}{Cor.}{Cors.}
\Crefname{corollary}{Corollary}{Corollaries}
\crefname{definition}{Def.}{Defs.}
\Crefname{definition}{Definition}{Definitions}
\crefname{equation}{Eq.}{Eqs.}
\Crefname{equation}{Equation}{Equations}
\crefname{example}{Ex.}{Exs.}
\Crefname{example}{Example}{Examples}
\crefname{line}{Line}{Lines}
\Crefname{line}{Line}{Lines}
\crefname{proposition}{Prop.}{Props.}
\Crefname{proposition}{Proposition}{Propositions}
\crefname{section}{Sec.}{Secs.}
\Crefname{section}{Section}{Sections}
\crefname{theorem}{Thm.}{Thms.}
\Crefname{theorem}{Theorem}{Theorems}

\acrodef{aacp}[$\alpha$-ACP]{$\alpha$-Advanced Colour Passing}
\acrodef{acp}[ACP]{Advanced Colour Passing}
\acrodef{bn}[BN]{Bayesian network}
\acrodef{cbn}[CBN]{causal Bayesian network}
\acrodef{cfg}[CFG]{causal factor graph}
\acrodef{cp}[CP]{Colour Passing}
\acrodef{crv}[CRV]{counting randvar}
\acrodef{decor}[DECOR]{Detection of Commutative Factors}
\acrodef{deft}[DEFT]{Detection of Exchangeable Factors}
\acrodef{eacp}[$\varepsilon$-ACP]{$\varepsilon$-Advanced Colour Passing}
\acrodef{elci}[ELCI]{Extended Lifted Causal Inference}
\acrodef{fg}[FG]{factor graph}
\acrodef{kld}[KLD]{Kullback-Leibler divergence}
\acrodef{lci}[LCI]{Lifted Causal Inference}
\acrodef{lifagu}[LIFAGU]{Lifting Factor Graphs with Some Unknown Factors}
\acrodef{ljt}[LJT]{Lifted Junction Tree}
\acrodef{lv}[logvar]{logical variable}
\acrodef{lve}[LVE]{Lifted Variable Elimination}
\acrodef{mln}[MLN]{Markov logic network}
\acrodef{mn}[MN]{Markov network}
\acrodef{mrf}[MRF]{Markov random field}
\acrodef{pcfg}[PCFG]{parametric causal factor graph}
\acrodef{pdpcfg}[PD-PCFG]{partially directed parametric causal factor graph}
\acrodef{pf}[parfactor]{parametric factor}
\acrodef{pfg}[PFG]{parametric factor graph}
\acrodef{prv}[PRV]{parameterised randvar}
\acrodef{rv}[randvar]{random variable}
\acrodef{ve}[VE]{variable elimination}
\acrodef{wl}[WL]{Weisfeiler-Leman}

\newcommand{\alginput}[1]{\hspace*{\algorithmicindent} \textbf{Input:} #1}
\newcommand{\algoutput}[1]{\hspace*{\algorithmicindent} \textbf{Output:} #1}

\newcommand{\abs}[1]{\ensuremath{\lvert #1 \rvert}}

\newcommand{\domain}[1]{\ensuremath{\mathrm{dom}(#1)}}
\newcommand{\gr}{\ensuremath{\mathrm{gr}}}

\newcommand{\lv}{\ensuremath{\mathrm{lv}}}

\newcommand{\nupmodels}{\mathrel{\not\!\perp\!\!\!\perp}}
\newcommand{\range}[1]{\ensuremath{\mathrm{range}(#1)}}

\newcommand{\upmodels}{\mathrel{\perp\!\!\!\perp}}

\newcommand{\low}{\ensuremath{\mathrm{low}}}

\newcommand{\high}{\ensuremath{\mathrm{high}}}

\newcommand{\Ch}{\ensuremath{\mathrm{Ch}}}
\newcommand{\De}{\ensuremath{\mathrm{De}}}
\newcommand{\Ne}{\ensuremath{\mathrm{Ne}}}
\newcommand{\Pa}{\ensuremath{\mathrm{Pa}}}
\newcommand{\pa}{\ensuremath{\mathrm{pa}}}

\definecolor{myyellow}{RGB}{247,192,26}
\definecolor{myblue}{RGB}{37,122,164}
\definecolor{mygreen}{RGB}{78,155,133}
\definecolor{mypurple}{RGB}{86,51,94}

\definecolor{newblue}{RGB}{50,113,173}
\definecolor{newred}{RGB}{222,32,36}
\definecolor{newgreen}{RGB}{70,165,69}
\definecolor{newpurple}{RGB}{140,69,152}

\definecolor{cborange}{RGB}{230,159,0}
\definecolor{cbblue}{RGB}{30,136,229}
\definecolor{cbbluedark}{RGB}{46,37,133}
\definecolor{cbpurple}{RGB}{170,68,153}
\definecolor{cbgreen}{RGB}{0,77,64}
\definecolor{cbgreenlight}{RGB}{93,168,153}
\definecolor{cbbrown}{RGB}{126,41,84}

\pgfdeclarelayer{bg}
\pgfsetlayers{bg,main}

\tikzset{
	rv/.style={draw, ellipse},
	pf/.style={draw, rectangle, fill = gray!30},
	arc/.style = {->, >={[round,sep]Stealth}},
}

\newcommand\factorat[4]{
	\node[pf, label={#2:{#3}}](#4) at (#1) {};
}

\newcommand\factor[6]{
	\node[pf, #1=#3 of #2, label={#4:{#5}}](#6) {};
}

\newcommand\pfs[8]{
	\node[pf, #1=#3 of #2, xshift=-1mm, yshift=1mm](#6) {};
	\node[pf, #1=#3 of #2, label={[label distance=1mm]#4:{#5}}](#7) {};
	\node[pf, #1=#3 of #2, xshift=1mm, yshift=-1mm](#8) {};
}

\newcommand\pfsat[6]{
	\node[pf, xshift=-1mm, yshift=1mm](#4) at (#1) {};
	\node[pf, label={[label distance=1mm]#2:{#3}}](#5) at (#1) {};
	\node[pf, xshift=1mm, yshift=-1mm](#6) at (#1) {};
}

\begin{document}

\title[Lifted Causal Inference]{Lifted Causal Inference}

\author*[1,2]{\fnm{Malte} \sur{Luttermann}}\email{malte.luttermann@dfki.de}
\author[3]{\fnm{Tanya} \sur{Braun}}\email{tanya.braun@uni-muenster.de}
\author[1]{\fnm{Ralf} \sur{Möller}}\email{ralf.moeller@uni-hamburg.de}
\author[1]{\fnm{Marcel} \sur{Gehrke}}\email{marcel.gehrke@uni-hamburg.de}

\affil[1]{\orgdiv{Institute for Humanities-Centered Artificial Intelligence}, \orgname{University of Hamburg}, \orgaddress{\city{Hamburg}, \country{Germany}}}
\affil*[2]{\orgname{German Research Center for Artificial Intelligence (DFKI)}, \orgaddress{\city{Lübeck}, \country{Germany}}}
\affil[3]{\orgdiv{Data Science Group}, \orgname{University of Münster}, \orgaddress{\city{Münster}, \country{Germany}}}

\abstract{%
	Lifted inference exploits indistinguishabilities in probabilistic graphical models by using a representative for indistinguishable objects, thereby speeding up query answering while maintaining exact answers.
	In this article, we show how lifting can be applied to efficiently compute causal effects in relational domains.
	More specifically, we introduce \acp{pcfg} to incorporate causal knowledge in lifted models and give a formal semantics of interventions therein.
	We further present the \ac{lci} algorithm to compute causal effects on a lifted level, thereby drastically speeding up causal inference compared to propositional inference, e.g., in \aclp{cbn}.
	In addition, we present \acp{pdpcfg} as a generalisation of \acp{pcfg} to handle partial causal knowledge and extend \ac{lci} to perform lifted causal inference in a \ac{pdpcfg}, thereby extending the applicability of lifted causal inference to a broader range of models requiring less prior knowledge about causal relationships.
}

\keywords{causal inference, lifting, probabilistic relational models}

\maketitle

\acresetall

\section{Introduction}
A fundamental problem in the research field of artificial intelligence for an intelligent agent is to plan and act rationally in a relational domain.
To compute the best possible action in a perceived state, the agent considers the available actions and chooses the one with the maximum expected utility.
When computing the expected utility of an action performed on a specific variable, it is crucial to deploy the semantics of an intervention instead of a typical conditioning on that variable~\citep[Chapter~4]{Pearl2009a}.
When calculating the effect of an intervention, a specific variable is set to a fixed value and all incoming probabilistic causal influences of this variable must be ignored for the specific query.
It is fundamental to deploy the semantics of an intervention instead of the typical conditioning to correctly determine the effect of an action.
Otherwise, when treating actions as evidence (by applying a classical conditioning), conclusions might become misleading.
For example, assume a scenario in which the severity of fires influences the number of firefighters trying to extinguish the fire, that is, the more severe a fire is, the more firefighters are on duty.
Classical conditioning then suggests to reduce the number of firefighters to reduce the severity of fires (because the probability for a severe fire is lower when observing a low number of firefighters on duty).
In this article, we apply lifting to efficiently compute causal effects (and hence, the correct effect of actions) in relational domains, where efficient inference refers to inference running in polynomial time with respect to domain sizes.

Over the last years, causal models have become a widely used formalism to answer questions concerning the causal effect of an intervention on a \ac{rv} on another \ac{rv}.
A causal model consists of
\begin{inparaitem}
	\item[(i)] a causal graph representing the causal relationships between the involved \acp{rv}, and
	\item[(ii)] a probability distribution over the \acp{rv}.
\end{inparaitem}
There has been a considerable amount of work to perform causal effect estimation in causal models, and most of this work focuses on propositional models~\citep{Spirtes2000a,Pearl2009a,Pearl2016a,Peters2017a}.
Some works extend propositional (undirected) \acp{fg} by adding edge directions to enable the computation of the effect of interventions~\citep{Frey2003a,Winn2012a}.
\citet{Maier2013a} introduce so-called relational causal models to express causal dependencies within relational domains.
Their work focuses on causal discovery, that is, on learning relational causal models from observed data~\citep{Maier2010a}.
Further developments on relational causal models also focus on causal discovery and on reasoning about conditional independence (e.g., \citealp{Lee2015a,Lee2016a,Lee2019a}).
Relational causal models provide a lifted representation (that is, a representation that abstracts over individual objects and hence over all instantiations of a relational model) to reason about conditional independence, however, relational causal models do not support lifted causal inference.
More recently, relational causal models have also been extended to cover cyclic dependency structures~\citep{Ahsan2022a,Ahsan2023a}.
Prior work dealing with the estimation of causal effects in relational domains still applies propositional probabilistic inference~\citep{Arbour2016a,Salimi2020a}.
Consequently, there is a lack of efficient algorithms to compute causal effects on a lifted level.
In probabilistic inference, lifting exploits indistinguishabilities in a relational model, allowing to carry out query answering more efficiently while maintaining exact answers~\citep{Niepert2014a}.
First introduced by \citet{Poole2003a}, \acp{pfg} and \ac{lve} allow to perform lifted probabilistic inference, resulting in significant speed-ups for probabilistic query answering in relational domains.
Over time, \ac{lve} has been refined by many researchers to reach its current form~\citep{DeSalvoBraz2005a,DeSalvoBraz2006a,Milch2008a,Kisynski2009a,Taghipour2013a,Braun2018a}.
To perform efficient inference in a \ac{pfg} not only for single queries but also for sets of queries, \citet{Braun2016a} introduce the \ac{ljt} algorithm.
\Acp{pfg} have been well-studied for many years and have been developed further to incorporate probabilistic inference over time~\citep{Gehrke2018a,Gehrke2020a}, and, among other extensions, to allow for decision making by following the maximum expected utility principle~\citep{Gehrke2018b,Gehrke2019c,Braun2022a}.
Markov logic networks are another lifted representation and have been extended to incorporate maximum expected utility as well~\citep{Apsel2012a}.
In this article, we extend \acp{pfg} to enable lifted causal inference to correctly determine the effect of actions on a lifted level.

This article is based on and extends the works \citep{Luttermann2024b} and \citep{Luttermann2024g}.
Specifically, we present the introduced models and algorithms for lifted causal inference under a unified view, thereby making the following contributions:
First, we give a formal definition of \acp{cfg} as an extension of \acp{fg} to incorporate causal knowledge on a propositional level.
We then provide a unified view on fully directed lifted causal models introduced by \citet{Luttermann2024b} and partially directed lifted causal models introduced by \citet{Luttermann2024g}.
In particular, we expose the connection between these models and their corresponding algorithms to perform lifted causal inference therein.
We especially highlight the differences in the assumptions made in the two models and exhibit how these assumptions affect their corresponding inference algorithms.
Furthermore, we align the model definitions and algorithm descriptions for consistency of terminology and improved clarity.
We also extend the theoretical results for fully directed and partially directed lifted causal models and showcase all presented concepts on a full running example.

The remaining part of this article is structured as follows.
In \cref{sec:pcfg_cfg}, we introduce \acp{cfg} and define the notion of an intervention in a \ac{cfg} to allow for the computation of causal effects therein (on a propositional level).
Thereafter, in \cref{sec:pcfg_pcfg}, we present \acp{pcfg} as an extension of \acp{pfg} and provide a formal semantics of interventions in \acp{pcfg}.
By incorporating causal knowledge on a lifted level, a \ac{pcfg} allows to perform lifted causal inference, thereby enabling efficient decision making in relational domains using the notion of an intervention.
Then, in \cref{sec:pcfg_lci}, we elucidate the \ac{lci} algorithm, which operates on a \ac{pcfg}, and show how \ac{lci} computes causal effects on a lifted level to avoid grounding the \ac{pcfg} as much as possible.
We then portray \acp{pdpcfg} as a generalisation of \acp{pcfg} in \cref{sec:pdpcfg_pdpcfg}.
Afterwards, we investigate how the effect of interventions can be computed in a \ac{pdpcfg} in the presence of unknown causal relationships.
In \cref{sec:pdpcfg_lci}, we present the \ac{elci} algorithm as a generalisation of \ac{lci} to efficiently compute causal effects in a \ac{pdpcfg} before we conclude this article in \cref{sec:pdpcfg_conclusion}.

\section{Causal Factor Graphs} \label{sec:pcfg_cfg}
Similar to a \ac{cbn}~\citep{Pearl1988a,Pearl2009a}, a \ac{cfg} is a probabilistic graphical model that simultaneously encodes a probability distribution over a set of \acp{rv} $\boldsymbol R$ and causal relationships between the \acp{rv} in $\boldsymbol R$.
As in non-causal \acp{fg}~\citep{Frey1997a,Kschischang2001a}, the full joint probability distribution is encoded as a product of factors, where each factor is a function of a subset of the \acp{rv}.
The difference between an \ac{fg} and a \ac{cfg} is that a \ac{cfg} contains directed edges instead of undirected edges to represent the causal relationships between the \acp{rv}.
More specifically, a directed edge from a \ac{rv} $R_i$ to another \ac{rv} $R_j$ in a \ac{cfg} indicates that $R_i$ is a direct cause of $R_j$ and thus, the value of $R_i$ influences the value of $R_j$~\citep{Pearl2009a}.
Therefore, in any causal graph, it holds that the value of a \ac{rv} depends on the values of its parents.
We next provide a formal definition of a \ac{cfg} based on the definition of directed \acp{fg} given by \citet{Frey2003a}.
In the following, we denote by $\range{R_i}$ the range of a \ac{rv} $R_i$, that is, the set of possible values that $R_i$ can take.

\begin{definition}[Causal Factor Graph] \label{def:pcfg_cfg}
	We define a \emph{\ac{cfg}} as a tuple $M = (\boldsymbol V, \boldsymbol E, \boldsymbol \Phi)$ where $(\boldsymbol V, \boldsymbol E)$ is a directed bipartite graph with node set $\boldsymbol V = \boldsymbol R \cup \boldsymbol F$ and edge set $\boldsymbol E \subseteq \boldsymbol R \times \boldsymbol F$ and $\boldsymbol \Phi$ is a set of function definitions.
	The set of nodes $\boldsymbol V$ is divided into a set of \acp{rv} $\boldsymbol R = \{ R_1, \ldots, R_n \}$ (variable nodes) and a set of function names (factor nodes) $\boldsymbol F = \{ f_1, \ldots, f_m \}$.
	Every function name $f_j \in \boldsymbol F$ has a function definition (factor, for short) $\phi_j(\mathcal R_j) \in \boldsymbol \Phi$, where $\phi_j \colon \times_{R \in \mathcal R_j} \range{R} \mapsto \mathbb{R}_{\geq 0}$ maps range values of a sequence $\mathcal R_j$ of \acp{rv} from $\boldsymbol R$ to a non-negative real number (potential).
	For each function definition, there must be at least one sequence of range values that is mapped to a potential which is non-zero.
	The set of edges $\boldsymbol E$ contains two types of edges.
	For every factor node $f_j \in \boldsymbol F$ with corresponding function definition $\phi_j(\mathcal R_j)$, there is either an undirected edge $\{ R_i, f_j \} \in \boldsymbol E$ or a directed edge $(f_j, R_i) \in \boldsymbol E$ for every \ac{rv} $R_i \in \mathcal R_j$.
	We stipulate that for every factor node $f_j \in \boldsymbol F$, there exists exactly one outgoing directed edge $(f_j, R_i) \in \boldsymbol E$ among the edges incident to $f_j$.
	Each directed edge $\{ R_i, f_j \}, (f_j \to R_k)$ from a \ac{rv} $R_i \in \boldsymbol R$ to a \ac{rv} $R_k \in \boldsymbol R$ via a factor node $f_j \in \boldsymbol F$ corresponds to a direct causal relationship between $R_i$ and $R_k$.
	Furthermore, $M$ has to be acyclic, that is, $M$ is required to not contain any directed cycles.
	The joint potential for an assignment $\boldsymbol R = \boldsymbol r$ is defined as the product over all factors in the \ac{cfg} $M$:
	\begin{align}
		\psi_M(\boldsymbol R = \boldsymbol r) = \prod_{j=1}^{m} \phi_j(\mathcal R_j = \boldsymbol r_j),
	\end{align}
	where $\boldsymbol r_j$ is a projection of $\boldsymbol r$ to the argument list of $\phi_j$.
	The full joint probability distribution $P_M(\boldsymbol R)$ over $\boldsymbol R$ encoded by $M$ is then given by the normalised joint potential:
	\begin{align} \label{eq:pcfg_cfg_joint_distribution}
		P_M(\boldsymbol R = \boldsymbol r)
		&= \frac{1}{Z} \psi_M(\boldsymbol R = \boldsymbol r),
	\end{align}
	where the normalisation constant $Z$ is defined as the sum of all joint potentials:
	\begin{align}
		Z
		&= \sum_{\boldsymbol r \in \range{R_1} \times \ldots \times \range{R_n}} \psi_M(\boldsymbol R = \boldsymbol r).
	\end{align}
\end{definition}

\begin{example}[Causal Factor Graph] \label{ex:pcfg_cfg}
	Consider the \ac{cfg} $M = (\boldsymbol V, \boldsymbol E, \boldsymbol \Phi)$ depicted in \cref{fig:pcfg_example_cfg_graph}.
	$M$ represents the causal relationships between the competences and salaries of three employees $Alice$, $Bob$, and $Charlie$ and the revenue of the company they work for.
	The underlying causal graph is illustrated in \cref{fig:pcfg_example_cbn_graph}.
	In set notation, the graph structure of $M$ is given by $(\boldsymbol V = \boldsymbol R \cup \boldsymbol F, \boldsymbol E)$, where
	\begin{align*}
		\boldsymbol R = &\{ ComA, ComB, ComC, Rev, SalA, SalB, SalC \}, \\
		\boldsymbol F = &\{ f_1, f_2, f_3, f_4, f_5, f_6, f_7 \},~\text{and} \\
		\boldsymbol E = &\{ (f_1, ComA), \{ComA, f_4\}, \{ComA, f_5\}, (f_2, ComB), \{ComB, f_4\}, \{ComB, f_6\}, \\
		&\phantom{\{} (f_3, ComC), \{ComC, f_4\}, \{ComC, f_7\}, (f_4, Rev), \{Rev, f_5\}, \{Rev, f_6\}, \{Rev, f_7\}, \\
		&\phantom{\{} (f_5, SalA), (f_6, SalB), (f_7, SalC) \}.
	\end{align*}
	Moreover, the set of function definitions is
	\begin{align*}
		\boldsymbol \Phi = &\{ \phi_1(ComA), \phi_2(ComB), \phi_3(ComC), \phi_4(ComA, ComB, ComC, Rev), \\
		&\phantom{\{} \phi_5(ComA, Rev, SalA), \phi_6(ComB, Rev, SalB), \phi_7(ComC, Rev, SalC) \},
	\end{align*}
	where we omit the exact specification of the potential tables for brevity.
\end{example}

\begin{figure}[t]
	\centering
	\begin{subfigure}[t]{0.49\linewidth}
		\centering
		\begin{tikzpicture}[rv/.append style={minimum height=2.2em, minimum width=5.2em}]
			\node[rv, draw] (ca) {$ComA$};
			\node[rv, draw, right = 0.5em of ca] (cb) {$ComB$};
			\node[rv, draw, right = 0.5em of cb] (cc) {$ComC$};
			\node[rv, draw, below = 7.5em of ca] (sa) {$SalA$};
			\node[rv, draw, below = 7.5em of cb] (sb) {$SalB$};
			\node[rv, draw, below = 7.5em of cc] (sc) {$SalC$};
			\node[rv, draw] (rev) at ($(cb)!0.5!(sb)$) {$Rev$};
		
			\factor{above}{ca}{0.75em}{180}{$f_1$}{f1}
			\factor{above}{cb}{0.75em}{180}{$f_2$}{f2}
			\factor{above}{cc}{0.75em}{180}{$f_3$}{f3}
			\factorat{$(cb)!0.5!(rev)$}{180}{$f_4$}{f4}
			\factor{above}{sa}{1.0em}{180}{$f_5$}{f5}
			\factor{above}{sb}{1.0em}{180}{$f_6$}{f6}
			\factor{above}{sc}{1.0em}{180}{$f_7$}{f7}
		
			\draw[arc] (f1) -- (ca);
			\draw[arc] (f2) -- (cb);
			\draw[arc] (f3) -- (cc);
			\draw (ca) -- (f4);
			\draw (cb) -- (f4);
			\draw (cc) -- (f4);
			\draw[arc] (f4) -- (rev);
			\draw (rev) -- (f5);
			\draw (rev) -- (f6);
			\draw (rev) -- (f7);
			\draw (ca) -- (f5);
			\draw (cb) edge[bend left = 75] (f6);
			\draw (cc) -- (f7);
			\draw[arc] (f5) -- (sa);
			\draw[arc] (f6) -- (sb);
			\draw[arc] (f7) -- (sc);
		\end{tikzpicture}
		\caption{}
		\label{fig:pcfg_example_cfg_graph}
	\end{subfigure}
	\begin{subfigure}[t]{0.49\linewidth}
		\centering
		\begin{tikzpicture}[rv/.append style={minimum height=2.2em, minimum width=5.2em}]
			\node[rv, draw] (ca) {$ComA$};
			\node[rv, draw, right = 0.5em of ca] (cb) {$ComB$};
			\node[rv, draw, right = 0.5em of cb] (cc) {$ComC$};
			\node[rv, draw, below = 7.5em of ca] (sa) {$SalA$};
			\node[rv, draw, below = 7.5em of cb] (sb) {$SalB$};
			\node[rv, draw, below = 7.5em of cc] (sc) {$SalC$};
			\node[rv, draw] (rev) at ($(cb)!0.5!(sb)$) {$Rev$};
		
			\draw[arc] (ca) -- (rev);
			\draw[arc] (cb) -- (rev);
			\draw[arc] (cc) -- (rev);
			\draw[arc] (rev) -- (sa);
			\draw[arc] (rev) -- (sb);
			\draw[arc] (rev) -- (sc);
			\draw[arc] (ca) -- (sa);
			\draw[arc] (cb) edge[bend left = 55] (sb);
			\draw[arc] (cc) -- (sc);
		\end{tikzpicture}
		\caption{}
		\label{fig:pcfg_example_cbn_graph}
	\end{subfigure}
	\caption{(a) A \ac{cfg} modelling the interplay between the competences and salaries of three employees $Alice$, $Bob$, and $Charlie$ and the revenue of a company, and (b) the underlying causal graph. We omit the potential tables of the factors for brevity.}
	\label{fig:pcfg_example_cfg_bn_graph}
\end{figure}

If the context is clear, we may omit the subscript $M$ in $P_M$ and simply write $P$ instead.
From the definition of the full joint probability distribution $P_M$ in \cref{eq:pcfg_cfg_joint_distribution}, it becomes clear that the full joint probability distribution $P_M$ encoded by a \ac{cfg} $M$ is independent of the edge directions in $M$.
In other words, changing the edge directions in $M$ does not affect the probability distribution $P_M$ encoded by $M$.
However, the edge directions impact the effect of an intervention and also the conditional independence statements induced by $M$, which are implied by separation in $M$.
The separation criteria in a \ac{cfg} differ from the separation criteria in a \ac{fg} as the direction of edges influences whether paths are blocked or not.
Before we define the separation criteria in a \ac{cfg}, we introduce the following notations for a \ac{cfg} $M = (\boldsymbol R \cup \boldsymbol F, \boldsymbol E, \boldsymbol \Phi)$:
\begin{itemize}
	\item $\Pa_{\boldsymbol R}(M, R) = \{ R' \in \boldsymbol R \mid \exists f \in \boldsymbol F \colon \{R', f\} \in \boldsymbol E \land (f, R) \in \boldsymbol E \}$ denotes the set of parent \acp{rv} of a \ac{rv} $R \in \boldsymbol R$ in $M$,
	\item $\Pa(M, f) = \{ R \in \boldsymbol R \mid \{R, f\} \in \boldsymbol E \}$ denotes the set of parent \acp{rv} of a factor node $f \in \boldsymbol F$ in $M$,
	\item $\Ch_{\boldsymbol R}(M, R) = \{ R' \in \boldsymbol R \mid \exists f \in \boldsymbol F \colon \{R, f\}\in \boldsymbol E \land (f, R') \in \boldsymbol E \}$ denotes the singleton set of child \acp{rv} of a \ac{rv} $R \in \boldsymbol R$ in $M$,
	\item $\Ch(M, f) = \{ R \in \boldsymbol R \mid (f, R) \in \boldsymbol E \}$ denotes the singleton set of child \acp{rv} of a factor node $f \in \boldsymbol F$ in $M$,
	\item $\De_{\boldsymbol R}(M, R) = \{ R' \in \boldsymbol R \mid \exists f_1, \allowbreak \ldots, \allowbreak f_k \in \boldsymbol F, \allowbreak R_1, \allowbreak \ldots, \allowbreak R_{k - 1} \in \boldsymbol R \colon \{R, \allowbreak f_1\}, \allowbreak (f_1, \allowbreak R_1), \allowbreak \ldots, \allowbreak \{R_{k - 1}, \allowbreak f_k\}, \allowbreak (f_k, \allowbreak R') \in \boldsymbol E \}$ denotes the set of descendant \acp{rv} of a \ac{rv} $R \in \boldsymbol R$ in $M$ (i.e., \acp{rv} that can be reached from $R$ via a directed path), and
	\item $\De(M, f) = \{ R' \in \boldsymbol R \mid \exists f_1, \allowbreak \ldots, \allowbreak f_k \in \boldsymbol F, \allowbreak R_1, \allowbreak \ldots, \allowbreak R_k \in \boldsymbol R \colon (f, \allowbreak R_1), \allowbreak \{R_1, \allowbreak f_1\}, \allowbreak \ldots, \allowbreak (f_k, \allowbreak R') \in \boldsymbol E \}$ denotes the set of descendant \acp{rv} of a factor node $f \in \boldsymbol F$ in $M$ (i.e., \acp{rv} that can be reached from $f$ via a directed path).
\end{itemize}
The subscript $\boldsymbol R$ indicates that the sets are defined with respect to the \acp{rv} in $\boldsymbol R$, that is, the sets contain neighbouring \acp{rv} (connected via a factor node) instead of directly connected factor nodes.
As factor nodes are directly connected to \acp{rv}, we omit the subscript $\boldsymbol R$ for the sets defined with respect to factor nodes.
Moreover, since every factor node has exactly one outgoing directed edge, it holds that $\abs{\Ch(M, f)} = 1$ for every factor node $f \in \boldsymbol F$.
For the ease of reading, we may write $f \to R$ (or $R \gets f$) to represent a directed edge $(f, R) \in \boldsymbol E$ and $R - f$ (or $f - R$) to represent an undirected edge $\{R, f\} \in \boldsymbol E$.
We next define separation in \acp{cfg} based on the definition given by \citet{Frey2003a} for directed \acp{fg}.

\begin{definition}[Separation in Causal Factor Graphs] \label{def:pcfg_cfg_separation}
	Let $M = (\boldsymbol R \cup \boldsymbol F, \boldsymbol E, \boldsymbol \Phi)$ denote a \ac{cfg} and let $\boldsymbol R_i \subseteq \boldsymbol R$, $\boldsymbol R_j \subseteq \boldsymbol R$, and $\boldsymbol S \subseteq \boldsymbol R$ denote pairwise disjoint sets of \acp{rv}.
	A path from a \ac{rv} to another \ac{rv} in $M$ is a connected sequence of edges and is not restricted to follow the directions of the edges.
	Thus, it is also possible for a path to pass from a parenting \ac{rv} of a factor to another parenting \ac{rv} of the same factor.
	A path is blocked by $\boldsymbol S$ if
	\begin{enumerate}
		\item \label{item:pcfg_cfg_sep_i} the path contains the pattern $f_1 \to S \gets f_2$, where $f_1, f_2 \in \boldsymbol F$, such that neither $S$ nor any of its descendants are in $\boldsymbol S$, or
		\item \label{item:pcfg_cfg_sep_ii} the path passes from $f_1$ through $S$ to $f_2$, where $f_1, f_2 \in \boldsymbol F$, such that it does not contain the pattern $f_1 \to S \gets f_2$ and $S$ is in $\boldsymbol S$, or
		\item \label{item:pcfg_cfg_sep_iii} the path passes from a parent of a factor node $f \in \boldsymbol F$ to another parent of $f$, and neither the child of $f$ nor any of its descendants are in $\boldsymbol S$.
	\end{enumerate}
	$M$ implies $(\boldsymbol R_i \upmodels \boldsymbol R_j \mid \boldsymbol S)$ if $\boldsymbol S$ \emph{separates} $\boldsymbol R_i$ and $\boldsymbol R_j$ in $M$, that is, if $\boldsymbol S$ blocks all paths from a \ac{rv} in $\boldsymbol R_i$ to a \ac{rv} in $\boldsymbol R_j$.
\end{definition}

The separation criteria for \acp{cfg} given in \cref{def:pcfg_cfg_separation} directly correspond to the rules of $d$-separation in \acp{bn} introduced by \citet{Pearl1986a}.

\begin{example}[Separation] \label{ex:pcfg_cfg_separation}
	Consider again the \ac{cfg} $M$ depicted in \cref{fig:pcfg_example_cfg_graph}.
	For instance, it holds that $ComA \upmodels ComB$ (i.e., $\boldsymbol S = \emptyset$) as all paths from $ComA$ to $ComB$ are blocked by the condition given in \cref{item:pcfg_cfg_sep_iii} from \cref{def:pcfg_cfg_separation}.
	However, it holds that $ComA \nupmodels ComB \mid \{ Rev \}$ as, for instance, the path $ComA - f_4 - ComB$ is not blocked anymore as soon as $Rev \in \boldsymbol S$.
\end{example}

When using a \ac{cfg} $M$ to encode a probability distribution, it is crucial that the conditional independence statements induced by $M$ actually hold in the probability distribution.
The \emph{global Markov property} ensures that separation criteria in a \ac{cfg} are compliant with the conditional independence statements in a probability distribution.

\begin{definition}[Global Markov Property~\citep{Lauritzen1996a}] \label{def:prelim_global_markov_property}
	A probability distribution $P$ satisfies the \emph{global Markov property} for a \ac{cfg} $M$ if and only if for all disjoint sets of variables $\boldsymbol R_i$, $\boldsymbol R_j$, and $\boldsymbol S$ it holds that if $\boldsymbol R_i$ is separated from $\boldsymbol R_j$ given $\boldsymbol S$ in $M$, then $\boldsymbol R_i$ is conditionally independent from $\boldsymbol R_j$ given $\boldsymbol S$ in $P$.
\end{definition}

In other words, every conditional independence statement induced by the graph structure of a \ac{cfg} $M$ also holds in the probability distribution $P$ if the global Markov property is satisfied.
Thus, when encoding a probability distribution $P$ using a \ac{cfg} $M$, $M$ cannot be chosen arbitrarily but instead must be chosen such that $P$ satisfies the global Markov property with respect to $M$.
In this article, we therefore assume that all distributions $P$ satisfy the global Markov property for the corresponding \ac{cfg} that is used to encode $P$.
Furthermore, we additionally require that $M$ and $P$ fulfil the causal Markov property~\citep{Spirtes2000a}, stating that in $M$, every \ac{rv} $R \in \boldsymbol R$ is independent of all \acp{rv} that are neither effects nor direct causes of $R$ given the set of $R$'s direct causes.
By assuming that the causal Markov property holds, we ensure that every directed edge in $M$ accurately represents a causal relationship between the involved \acp{rv} (i.e., edge directions are actually causal).

\begin{definition}[Causal Markov Property~\citep{Spirtes2000a}] \label{def:pcfg_causal_markov_property}
	Let $M = (\boldsymbol R \cup \boldsymbol F, \boldsymbol E, \boldsymbol \Phi)$ be a \ac{cfg} and let $P$ be a probability distribution over the \acp{rv} in $\boldsymbol R$ generated by the underlying causal structure of $M$.
	$M$ and $P$ satisfy the \emph{causal Markov property} if and only if for every \ac{rv} $R \in \boldsymbol R$, $R$ is independent of its non-descendants given its parents, i.e., $R \upmodels \boldsymbol R \setminus (\De_{\boldsymbol R}(M, R) \cup \Pa_{\boldsymbol R}(M, R)) \mid \Pa_{\boldsymbol R}(M, R)$.
\end{definition}

In case a \ac{cfg} $M$ and a probability distribution $P$ satisfy the causal Markov property, $P$ also satisfies the global Markov property with respect to $M$, that is, the global Markov property is implied by the causal Markov property.
However, it is generally possible for a probability distribution $P$ to satisfy the global Markov property for a directed (non-causal) \ac{fg} $M$ without $M$ and $P$ satisfying the causal Markov property if $M$ encodes the conditional independence statements in $P$ but does not represent the true underlying causal relationships of $P$ (i.e., the directed edges in $M$ are not causal).
In case the causal Markov property is satisfied, direct causes of a \ac{rv} $R$ are given by the parents of $R$ and the effects of $R$ are given by the children of $R$ and hence, $P$ factorises as
\begin{align} \label{eq:pcfg_causal_markov_factorisation}
	P(R_1 = r_1, \ldots, R_n = r_n) = \prod_{i = 1}^{n} P(R_i = r_i \mid \Pa_{\boldsymbol R}(M, R_i) = \pa_{\boldsymbol R}(M, R_i)),
\end{align}
where $\pa_{\boldsymbol R}(M, R_i)$ denotes a projection of the assignment $(r_1, \ldots, r_n)$ to the parents of $R_i$.
Moreover, we assume causal sufficiency~\citep{Spirtes2000a} in this article.
The causal sufficiency assumption states that all common causes of \acp{rv} that are included in a model are also included in the model.

\begin{definition}[Causal Sufficiency] \label{def:pcfg_causal_sufficiency}
	A set $\boldsymbol R$ of \acp{rv} is \emph{causally sufficient} if and only if every common cause of any two \acp{rv} in $\boldsymbol R$ is also in $\boldsymbol R$.
\end{definition}

To summarise, whenever we deal with a \ac{cfg} (or any of the causal models introduced in the upcoming sections) $M$, we make the following assumptions:
\begin{enumerate}
	\item $M$ is acyclic, i.e., $M$ contains no directed cycles (\cref{def:pcfg_cfg}),
	\item $M$ and the probability distribution $P$ encoded by $M$ satisfy the causal Markov property (\cref{def:pcfg_causal_markov_property}), and
	\item the set of \acp{rv} $\boldsymbol R$ in $M$ is causally sufficient (\cref{def:pcfg_causal_sufficiency}).
\end{enumerate}
We next introduce a formal notion of an intervention in a \ac{cfg}.
An intervention $do(R = r)$ sets the value of a \ac{rv} $R$ to a fixed value $r \in \range{R}$ and removes all incoming influences on $R$~\citep{Pearl2016a}.
In the following, we use the notation $do(R_1 = r_1, \ldots, R_k = r_k)$ to denote a joint intervention on the \acp{rv} $R_1, \ldots, R_k$, which is an abbreviation of $do(R_1 = r_1), \ldots, do(R_k = r_k)$.
When performing an intervention, the underlying probability distribution changes.
The next definition formalises the effect of an intervention on the probability distribution encoded by a \ac{cfg}.

\begin{definition}[Interventional Distribution] \label{def:pcfg_interventional_distribution}
	Let $M = (\boldsymbol R \cup \boldsymbol F, \boldsymbol E, \boldsymbol \Phi)$ be a \ac{cfg} with $\boldsymbol R = \{ R_1, \allowbreak \ldots, \allowbreak R_n \}$.
	An intervention $do(R'_1 = r'_1, \ldots, R'_k = r'_k)$ on the \acp{rv} $R'_1, \ldots, R'_k \in \boldsymbol R$ changes the probability distribution $P_M$ encoded by $M$ such that
	\begin{align*}
		P_M&(R_1 = r_1, \ldots, R_n = r_n \mid do(R'_1 = r'_1, \ldots, R'_k = r'_k)) \\
		&=
		\begin{cases}
			\prod\limits_{R_i \in \{ R_1, \ldots, R_n \} \setminus \{ R'_1, \ldots, R'_k \}} P(r_i \mid \pa_{\boldsymbol R}(M, R_i)) & \text{if } \forall j \in \{1, \ldots, k\}\colon r_j = r'_j \\
			0 & \text{otherwise},
		\end{cases}
	\end{align*}
	where $\pa_{\boldsymbol R}(M, R_i)$ denotes a projection of the assignment $(r_1, \ldots, r_n)$ to the parents $\Pa_{\boldsymbol R}(M, R_i)$ of $R_i$.
	$P_M(R_1 = r_1, \allowbreak \ldots, \allowbreak R_n = r_n \mid do(R'_1 = r'_1, \allowbreak \ldots, \allowbreak R'_k = r'_k))$ is called the \emph{interventional distribution} of $P_M$ under the intervention $do(R'_1 = r'_1, \ldots, R'_k = r'_k)$.
\end{definition}

The interventional distribution is used to compute the effect of an intervention, that is, to answer an interventional query, which we define as follows.

\begin{definition}[Interventional Query] \label{def:pcfg_interventional_query}
	An \emph{interventional query} $P(Q \mid do(R_1 = r_1, \ldots, R_k = r_k))$ consists of a query term $Q$ (also called query variable) and a set of interventions $\boldsymbol I = \{ do(R_1 = r_1), \ldots, do(R_k = r_k) \}$ where $Q$ and $R_1, \ldots, R_k$ are disjoint \acp{rv}.
	We also refer to the variables $R_1, \ldots, R_k$ in $\boldsymbol I$ as intervention variables.
	To query a specific probability instead of a probability distribution, the query term is an event $Q = q$.
\end{definition}

To answer an interventional query, we can directly apply \cref{def:pcfg_interventional_distribution}.
In particular, given a \ac{cfg} $M = (\boldsymbol R \cup \boldsymbol F, \boldsymbol E, \boldsymbol \Phi)$ with $\boldsymbol R = \{ R_1, \ldots, R_{\ell}, R'_1, \ldots, R'_k \}$, for an intervention $do(R'_1 = r'_1, \ldots R'_k = r'_k)$, the joint distribution over $R_1, \ldots, R_{\ell}$ is given as
\begin{align} \label{eq:pcfg_truncated_product_formula}
\begin{split}
	P(&R_1 = r_1, \ldots, R_{\ell} = r_{\ell} \mid do(R'_1 = r'_1, \ldots, R'_k = r'_k)) \\
	&= \prod_{R_i \in \{ R_1, \ldots, R_{\ell} \}} P(R_i = r_i \mid \Pa_{\boldsymbol R}(M, R_i) = \pa_{\boldsymbol R}(M, R_i)),
\end{split}
\end{align}
where $\pa_{\boldsymbol R}(M, R_i)$ is a projection of the assignment $(r_1, \ldots, r_{\ell}, r'_1, \ldots, r'_k)$ to the parents $\Pa_{\boldsymbol R}(M, R_i)$ of $R_i$.
\Cref{eq:pcfg_truncated_product_formula} is also known under the name of the truncated product formula or g-formula~\citep{Pearl2016a}.
If we are not interested in the joint distribution over \emph{all} \acp{rv} $R_1, \ldots, R_{\ell}$ that are not intervened on but instead wish to compute the distribution over a subset of $\{ R_1, \ldots, R_{\ell} \}$, we have to sum out all \acp{rv} from $R_1, \ldots, R_{\ell}$ that are not queried.

\begin{definition}[Truncated Product Formula]
	Let $M = (\boldsymbol R \cup \boldsymbol F, \boldsymbol E, \boldsymbol \Phi)$ be a \ac{cfg}.
	Further, let $\boldsymbol R = \{ Q, R_1, \ldots, R_{\ell}, R'_1, \ldots, R'_k \}$.
	The result of an interventional query $P(Q \mid do(R'_1 = r'_1, \ldots, R'_k = r'_k))$ is then given by
	\begin{align}
	\begin{split}
		P(&Q \mid do(R'_1 = r'_1, \ldots R'_k = r'_k)) \\
		&= \sum_{r_1 \in \range{R_1}} \ldots \sum_{r_{\ell} \in \range{R_{\ell}}} P(Q \mid \pa_{\boldsymbol R}(M, Q)) \cdot \prod_{R_i \in \{ R_1, \ldots, R_{\ell} \}} P(r_i \mid \pa_{\boldsymbol R}(M, R_i)),
	\end{split}
	\end{align}
	where $r_i$ is again shorthand for $R_i = r_i$ (analogously for $\pa_{\boldsymbol R}$) and $\pa_{\boldsymbol R}(M, Q)$ as well as $\pa_{\boldsymbol R}(M, R_i)$ denote projections of the assignment $(q, r_1, \ldots, r_{\ell}, r'_1, \ldots, r'_k)$ to the parents of $Q$ and $R_i$, respectively.
\end{definition}

So far, we introduced \acp{cfg} as propositional causal models to compute the effect of interventions.
Next, we combine lifted representations with causal knowledge to allow for lifted causal inference.

\section{Parametric Causal Factor Graphs} \label{sec:pcfg_pcfg}
A \ac{pcfg} combines a \ac{cfg} and relational logic~\citep{Genesereth2017a} (that is, first-order logic with known universes).
By incorporating relational logic, a \ac{pcfg} allows to encode that certain properties hold for all objects in a group (i.e., set) of objects.
In a \ac{pcfg}, \acp{prv} and \acp{pf} represent sets of \acp{rv} and factors, respectively.
More specifically, a \ac{prv} is parameterised by \acp{lv}, each having a domain consisting of constants, to represent a set of \acp{rv}.
Replacing the \acp{lv} with constants from their respective domains, called \emph{grounding}, results in classical \acp{rv} again.
To restrict \acp{lv} to specific constants from their respective domains, \acp{prv} are provided with constraints.
We first define \acp{prv} and their components and afterwards define \acp{pf}, before we introduce \acp{pcfg} and their semantics.

\subsection{Definitions}
The upcoming definitions are based on the definitions given by \citet{Braun2020a} but have been adapted and extended.
\begin{definition}[Parameterised Random Variable]
	Let $\boldsymbol R$ be a set of \ac{rv} names, $\boldsymbol L$ a set of \ac{lv} names, and $\boldsymbol D$ a set of constants.
	All sets are finite.
	Each \ac{lv} $L \in \boldsymbol L$ has a domain $\domain{L} \subseteq \boldsymbol D$.
	A \emph{constraint} $C = (\mathcal L, \boldsymbol C_{\mathcal L})$ is a tuple of a sequence of \acp{lv} $\mathcal L = (L_1, \ldots, L_n)$ and a set $\boldsymbol C_{\mathcal L} \subseteq \times_{i = 1}^{n} \domain{L_i}$.
	The symbol $\top$ for $C$ marks that no restrictions apply, i.e., $\boldsymbol C_{\mathcal L} = \times_{i = 1}^{n} \domain{L_i}$.
	A \emph{substitution} $\boldsymbol \sigma = \{ L_i \mapsto t_i \}_{i = 1}^{n}$ replaces every occurrence of \ac{lv} $L_i$ with term $t_i \in \domain{L_i}$ (also called grounding).
	A \emph{\ac{prv}} $R(L_1, \ldots, L_n)$, $n \geq 0$, is a syntactical construct of a \ac{rv} name $R \in \boldsymbol R$ possibly combined with \acp{lv} $L_1, \ldots, L_n \in \boldsymbol L$ to represent a set of \acp{rv}.
	If $n = 0$, the \ac{prv} is parameterless and forms a propositional \ac{rv}.
	A \ac{prv} $A$ (or \ac{lv} $L$) under constraint $C$ is given by $A_{| C}$ ($L_{| C}$, respectively).
	We may omit $| \top$ in $A_{| \top}$ or $L_{| \top}$.
	The term $\range{A}$ denotes the possible values of a \ac{prv} $A$.
	An \emph{event} $A = a$ denotes the occurrence of \ac{prv} $A$ with range value $a \in \range{A}$ and a set of events $\boldsymbol \Xi = \{ A_1 = a_1, \ldots, A_k = a_k \}$ is called \emph{evidence}.
\end{definition}

We further denote by $\lv(Y)$ the \acp{lv} occurring in $Y$, where $Y$ may be a \ac{prv} or a constraint.
The set of all instances of $Y$ (a \ac{lv} or \ac{prv}) with respect to given constraints is denoted by $\gr(Y)$.
An instance (also called grounding) of $Y$ is the result of substituting the \acp{lv} in $Y$ with constants from the specified constraints.
For a set of elements $\boldsymbol Y$ (e.g., \acp{lv} or \acp{prv}), we define $\lv(\boldsymbol Y) = \bigcup_{Y \in \boldsymbol Y} \lv(Y)$ and $\gr(\boldsymbol Y) = \bigcup_{Y \in \boldsymbol Y} \gr(Y)$.
The next example introduces \acp{prv} for our running example.

\begin{example}[Parameterised Random Variable]
	Consider $\boldsymbol{R} = \{ Com, Rev, Sal \}$ for competence, revenue, and salary, respectively, $\boldsymbol{L} = \{ E \}$ with $\domain{E} = \{ Alice, \allowbreak Bob, \allowbreak Charlie \}$ (employees), and $\boldsymbol D = \{ Alice, \allowbreak Bob, \allowbreak Charlie \}$.
	Combining $Com$ and $Sal$ with the \ac{lv} $E$, we obtain the \acp{prv} $Com(E)_{| \top} = Com(E)$ and $Sal(E)_{| \top} = Sal(E)$.
	Furthermore, $Rev$ is a parameterless \ac{prv}.
	For the sake of the example, let $\range{Com(E)} = \range{Rev} = \range{Sal(E)} = \{ \low, \allowbreak \high \}$.
	Applying the substitution $\boldsymbol \sigma = \{ E \mapsto Alice \}$ to $Com(E)$ results in $Com(Alice)$.
	The groundings of $Com(E)$ are given by $\gr(Com(E)) = \{ Com(Alice), \allowbreak Com(Bob), \allowbreak Com(Charlie) \}$.
	Applying the constraint $C = (E, \{ Alice \})$ to $Com(E)$ yields $Com(E)_{| (E, \{ Alice \})}$ with groundings $\gr(Com(E)_{| (E, \{ Alice \})}) = \{ Com(Alice) \}$.
\end{example}

We next define \acp{pf}, which represent sets of factors and are used to encode the probability distribution over the \acp{rv}.
A \ac{pf} describes a function, mapping argument values to positive real numbers (potentials), of which at least one is non-zero.

\begin{definition}[Parfactor]
	Let $\boldsymbol \Phi$ denote a set of function definitions, let $\mathcal A = (A_1, \ldots, A_n)$ denote a sequence of \acp{prv}, and let $(\mathcal L, \boldsymbol C_{\mathcal L})$ denote a constraint on the \acp{lv} $\mathcal L$ in $\mathcal A$.
	With $\phi \colon \times_{i = 1}^{n} \range{A_i} \mapsto \mathbb{R}_{\geq 0}$ being a function from $\boldsymbol \Phi$, a \emph{\ac{pf}} is given by $\forall \boldsymbol l \in \boldsymbol C_{\mathcal L} \colon \phi(\mathcal A)_{| (\mathcal L, \boldsymbol C_{\mathcal L})}$, where $\mathcal L$ is substituted by $\boldsymbol l$ in $\mathcal A$.
	We write $\phi(\mathcal A)_{| (\mathcal L, \boldsymbol C_{\mathcal L})}$ as a shorthand for $\forall \boldsymbol l \in \boldsymbol C_{\mathcal L} \colon \phi(\mathcal A)_{| (\mathcal L, \boldsymbol C_{\mathcal L})}$ (omitting the substitution) and we again may omit $| \top$ in $\phi(\mathcal A)_{| \top}$.
\end{definition}

For a \ac{pf} $\phi$, $\lv(\phi)$ again refers to the \acp{lv} in $\phi$ and $\gr(\phi)$ again denotes the set of instances of $\phi$.
We next introduce \acp{pf} for our running example.

\begin{example}[Parfactor]
	Take a look at $\phi_1(Com(E))_{| \top}$ with $\range{Com(E)} = \{ \low, \allowbreak \high \}$ and $\domain{E} = \{ Alice, \allowbreak Bob, \allowbreak Charlie \}$.
	For $\phi_1$, we have
	\begin{align*}
		\phi_1(Com(E))_{| \top} &= \forall e \in \domain{E} \colon \phi_1(Com(e))_{| \top}.
	\end{align*}
	It holds that $\gr(\phi_1(Com(E))_{| \top}) = \{ \phi_1(Com(Alice)), \allowbreak \phi_1(Com(Bob)), \allowbreak \phi_1(Com(Charlie)) \}$.
	In this specific example, $\phi_1(Com(E))_{| \top}$ thus represents a set of three ground factors.
\end{example}

Before we are ready to define a \ac{pcfg}, we need one more concept, namely the concept of a \ac{crv}~\citep{Milch2008a}, which allows us to compactly encode a factor where it does not matter which specific individual \acp{rv} have a certain range value but instead only the number of \acp{rv} having particular range values is of interest.
The range of a \ac{crv} is the space of histograms, i.e., a range value is a histogram indicating how many \acp{rv} have a certain value.

\begin{definition}[Counting Random Variable]
	Let $A(\mathcal{L})_{| C}$ denote a \ac{prv} under constraint $C$, where $\lv(\mathcal{L}) = \{ L \}$, i.e., either $\mathcal{L}$ consists of only $L$ or the other inputs are constants (meaning $\mathcal L$ contains at most one \ac{lv}).
	We denote a \emph{\ac{crv}} by $\#_{L}[A(\mathcal{L})_{| C}]$.
	Its range is the space of possible histograms.
	A histogram $h$ is a set of tuples $\{ (v_i, n_i) \}_{i = 1}^{m}$, $v_i \in \range{A(\mathcal{L})}$, $n_i \in \mathbb{N}$, $m = \abs{\range{A(\mathcal{L})}}$, and $\sum_i n_i = \abs{\gr(L_{| C})}$ for some constraint $C$ over $\mathcal{L}$.
	A shorthand notation is $[n_1, \ldots, n_m]$.
	Since counting binds the \ac{lv} $L$, $\lv(\#_{L}[A(\mathcal{L})]) = \mathcal{L} \setminus \{ L \}$.
\end{definition}

\begin{example}[Counting Random Variable] \label{ex:prelim_crv}
	Let $\#_{E}[Com(E)]$ be a \ac{crv}, $\range{Com(E)} = \{ \low, \allowbreak \high \}$ and $\domain{E} = \{ Alice, \allowbreak Bob, \allowbreak Charlie \}$.
	Then, there are $m = \abs{\range{Com(E)}} = 2$ possible range values and $n = \abs{\gr(E)} = 3$ groundings.
	Hence, the histograms are $[0, 3]$, $[1, 2]$, $[2, 1]$, and $[3, 0]$ (corresponding to $\{ (\high, 0), \allowbreak (\low, 3) \}$, $\{ (\high, 1), \allowbreak (\low, 2) \}$, $\{ (\high, 2), \allowbreak (\low, 1) \}$ and $\{ (\high, 3), \allowbreak (\low, 0) \}$ in set notation, respectively).
\end{example}

We have now introduced all components of a \ac{pcfg}, which we define next.

\begin{definition}[Parametric Causal Factor Graph] \label{def:pcfg_pcfg}
	A \emph{\ac{pcfg}} $M = (\boldsymbol V, \boldsymbol E, \boldsymbol \Phi)$ consists of a directed graph $(\boldsymbol V, \boldsymbol E)$ with node set $\boldsymbol V = \boldsymbol A \cup \boldsymbol G$ and edge set $\boldsymbol E \subseteq \boldsymbol A \times \boldsymbol G$.
	The set of nodes $\boldsymbol V = \boldsymbol A \cup \boldsymbol G$ is partitioned into a set of \acp{prv} $\boldsymbol A = \{ A_1, \ldots, A_n \}$ and a set of \ac{pf} names (\ac{pf} nodes) $\boldsymbol G = \{ g_1, \ldots, g_m \}$.
	For every \ac{pf} name $g_j \in \boldsymbol G$, there is a function definition (\ac{pf}) $\phi_j(\mathcal A_j)_{| C} \in \boldsymbol \Phi$ with $\mathcal A_j$ being a sequence of \acp{prv} from $\boldsymbol A$ and $C$ being a constraint on the \acp{lv} of $\mathcal A_j$ such that $\phi\colon \times_{A \in \mathcal A_j} \range{A} \mapsto \mathbb{R}_{\geq 0}$ maps range values in $\mathcal A_j$ to a positive real number (potential).
	In every function definition, at least one potential has to be non-zero and we again may omit $| \top$ in $\phi_j(\mathcal A_j)_{| \top}$.
	For each \ac{pf} name $g_j \in \boldsymbol G$ with corresponding function definition $\phi_j(\mathcal A_j)_{| C} \in \boldsymbol \Phi$, there is either an undirected edge $\{ A_i, g_j \} \in \boldsymbol E$ or a directed edge $(g_j, A_i) \in \boldsymbol E$ for every \ac{prv} $A_i \in \mathcal A_j$ (directed edges are only allowed to point from \ac{pf} nodes to \acp{prv} but not vice versa).
	We stipulate that for every \ac{pf} node $g_j \in \boldsymbol G$, there is exactly one outgoing directed edge $(g_j, A_i) \in \boldsymbol E$ among the edges incident to $g_j$.
	Each directed edge $A_i - g_j \to A_k$ from a \ac{prv} $A_i \in \boldsymbol A$ to a \ac{prv} $A_k \in \boldsymbol A$ via a \ac{pf} node $g_j \in \boldsymbol G$ corresponds to a direct causal relationship between $A_i$ and $A_k$.
	A \ac{pcfg} is an acyclic graph, that is, $\boldsymbol E$ contains no sequence of edges $\{ A_1, \allowbreak g_1 \}, \allowbreak (g_1, \allowbreak A_2), \allowbreak \ldots, \allowbreak \{ A_{k - 1}, \allowbreak g_k \}, \allowbreak (g_k, \allowbreak A_1)$ starting from an arbitrary \ac{prv} $A_1 \in \boldsymbol A$ such that the sequence ends again at $A_1$ when following the edges in the direction of the arrows.
	The semantics of $M$ is given by grounding with respect to constraints and building a full joint distribution over $\boldsymbol R = \gr(\boldsymbol A)$.
	The joint potential for an assignment $\boldsymbol R = \boldsymbol r$ is
	\begin{align}
		\psi_M(\boldsymbol R = \boldsymbol r) = \prod_{\phi_j \in \boldsymbol \Phi} \prod_{\phi_k \in \gr(\phi_j)} \phi_k(\mathcal R_k = \boldsymbol r_k),
	\end{align}
	where $\boldsymbol r_k$ is a projection of $\boldsymbol r$ to the argument list $\mathcal R_k$ of $\phi_k$.
	The normalised joint potential then yields the full joint probability distribution over $\boldsymbol R$ that is encoded by $M$, that is,
	\begin{align} \label{eq:pcfg_joint_distribution}
		P_M(\boldsymbol R = \boldsymbol r) = \frac{1}{Z} \psi_M(\boldsymbol R = \boldsymbol r),
	\end{align}
	where $Z$ is the normalisation constant, defined as
	\begin{align} \label{eq:pcfg_normalisation_constant}
		Z = \sum_{\boldsymbol r \in \times_{R \in \boldsymbol R} \range{R}} \psi_M(\boldsymbol R = \boldsymbol r).
	\end{align}
\end{definition}

The definition of a \ac{pcfg} also implies that every \ac{cfg} is a \ac{pcfg} containing only parameterless \acp{prv} (analogously, every factor is a \ac{pf} having only parameterless arguments).
Grounding a \ac{pcfg} thus yields a \ac{cfg} entailing equivalent semantics (that is, encoding the same full joint probability distribution) as the \ac{pcfg}.
As \acp{lv} abstract from individual objects, we refer to \acp{pcfg} as \emph{lifted} representations and to \acp{cfg} as \emph{propositional} representations (in the same way, we refer to parameterless \acp{prv} as propositional \acp{rv} and to \acp{pf} having only parameterless arguments as propositional factors).
In the literature, a lifted representation is sometimes also referred to as a first-order representation.
Before we take a look at an example, we introduce the following notations for a \ac{pcfg} $M = (\boldsymbol A \cup \boldsymbol G, \boldsymbol E, \boldsymbol \Phi)$:
\begin{itemize}
	\item $\Pa_{\boldsymbol A}(M, A) = \{ A' \in \boldsymbol A \mid \exists g \in \boldsymbol G \colon \{ A',  g \} \in \boldsymbol E \land (g, A) \in \boldsymbol E \}$ denotes the set of parent \acp{prv} of a \ac{prv} $A \in \boldsymbol A$ in $M$,
	\item $\Pa(M, g) = \{ A \in \boldsymbol A \mid \{ A, g \} \in \boldsymbol E \}$ denotes the set of parent \acp{prv} of a \ac{pf} node $g \in \boldsymbol G$ in $M$,
	\item $\Ch_{\boldsymbol A}(M, A) = \{ A' \in \boldsymbol A \mid \exists g \in \boldsymbol G \colon \{ A, g \} \in \boldsymbol E \land (g, A') \in \boldsymbol E \}$ denotes the singleton set of child \acp{prv} of a \ac{prv} $A \in \boldsymbol A$ in $M$,
	\item $\Ch(M, g) = \{ A \in \boldsymbol A \mid (g, A) \in \boldsymbol E \}$ denotes the singleton set of child \acp{prv} of a \ac{pf} node $g \in \boldsymbol G$ in $M$,
	\item $\De_{\boldsymbol A}(M, A) = \{ A' \in \boldsymbol A \mid \exists g_1, \allowbreak \ldots, \allowbreak g_k \in \boldsymbol G, \allowbreak A_1, \allowbreak \ldots, \allowbreak A_{k - 1} \in \boldsymbol A \colon \{ A, \allowbreak g_1 \}, \allowbreak (g_1, \allowbreak A_1), \allowbreak \ldots, \allowbreak \{ A_{k - 1}, \allowbreak g_k \}, \allowbreak (g_k, \allowbreak A') \in \boldsymbol E \}$ is the set of descendant \acp{prv} of a \ac{prv} $A \in \boldsymbol A$ in $M$, and
	\item $\De(M, g) = \{ A' \in \boldsymbol A \mid \exists g_1, \allowbreak \ldots, \allowbreak g_k \in \boldsymbol G, \allowbreak A_1, \allowbreak \ldots, \allowbreak A_k \in \boldsymbol A \colon (g, \allowbreak A_1), \allowbreak \{ A_1, \allowbreak g_1 \}, \allowbreak \ldots, \allowbreak (g_k, \allowbreak A') \in \boldsymbol E \}$ is the set of descendant \acp{prv} of a \ac{pf} node $g \in \boldsymbol G$ in $M$.
\end{itemize}
As before, the subscript $\boldsymbol A$ indicates that the sets are defined with respect to the \acp{prv} in $\boldsymbol A$, that is, the sets contain neighbouring \acp{prv} that are connected via a \ac{pf} node.
Furthermore, as the definition of a \ac{pcfg} requires every \ac{pf} node to have exactly one outgoing directed edge, it holds that $\abs{\Ch(M, g)} = 1$ for every \ac{pf} node $g \in \boldsymbol G$.
We next give an example.

\begin{example}[Parametric Causal Factor Graph] \label{ex:pcfg_example_pcfg}
	\Cref{fig:pcfg_example_pcfg} displays a \ac{pcfg} $M$ for our running example.
	$M$ contains two \acp{prv} $Com(E)$ (for the competence of employees) and $Sal(E)$ (for the salary of employees), as well as a propositional \ac{rv} $Rev$ (for the revenue of the company).
	The ranges of the \acp{prv} are $\range{Com(E)} = \range{Sal(E)} = \range{Rev} = \{ \low, \allowbreak \high \}$ and the \ac{lv} $E$ (representing employees) has the domain $\domain{E} = \{ Alice, Bob, Charlie \}$.
	There are three \ac{pf} nodes $g_1$, $g_2$, and $g_3$ with corresponding function definitions $\phi_1(Com(E))$, $\phi_2(\#_E[Com(E)], \allowbreak Rev)$, and $\phi_3(Com(E), \allowbreak Rev, \allowbreak Sal(E))$.
	We omit the potential tables of the \acp{pf} for brevity.
	As $Com(E)$ appears count-converted in $\phi_2(\#_E[Com(E)], \allowbreak Rev)$, it holds that $\lv(\phi_2) = \emptyset$ and thus, $g_2$ is not layered in \cref{fig:pcfg_example_pcfg} while $g_1$ and $g_3$ are layered (because $\lv(\phi_1) \neq \emptyset$ and $\lv(\phi_3) \neq \emptyset$).
	In set notation, $M = (\boldsymbol A \cup \boldsymbol G, \boldsymbol E, \boldsymbol \Phi)$ is given as
	\begin{align*}
		\boldsymbol R &= \{ Com, Rev, Sal \}, \\
		\boldsymbol L &= \{ E \}, \\
		\boldsymbol D &= \{ Alice, Bob, Charlie \}, \\
		\boldsymbol A &= \{ Com(E), Rev, Sal(E) \}, \\
		\boldsymbol G &= \{ g_1, g_2, g_3 \}, \\
		\boldsymbol E &= \{ (g_1, Com(E)), \{Com(E), g_2\}, \{Com(E), g_3\}, (g_2, Rev), \{Rev, g_3\}, (g_3, Sal(E)) \}, \\
		\boldsymbol \Phi &= \{ \phi_1(Com(E)), \phi_2(\#_{E}[Com(E)], \allowbreak Rev), \phi_3(Com(E), \allowbreak Rev, \allowbreak Sal(E)) \},
	\end{align*}
	where $\boldsymbol R$ is the set of \ac{rv} names, $\boldsymbol L$ is the set of \ac{lv} names, and $\boldsymbol D$ is the set of constants.
	Grounding $M$ results in the \ac{cfg} from \cref{ex:pcfg_cfg}, where $Com(Alice)$ corresponds to $ComA$, $Com(Bob)$ corresponds to $ComB$, and so on.
\end{example}

\begin{figure}[t]
	\centering
	\begin{tikzpicture}[rv/.append style={minimum height=2.2em, minimum width=5.6em}]
		\node[rv, inner sep = 1.8pt] (C) {$Com(E)$};

		\pfs{above}{C}{1.0em}{0}{$g_1$}{G1a}{G1}{G1b}
		\factor{below left}{C}{1.0em and 0.8em}{180}{$g_2$}{G2}
		\pfs{below right}{C}{1.0em and 0.8em}{0}{$g_3$}{G3a}{G3}{G3b}

		\node[rv, below = 1.0em of G2] (R) {$Rev$};
		\node[rv, inner sep = 1.8pt, below = 1.0em of G3] (S) {$Sal(E)$};

		\begin{pgfonlayer}{bg}
			\draw[arc] (G1) -- (C);
			\draw (C) -- (G2);
			\draw (C) -- (G3);
			\draw[arc] (G2) -- (R);
			\draw (R) -- (G3);
			\draw[arc] (G3) -- (S);
		\end{pgfonlayer}
	\end{tikzpicture}
	\caption{An illustration of a \ac{pcfg} for our running example. We omit the potential tables of the (par)factors for brevity.}
	\label{fig:pcfg_example_pcfg}
\end{figure}

We deliberately chose labels $ComA$, $ComB$, and $ComC$ instead of $Com(Alice)$, $Com(Bob)$, and $Com(Charlie)$ in \cref{ex:pcfg_cfg} to emphasise that there is no explicit representation of objects (here employees) in the graph structure of the propositional \ac{cfg}.
In general, node labels can be arbitrary strings of characters.
The size of the \ac{pcfg} (that is, the number of nodes and edges in the graph) remains constant even if the number of employees increases.
In the \ac{cfg} from \cref{fig:pcfg_example_cfg_graph}, however, the size of the graph increases linearly with the number of employees as every additional employee adds two \acp{rv} and two factors to the graph.
In general, there might be multiple groups of indistinguishable objects instead of having a single group including all objects, which can be represented by using constraints.

Before we continue to define the semantics of an intervention in a \ac{pcfg}, we briefly provide the separation criteria in a \ac{pcfg}, linking its graph structure to conditional independence statements in the underlying probability distribution.

\subsection{Conditional Independence in Parametric Causal Factor Graphs} \label{sec:pcfg_ci}
The separation criteria in a \ac{pcfg} are given on a ground level and hence directly correspond to the separation criteria for a \ac{cfg} given in \cref{def:pcfg_cfg_separation}.

\begin{definition}[Separation in Parametric Causal Factor Graphs] \label{def:pcfg_pcfg_separation}
	Let $M = (\boldsymbol A \cup \boldsymbol G, \boldsymbol E, \boldsymbol \Phi)$ be a \ac{pcfg}.
	Further, let $\boldsymbol R = \gr(\boldsymbol A)$ and let $\boldsymbol R_i \subseteq \boldsymbol R$, $\boldsymbol R_j \subseteq \boldsymbol R$, and $\boldsymbol S \subseteq \boldsymbol R$ be pairwise disjoint sets of ground \acp{rv}.
	We say that $\boldsymbol S$ \emph{separates} $\boldsymbol R_i$ and $\boldsymbol R_j$ in $M$ if $\boldsymbol S$ blocks all paths from any \ac{rv} in $\boldsymbol R_i$ to any \ac{rv} in $\boldsymbol R_j$ in $\gr(M)$.
	$M$ implies the conditional independence statement $(\boldsymbol R_i \upmodels \boldsymbol R_j \mid \boldsymbol S)$ if $\boldsymbol S$ separates $\boldsymbol R_i$ and $\boldsymbol R_j$ in $M$.
\end{definition}

While the separation criteria in a \ac{pcfg} are defined on a ground level, it is nevertheless possible to check for separation on a lifted level without grounding the entire \ac{pcfg}.
The Bayes-Ball algorithm~\citep{Shachter1998a} allows to efficiently check for induced conditional independence statements in a propositional \ac{bn} (and hence can also be applied to a \ac{cfg} by taking the underlying causal graph structure into account).
\citet{Meert2010a} extend the Bayes-Ball algorithm to the lifted setting, thereby allowing to run it directly on a lifted representation.

It is also possible to check for implied conditional independence statements involving \acp{prv} instead of ground \acp{rv} in a highly efficient manner on a lifted level.
In a \ac{pcfg}, every \ac{prv} $A(\mathcal L)_{| C}$ is represented by a variable node and thus, checking for conditional independence statements that involve $A$ can be done by looking at this specific variable node instead of taking into account all groundings of $A$.
In contrast, in a propositional setting (i.e., in a ground model), each ground \ac{rv} in $\gr(A)$ is an individual node in the graph and hence must be looked at individually.
For instance, to check whether $Com(E) \upmodels Sal(E) \mid Rev$ is implied by the \ac{pcfg} depicted in \cref{fig:pcfg_example_pcfg}, only three variable nodes are of relevance whereas $2 \cdot \abs{\domain{E}} + 1$ nodes are of relevance in the corresponding ground model shown in \cref{fig:pcfg_example_cfg_graph}.
Here, the conditional independence statement $Com(E) \upmodels Sal(E) \mid Rev$ (which is not implied by the \ac{pcfg} from \cref{fig:pcfg_example_pcfg}) is a shorthand to refer to a set of conditional independence statements resulting from substituting $E$ with every $e \in \domain{E}$, i.e., $\{ Com(e) \upmodels Sal(e) \mid Rev \}_{e \in \domain{E}}$.

For now, we additionally assume that every \ac{prv} $A_k$ in a \ac{pcfg} has exactly one parent \ac{pf} node $g_j$ such that each corresponding ground function definition $\phi_j(R_1, \allowbreak \ldots, \allowbreak R_k) \in \gr(\phi_j(A_1, \allowbreak \ldots, \allowbreak A_k))$ represents a conditional probability distribution $P(R_k \mid R_1, \allowbreak \ldots, \allowbreak R_{k - 1})$.
In other words, we assume that the ground \ac{cfg} represented by a given \ac{pcfg} directly corresponds to a \ac{cbn}.
This assumption results in a more convenient and often more efficient computation when answering interventional queries in a \ac{pcfg} but is not necessary to answer such queries in general.
Later on, we relax this assumption and show how interventional queries can still be answered on a lifted level.
In the next subsection, we apply the notion of an intervention to \acp{pcfg}.

\subsection{Interventions in Parametric Causal Factor Graphs}
An intervention in a \ac{pcfg} is defined analogously to an intervention in a \ac{cfg}.
The interventional distribution, in particular, is defined on a ground level again.

\begin{definition}[Interventional Distribution in a Parametric Causal Factor Graph] \label{def:pcfg_interventional_distribution_pcfg}
	Let $M = (\boldsymbol A \cup \boldsymbol G, \boldsymbol E, \boldsymbol \Phi)$ be a \ac{pcfg} with $\boldsymbol R = \gr(\boldsymbol A) = \{ R_1, \allowbreak \ldots, \allowbreak R_n \}$.
	Further, let $do(R'_1 = r'_1, \ldots, R'_k = r'_k)$ be an intervention on the \acp{rv} $R'_1, \ldots, R'_k \in \gr(\boldsymbol A)$.
	The interventional distribution under the intervention $do(R'_1 = r'_1, \ldots, R'_k = r'_k)$ is given by
	\begin{align*}
		P_M&(R_1 = r_1, \ldots, R_n = r_n \mid do(R'_1 = r'_1, \ldots, R'_k = r'_k)) \\
		&=
		\begin{cases}
			\prod\limits_{R_i \in \{ R_1, \ldots, R_n \} \setminus \{ R'_1, \ldots, R'_k \}} P(r_i \mid \pa_{\boldsymbol R}(\gr(M), R_i)) & \text{if } \forall j \in \{1, \ldots, k\}\colon r_j = r'_j \\
			0 & \text{otherwise},
		\end{cases}
	\end{align*}
	where $\pa_{\boldsymbol R}(\gr(M), R_i)$ denotes a projection of the assignment $(r_1, \ldots, r_n)$ to the parents $\Pa_{\boldsymbol R}(\gr(M), R_i)$ of $R_i$ in the ground model $\gr(M)$.
\end{definition}

Furthermore, we allow for interventions on \acp{prv}.
An intervention $do(A(\mathcal L)_{| C} = a)$ on a \ac{prv} $A$, where $a \in \range{A}$, can be seen as a joint intervention on all ground \acp{rv} in $\gr(A_{| C})$.
In other words, $do(A(\mathcal L)_{| C} = a)$ is equivalent to $do(R_1 = a, \ldots, R_k = a)$, where $\gr(A_{| C}) = \{ R_1, \ldots, R_k \}$.
From now on, we therefore also allow for interventional queries of the form $P(Q \mid do(A_1 = a_1, \ldots, A_k = a_k))$, where $A_1, \ldots, A_k$ are \acp{prv}.
Since any interventional query involving \acp{prv} can be reduced to an interventional query containing only parameterless \acp{rv}, we continue to work with our original definition of an interventional query (\cref{def:pcfg_interventional_query}).
To answer an interventional query in a \ac{pcfg}, we can again apply the truncated product formula.

\begin{definition}[Truncated Product Formula in a Parametric Causal Factor Graph] \label{def:pcfg_truncated_product_formula_pcfg}
	Let $M = (\boldsymbol A \cup \boldsymbol G, \boldsymbol E, \boldsymbol \Phi)$ be a \ac{pcfg} and let $\boldsymbol R = \gr(\boldsymbol A) = \{ Q, R_1, \ldots, R_{\ell}, R'_1, \ldots, R'_k \}$.
	The result of an interventional query $P(Q \mid do(R'_1 = r'_1, \ldots, R'_k = r'_k))$ is then given by
	\begin{align} \label{eq:pcfg_truncated_product_formula_pcfg}
	\begin{split}
		P(Q \mid do(R'_1 = r'_1, \ldots, R'_k = r'_k))
		= &\sum_{r_1 \in \range{R_1}} \ldots \sum_{r_{\ell} \in \range{R_{\ell}}} P(Q \mid \pa_{\boldsymbol R}(\gr(M), Q)) \\
		\cdot &\prod_{R_i \in \{ R_1, \ldots, R_{\ell} \}} P(r_i \mid \pa_{\boldsymbol R}(\gr(M), R_i)),
	\end{split}
	\end{align}
	where $\pa_{\boldsymbol R}(\gr(M), Q)$ and $\pa_{\boldsymbol R}(\gr(M), R_i)$ denote projections of the assignment $(q, \allowbreak r_1, \allowbreak \ldots, \allowbreak r_{\ell}, \allowbreak r'_1, \allowbreak \ldots, \allowbreak r'_k)$ to the parents of $Q$ and $R_i$ in the ground model $\gr(M)$, respectively.
\end{definition}

Even though both the interventional distribution and the truncated product formula are defined on a ground level, an interventional query can be answered without grounding the entire \ac{pcfg}.
In particular, \cref{eq:pcfg_truncated_product_formula_pcfg} gives us a formula that consists of a set of probabilistic queries, which can be answered on a lifted level.
Under the assumption of having a direct correspondence of \acp{pf} to conditional probability distributions, we can further simplify query answering in a \ac{pcfg} $M$.

\begin{proposition} \label{prop:pcfg_lci_intervention}
	Let $M = (\boldsymbol A \cup \boldsymbol G, \boldsymbol E, \boldsymbol \Phi)$ denote a \ac{pcfg} with each $\phi_j(R_{j_1}, \allowbreak \ldots, \allowbreak R_{j_z}) \in \gr(\boldsymbol \Phi)$ representing a conditional probability distribution $P(R_{j_z} \mid R_{j_1}, \allowbreak \ldots, \allowbreak R_{j_{z - 1}})$, let $\gr(\boldsymbol A) = \{ Q, \allowbreak R_1, \allowbreak \ldots, \allowbreak R_{\ell}, \allowbreak R'_1, \allowbreak \ldots, \allowbreak R'_k \}$, and let $P(Q \mid do(R'_1 = r'_1, \ldots, R'_k = r'_k))$ be an interventional query.
	Further, let $M' = (\boldsymbol A \cup \boldsymbol G, \allowbreak \boldsymbol E, \allowbreak \boldsymbol \Phi')$ be the \ac{pcfg} obtained by changing $\boldsymbol \Phi$ to $\boldsymbol \Phi'$ such that every factor $\phi_j(R_{j_1}, \allowbreak \ldots, \allowbreak R_{j_z}) \in \gr(\boldsymbol \Phi)$ that has a child $R_{j_z} = R'_z$ in $\{ R'_1, \allowbreak \ldots, \allowbreak R'_k \}$ is replaced by a factor $\phi'_j(R_{j_1}, \allowbreak \ldots, \allowbreak R_{j_z}) \in \gr(\boldsymbol \Phi')$ with
	\begin{align*}
		\phi'_j(R_{j_1} = r_{j_1}, \ldots, R_{j_z} = r_{j_z}) =
		\begin{cases}
			1 & \text{if } r_{j_z} = r'_z \\
			0 & \text{if } r_{j_z} \neq r'_z.
		\end{cases}
	\end{align*}
	All factors whose child is not in $\{ R'_1, \allowbreak \ldots, \allowbreak R'_k \}$ remain unchanged.
	The result of the interventional query $P_M(Q \mid do(R'_1 = r'_1, \ldots, R'_k = r'_k))$ in the original model $M$ is then given by the result of the probabilistic query $P_{M'}(Q \mid R'_1 = r'_1, \ldots, R'_k = r'_k)$ in the modified model $M'$.
\end{proposition}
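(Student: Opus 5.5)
We compare both sides on the ground level, working with $\gr(M)$ and $\gr(M')$ since both the interventional distribution (\cref{def:pcfg_interventional_distribution_pcfg}) and the semantics of a \ac{pcfg} (\cref{def:pcfg_pcfg}) are defined there.

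\textbf{Step 1: the joint potential of $M'$.} By assumption, every ground factor $\phi_j(R_{j_1}, \ldots, R_{j_z})$ equals $P(R_{j_z} \mid R_{j_1}, \ldots, R_{j_{z-1}})$. Each ground \ac{rv} has exactly one parent factor, and the factors correspond one-to-one to the conditional distributions in \cref{eq:pcfg_causal_markov_factorisation}. Hence
\[
\psi_{M'}(\boldsymbol R = \boldsymbol r) = \prod_{R_i \notin \{R'_1,\ldots,R'_k\}} P(r_i \mid \pa_{\boldsymbol R}(\gr(M), R_i)) \cdot \prod_{z=1}^{k} \mathbb{1}[r'_z\text{-component of } \boldsymbol r = r'_z].
\]
This product coincides, for every full assignment $\boldsymbol r$, with the case distinction in \cref{def:pcfg_interventional_distribution_pcfg}. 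Note that the graph structure is unchanged, so parent sets in $\gr(M')$ equal those in $\gr(M)$.

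\textbf{Step 2: the normaliser.} We show $Z_{M'} = 1$. Summing $\psi_{M'}$ over all assignments consistent with $R'_z = r'_z$ requires the truncated product of conditional distributions to sum to one. We prove this by eliminating variables in reverse topological order, which exists because $M$ is acyclic. At each step, the last non-intervened \ac{rv} $R_i$ appears only in its own factor $P(R_i \mid \pa(R_i))$, since it is a child of no remaining factor and nothing later depends on it. Its sum is therefore $1$. Intervened variables contribute a factor of $1$ at their fixed value and $0$ otherwise, so they are fixed with weight one. We conclude $P_{M'}(\boldsymbol R = \boldsymbol r) = P_M(\boldsymbol r \mid do(\ldots))$ for all $\boldsymbol r$.

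\textbf{Step 3: conditioning.} In $M'$, $P_{M'}(R'_1 = r'_1, \ldots, R'_k = r'_k) = 1$, because all mass lies on the intervened values. Conditioning on these events therefore changes nothing. Marginalising out $R_1, \ldots, R_\ell$ gives
\[
P_{M'}(Q \mid r'_1, \ldots, r'_k) = \sum_{r_1, \ldots, r_\ell} P_{M'}(q, r_1, \ldots, r_\ell, r'_1, \ldots, r'_k),
\]
which equals the truncated product formula \cref{eq:pcfg_truncated_product_formula_pcfg}, i.e.\ $P_M(Q \mid do(\ldots))$.

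\textbf{Main obstacle.} The delicate part is Step 1: arguing rigorously that the ground factors of $M$ are exactly the conditional distributions in \cref{eq:pcfg_causal_markov_factorisation}, each occurring once. This relies on the standing assumption that each \ac{prv} has exactly one parent \ac{pf} and that grounding produces no duplicate factors for the same child.

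The count-converted arguments, such as $\#_E[Com(E)]$, also need care. We would handle them by noting that grounding a factor with a \ac{crv} still yields a single factor per child ground \ac{rv}, whose value is a function of the parent assignment. Consequently the normalisation argument in Step 2 goes through unchanged.
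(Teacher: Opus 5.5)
Your proof is correct and follows essentially the same route as the paper: substitute the conditional-distribution property of the ground factors, observe that the replaced factors act as indicators on the intervened values, and show $Z' = 1$. The differences are minor. You prove $Z' = 1$ by an explicit reverse-topological elimination, where ``child of no remaining factor'' should read ``appears as a parent argument in no remaining factor''; the paper instead appeals to the modified factors still being valid conditional distributions. You also make explicit that $P_{M'}(R'_1 = r'_1, \ldots, R'_k = r'_k) = 1$, which the paper uses only implicitly when it writes the conditional query as $\frac{1}{Z'}$ times the unnormalised sum.
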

\begin{proof}
	For each ground factor $\phi_j(R_{j_1}, \allowbreak \ldots, \allowbreak R_{j_z}) \in \gr(\boldsymbol \Phi)$, it holds that
	\begin{align} \label{eq:pcfg_lci_intervention_proof}
		\phi_j(R_{j_1} = r_{j_1}, \allowbreak \ldots, \allowbreak R_{j_z} = r_{j_z})
		= P(R_{j_z} = r_{j_z} \mid R_{j_1} = r_{j_1}, \allowbreak \ldots, \allowbreak R_{j_{z - 1}} = r_{j_{z - 1}})
	\end{align}
	for all assignments $(r_{j_1}, \ldots, r_{j_z})$.
	Entering \cref{eq:pcfg_lci_intervention_proof} into the truncated product formula (\cref{eq:pcfg_truncated_product_formula_pcfg}) leaves us with
	\begin{align*}
		P_M(Q \mid do(R'_1 = r'_1, \ldots, R'_k = r'_k))
		= \sum_{r_1 \in \range{R_1}} \ldots \sum_{r_{\ell} \in \range{R_{\ell}}} \prod_{\phi_j \in \hat{\gr}(\boldsymbol \Phi)} \phi_j(\mathcal R_j = \boldsymbol r_j),
	\end{align*}
	where $\hat{\gr}(\boldsymbol \Phi) = \{ \phi_j(R_{j_1}, \ldots, R_{j_z}) \in \gr(\boldsymbol \Phi) \mid R_{j_z} \notin \{ R'_1, \allowbreak \ldots, \allowbreak R'_k \} \}$ denotes the set of groundings of $\boldsymbol \Phi$ whose child is not in $\{ R'_1, \allowbreak \ldots, \allowbreak R'_k \}$ and $\boldsymbol r_j$ denotes a projection of the assignment $(q, \allowbreak r_1, \allowbreak \ldots, \allowbreak r_{\ell}, \allowbreak r'_1, \allowbreak \ldots, \allowbreak r'_k)$ to the argument list $\mathcal R_j$ of $\phi_j$.
	Now, consider the modified model $M'$ and the query $P(Q \mid R'_1 = r'_1, \ldots, R'_k = r'_k)$.
	By entering \cref{eq:pcfg_lci_intervention_proof} into the definition of the full joint probability distribution $P_{M'}$ encoded by $M'$ (\cref{eq:pcfg_joint_distribution}), we end up with
	\begin{align*}
		P_{M'}(Q \mid R'_1 = r'_1, \ldots, R'_k = r'_k)
		= \frac{1}{Z'} \sum_{r_1 \in \range{R_1}} \ldots \sum_{r_{\ell} \in \range{R_{\ell}}} \prod_{\phi'_j \in \gr(\boldsymbol \Phi')} \phi'_j(\mathcal R_j = \boldsymbol r_j).
	\end{align*}
	Furthermore, as every factor $\phi'_j$ whose child is not in $\{ R'_1, \allowbreak \ldots, \allowbreak R'_k \}$ is left unchanged (i.e., $\phi'_j(R_{j_1}, \ldots, R_{j_z}) = \phi_j(R_{j_1}, \ldots, R_{j_z})$ if $R_{j_z} \notin \{ R'_1, \ldots, R'_k \}$), it holds that
	\begin{align*}
		\gr(\boldsymbol \Phi')
		&= \{ \phi'_j(R_{j_1}, \ldots, R_{j_z}) \in \gr(\boldsymbol \Phi') \mid R_{j_z} \notin \{ R'_1, \ldots, R'_k \} \} \\
		&\cup \{ \phi'_j(R_{j_1}, \ldots, R_{j_z}) \in \gr(\boldsymbol \Phi') \mid R_{j_z} \in \{ R'_1, \ldots, R'_k \} \} \\
		&= \hat{\gr}(\boldsymbol \Phi) \cup \{ \phi'_j(R_{j_1}, \ldots, R_{j_z}) \in \gr(\boldsymbol \Phi') \mid R_{j_z} \in \{ R'_1, \ldots, R'_k \} \}.
	\end{align*}
	Due to the modifications in $M'$, it holds that every factor $\phi'_j(R_{j_1}, \allowbreak \ldots, \allowbreak R_{j_z}) \in \{ \phi'_j(R_{j_1}, \allowbreak \ldots, \allowbreak R_{j_z}) \in \gr(\boldsymbol \Phi') \mid R_{j_z} \in \{ R'_1, \allowbreak \ldots, \allowbreak R'_k \} \}$ maps any assignment that assigns $R'_1 = r'_1, \ldots, R'_k = r'_k$ to the value one.
	Consequently, we end up with
	\begin{align*}
		P_{M'}(Q \mid R'_1 = r'_1, \ldots, R'_k = r'_k)
		&= \frac{1}{Z'} \sum_{r_1 \in \range{R_1}} \ldots \sum_{r_{\ell} \in \range{R_{\ell}}} \prod_{\phi'_j \in \gr(\boldsymbol \Phi')} \phi'_j(\mathcal R_j = \boldsymbol r_j) \\
		&= \frac{1}{Z'} \sum_{r_1 \in \range{R_1}} \ldots \sum_{r_{\ell} \in \range{R_{\ell}}} \prod_{\phi_j \in \hat{\gr}(\boldsymbol \Phi)} \phi_j(\mathcal R_j = \boldsymbol r_j).
	\end{align*}
	Thus, to conclude the proof of showing that $P_M(Q \mid do(R'_1 = r'_1, \allowbreak \ldots, \allowbreak R'_k = r'_k)) = P_{M'}(Q \mid R'_1 = r'_1, \allowbreak \ldots, \allowbreak R'_k = r'_k)$, it remains to be shown that $Z' = 1$.
	The definition of the normalisation constant $Z'$ (\cref{eq:pcfg_normalisation_constant}) is given by
	\begin{align*}
		Z' = \sum_{(q, r_1, \ldots, r_{\ell}, r'_1, \ldots, r'_k) \in \times_{R \in \gr(\boldsymbol A)} \range{R}} \prod_{\phi'_j \in \gr(\boldsymbol \Phi')} \phi'_j(\mathcal R_j = \boldsymbol r_j).
	\end{align*}
	After the modification, every factor $\phi'_j(R_{j_1}, \allowbreak \ldots, \allowbreak R_{j_z}) \in \gr(\boldsymbol \Phi')$ still represents a valid conditional probability distribution $P(R_{j_z} = r_{j_z} \mid R_{j_1} = r_{j_1}, \allowbreak \ldots, \allowbreak R_{j_{z - 1}} = r_{j_{z - 1}})$ because exactly one assignment of $R_{j_z}$ is mapped to one while all other assignments are mapped to zero, thus ensuring that the sum of all assignments is one.
	We can therefore again apply \cref{eq:pcfg_lci_intervention_proof} to the definition of the normalisation constant $Z'$ and obtain
	\begin{align*}
		Z' = \sum_{(q, r_1, \ldots, r_{\ell}, r'_1, \ldots, r'_k) \in \times_{R \in \gr(\boldsymbol A)} \range{R}} \prod_{R_i \in \gr(\boldsymbol A)} P(r_i \mid \pa_{\boldsymbol R}(\gr(M'), R_i)),
	\end{align*}
	where $r_i$ is the assigned value for $R_i$ in the assignment $(q, \allowbreak r_1, \allowbreak \ldots, \allowbreak r_{\ell}, \allowbreak r'_1, \allowbreak \ldots, \allowbreak r'_k)$ and $\pa_{\boldsymbol R}(\gr(M), R_i)$ is a projection of the assignment  $(q, \allowbreak r_1, \allowbreak \ldots, \allowbreak r_{\ell}, \allowbreak r'_1, \allowbreak \ldots, \allowbreak r'_k)$ to the parents of $R_i$ in the ground model $\gr(M')$.
	In other words, $Z'$ is a sum over all entries in the full joint probability distribution, and hence, it holds that $Z' = 1$ as all entries in a full joint probability distribution must sum up to one.
\end{proof}

An alternative way of verifying that the normalisation constant is equal to one if every factor $\phi_j(R_{j_1}, \allowbreak \ldots, \allowbreak R_{j_z})$ represents a conditional probability distribution $P(R_{j_z} \mid R_{j_1}, \allowbreak \ldots, \allowbreak R_{j_{z - 1}})$ is to make use of \cref{eq:pcfg_lci_intervention_proof} in the factorisation implied by the causal Markov property (\cref{eq:pcfg_causal_markov_factorisation}).
The resulting factorisation then is equivalent to the factorisation given in the definition of the semantics of a \ac{pcfg} (\cref{eq:pcfg_joint_distribution}) for $Z = 1$ and as both factorisations are valid, $Z$ has to be equal to one.

By modifying the original model, an interventional query $P(Q \mid do(R'_1 = r'_1, \allowbreak \ldots, \allowbreak R'_k = r'_k))$ can be answered by computing the result of \emph{a single} probabilistic query in the modified model (however, the modification of the model introduces some overhead).
Using the truncated product formula directly on the original model instead, we obtain a set of multiple probabilistic queries that have to be answered.
In both cases, a lifted inference algorithm such as \ac{lve} or the \ac{ljt} algorithm (which is specifically advantageous if a set of probabilistic queries needs to be answered as a result of the truncated product formula) can be applied to answer these queries on a lifted level.

Before we give a full algorithm to efficiently answer interventional queries in a \ac{pcfg}, we explain how the modification of the original model is done such that \cref{prop:pcfg_lci_intervention} can be applied.
In particular, as only specific ground factors that have an intervention variable as a child are changed, we have to \emph{split} \acp{pf} such that modified ground factors can be separated from the remaining ground factors.
Splitting a \ac{pf} in a \ac{pcfg} $M$ results in a modified \ac{pcfg} $M'$ entailing equivalent semantics as $M$~\citep{DeSalvoBraz2005a} such that $M'$ forms a valid model on which lifted inference algorithms (such as \ac{lve} and the \ac{ljt} algorithm) can be run.
The procedure of splitting a \ac{pf} $\phi(\mathcal A)_{| C}$ on a specific instance $A(l_1, \ldots, l_z) \in \gr(A(\mathcal L_A))$, where $A(\mathcal L_A) \in \mathcal A$ is a \ac{prv} in the argument list of $\phi(\mathcal A)_{| C}$, replaces $\phi(\mathcal A)_{| C}$ by two \acp{pf} $\phi(\mathcal A)_{| C_1}$ and $\phi(\mathcal A)_{| C_2}$.
The constraints $C_1$ and $C_2$ are chosen such that the inputs of $\phi(\mathcal A)_{| C_1}$ are restricted to all sequences under constraint $C$ that contain $A(l_1, \ldots, l_z)$ and the inputs of $\phi(\mathcal A)_{| C_2}$ are restricted to the remaining input sequences under constraint $C$.


We next present the \ac{lci} algorithm, which efficiently answers interventional queries in a \ac{pcfg} on a lifted level.
The basic idea of \ac{lci} is to split \acp{pf} based on the intervention variables such that the parent factors of intervention variables are detached from their respective groups and thus can be changed according to \cref{prop:pcfg_lci_intervention}.

\section{The Lifted Causal Inference Algorithm} \label{sec:pcfg_lci}
The \ac{lci} algorithm solves the problem of efficiently computing the effect of interventions in a \ac{pcfg}.
\Ac{lci} avoids to fully ground the \ac{pcfg} if possible to benefit from lifted inference.
For instance, consider again the \ac{pcfg} $M$ illustrated in \cref{fig:pcfg_example_pcfg} and assume we would like to compute the answer to the interventional query $P(Rev \mid do(Com(Bob) = \high))$ in $M$.
As the intervention $do(Com(Bob) = \high)$ fixes the value of $Com(Bob)$ to $\high$, we have to treat $Bob$ differently from $Alice$ and $Charlie$, whose competences remain unobserved.
In other words, not all employees are indistinguishable anymore.
Nevertheless, and this is the crucial point, we can still treat $Alice$ and $Charlie$ as indistinguishable when computing the result of the interventional query $P(Rev \mid do(Com(Bob) = \high))$.

\begin{algorithm}[t]
	\caption{Lifted Causal Inference}
	\label{alg:pcfg_lci}
	\alginput{A \ac{pcfg} $M = (\boldsymbol A \cup \boldsymbol G, \boldsymbol E, \boldsymbol \Phi)$, and an interventional query $P(Q \mid do(R'_1 = r'_1, \allowbreak \ldots, \allowbreak R'_k = r'_k))$ with $\{Q, R'_1, \ldots, R'_k\} \subseteq \gr(\boldsymbol A) = \{ Q, \allowbreak R_1, \allowbreak \ldots, \allowbreak R_{\ell}, \allowbreak R'_1, \allowbreak \ldots, \allowbreak R'_k \}$.} \\
	\algoutput{The result of the interventional query $P(Q \mid do(R'_1 = r'_1, \ldots, R'_k = r'_k))$.}
	\begin{algorithmic}[1]
		\State $M' \gets$ \acs{pcfg} obtained by splitting \acp{pf} in $M$ on each $R'_i \in \{R'_1, \ldots, R'_k\}$\; \label{line:pcfg_lci_pf_splitting}
		\ForEach{$R'_i \in \{R'_1, \ldots, R'_k\}$} \label{line:pcfg_lci_outer_loop}
			\ForEach{$\phi'_j(R_{j_1}, \ldots, R_{j_z}) \in \Pa(M', R'_i)$} \label{line:pcfg_lci_middle_loop}
				\ForEach{assignment $(r_{j_1}, \ldots, r_{j_z}) \in \range{R_{j_1}} \times \ldots \times \range{R_{j_z}}$} \label{line:pcfg_lci_inner_loop}
					\State Set $\phi'_j(r_{j_1}, \ldots, r_{j_z}) = \begin{cases} 1 & \text{if $(r_{j_1}, \ldots, r_{j_z})$ assigns $R'_i = r'_i$} \\ 0 & \text{if $(r_{j_1}, \ldots, r_{j_z})$ assigns $R'_i \neq r'_i$} \end{cases}$ \label{line:pcfg_lci_set_parent_factors}
				\EndForEach
			\EndForEach
		\EndForEach
		\State $D \gets$ Call \ac{lve} on $M'$ and $P(Q \mid R'_1 = r'_1, \ldots, R'_k = r'_k)$\; \label{line:pcfg_lci_call_lve}
		\State \Return $D$\; \label{line:pcfg_lci_return}
	\end{algorithmic}
\end{algorithm}

\subsection{Algorithm Description}
We next describe the \ac{lci} algorithm to compute the result of an interventional query $P(Q \mid do(R'_1 = r'_1, \allowbreak \ldots, \allowbreak R'_k = r'_k))$ in a \ac{pcfg} $M$ where every ground factor represents a conditional probability distribution.
\Cref{alg:pcfg_lci} displays the entire \ac{lci} algorithm, which we now discuss in detail.
First, in \cref{line:pcfg_lci_pf_splitting}, \ac{lci} splits the \acp{pf} in $M$ on the intervention variables $R'_i \in \{R'_1, \allowbreak \ldots, \allowbreak R'_k\}$ to obtain a modified \ac{pcfg} $M'$.
More specifically, \ac{lci} splits every \ac{pf} $\phi \in \boldsymbol \Phi$ for which there is an instance $\phi_j \in \gr(\phi)$ such that any intervention variable $R'_i \in \{R'_1, \allowbreak \ldots, \allowbreak R'_k\}$ is a child of $\phi_j$.
After the splitting procedure, the semantics of the model remains unchanged as the set of ground factors in $M'$ is still the same as the set of ground factors of the initial model $M$.
The only difference after splitting is that the ground factors are now arranged differently across the sets of ground instances.
Having completed the split of all respective \acp{pf}, \ac{lci} next changes the parent \acp{pf} of all intervention variables $R'_i \in \{R'_1, \allowbreak \ldots, \allowbreak R'_k\}$ to modify the underlying probability distribution encoded by $M'$ according to the semantics of the intervention $do(R'_1 = r'_1, \allowbreak \ldots, \allowbreak R'_k = r'_k)$ (\cref{line:pcfg_lci_outer_loop,line:pcfg_lci_middle_loop,line:pcfg_lci_inner_loop,line:pcfg_lci_set_parent_factors}).
\Ac{lci} changes the \acp{pf} in $M'$ according to \cref{prop:pcfg_lci_intervention} and thus, after the \acp{pf} have been changed, $M'$ encodes the interventional distribution under the intervention $do(R'_1 = r'_1, \allowbreak \ldots, \allowbreak R'_k = r'_k)$.
More specifically, as each intervention variable $R'_i$ is fixed to the value $r'_i$, all parent \acp{pf} $\phi'_j(R_{j_1}, \allowbreak \ldots, \allowbreak R_{j_z}) \in \Pa(M', R'_i)$ of $R'_i$ are altered such that all input sequences $(r_{j_1}, \ldots, r_{j_z})$ assigning $R'_i = r'_i$ map to the value one while all other input sequences map to zero.
Finally, \ac{lci} computes the result of the probabilistic query $P(Q \mid R'_1 = r'_1, \ldots, R'_k = r'_k)$ in the modified model $M'$, which is, according to \cref{prop:pcfg_lci_intervention}, equivalent to the result of the interventional query $P(Q \mid do(R'_1 = r'_1, \allowbreak \ldots, \allowbreak R'_k = r'_k))$ in the original model $M$.
To compute the result of $P(Q \mid R'_1 = r'_1, \ldots, R'_k = r'_k)$ in $M'$, \ac{lci} calls \ac{lve} on $P(Q \mid R'_1 = r'_1, \ldots, R'_k = r'_k)$ and $M'$ and then returns the result computed by \ac{lve} (\cref{line:pcfg_lci_call_lve,line:pcfg_lci_return}).
During this step, \ac{lve} (which originally operates on a \ac{pfg}) ignores the edge directions in $M'$.
Since the semantics of the underlying full joint probability distribution encoded by a \ac{pfg} are defined identically to the semantics of the probability distribution encoded by a \ac{pcfg}, \ac{lve} can also be applied to compute the result of probabilistic queries in a \ac{pcfg} (alternatively, a different lifted inference algorithm that works on a \ac{pfg} could be called as well).

\begin{example}[Lifted Causal Inference]
	Look at the \ac{pcfg} $M$ shown in \cref{fig:pcfg_example_pcfg} and consider the interventional query $P(Rev \mid do(Com(Bob) = \high))$.
	In accordance with the previous examples, we assume that $\domain{E} = \{ Alice, \allowbreak Bob, \allowbreak Charlie \}$.
	Since $Com(Bob)$ is a particular instance of $Com(E)$, we have to split the \ac{pf} $\phi_1(Com(E))_{| \top}$, which is a parent \ac{pf} of $Com(E)$.
	\Cref{fig:example_lci} shows the modified \ac{pcfg} $M'$ obtained after splitting $\phi_1(Com(E))_{| \top}$ on $Com(Bob)$.
	In $M'$, $\phi_1(Com(E))_{| \top}$ has been replaced by $\phi_1(Com(E))_{| C'}$ (the corresponding \ac{pf} node is $g'_1$) and $\phi_1(Com(E))_{| C''}$ (the corresponding \ac{pf} node is $g''_1$), where $C' = (E, \{ Bob \})$ and $C'' = (E, \{ Alice, \allowbreak Charlie \})$.
	To incorporate the semantics of the intervention $do(Com(Bob)) = \high$, \ac{lci} next modifies the parent \acp{pf} of $Com(Bob)$, i.e., \ac{lci} modifies $\phi_1(Com(E))_{| C'}$ in this example (since $\phi_1(Com(E))_{| C''}$ is constrained to $Alice$ and $Charlie$, $\phi_1(Com(E))_{| C''}$ is not a parent of $Com(Bob)$ and hence not changed).
	More specifically, $\phi_1(Com(E) = \high)_{| C'}$ is set to one and $\phi_1(Com(E) = \low)_{| C'}$ is set to zero.
	Finally, \ac{lci} runs \ac{lve} to compute $P(Rev \mid Com(Bob) = \high)$ in $M'$, which is equivalent to computing $P(Rev \mid do(Com(Bob) = \high))$ in the original model $M$.
\end{example}

\begin{figure}[t]
	\centering
	\begin{tikzpicture}[rv/.append style={minimum height=2.2em, minimum width=5.6em}]
		\node[rv, inner sep = 1.8pt] (C) {$Com(E)$};

		\pfs{above left}{C}{1.5em and -0.8em}{180}{$g'_1$}{G1a}{G1}{G1b}
		\pfs{above right}{C}{1.5em and -0.8em}{0}{$g''_1$}{G1pa}{G1p}{G1pb}
		\factor{below left}{C}{1.0em and 0.8em}{180}{$g_2$}{G2}
		\pfs{below right}{C}{1.0em and 0.8em}{0}{$g_3$}{G3a}{G3}{G3b}

		\node[rv, below = 1.0em of G2] (R) {$Rev$};
		\node[rv, inner sep = 1.8pt, below = 1.0em of G3] (S) {$Sal(E)$};

		\node[coordinate] (G1toC) at ($(G1)-(0,1.7em)$) {};
		\node[coordinate] (G1ptoC) at ($(G1p)-(0,1.7em)$) {};

		\begin{pgfonlayer}{bg}
			\draw[arc] (G1) -- (G1toC);
			\draw[arc] (G1p) -- (G1ptoC);
			\draw (C) -- (G2);
			\draw (C) -- (G3);
			\draw[arc] (G2) -- (R);
			\draw (R) -- (G3);
			\draw[arc] (G3) -- (S);
		\end{pgfonlayer}
	\end{tikzpicture}
	\caption[A visualisation of the modified \ac{pcfg} obtained after altering the \ac{pcfg} shown in \cref{fig:pcfg_example_pcfg}.]{A visualisation of the modified \ac{pcfg} obtained after altering the \ac{pcfg} shown in \cref{fig:pcfg_example_pcfg} by splitting $\phi_1(Com(E))_{| \top}$ on $Com(Bob)$.}
	\label{fig:example_lci}
\end{figure}

\Ac{lci} is able to handle both interventions on a single (ground) \ac{rv} as well as interventions on a conjunction of multiple \acp{rv} efficiently.
In particular, when intervening on multiple indistinguishable \acp{rv} at the same time, \ac{lci} is able to treat those \acp{rv} as a group even after the intervention.
For instance, assume that in our running example, we would like to train multiple employees simultaneously, as a training program is mostly offered not only for a single employee but for a group of employees (here, training an employee $e \in \domain{E}$ corresponds to the intervention $do(Com(e) = \high)$).
Then, it is not necessary to split all trained employees into separate groups but instead it is sufficient to differentiate between trained employees and all remaining employees.
Formally, the intervention $do(R'_1 = r'_1, \allowbreak \ldots, \allowbreak R'_k = r'_k)$ on an arbitrary set of \acp{rv} $\{ R'_1, \allowbreak \ldots, \allowbreak R'_k \}$ can thus efficiently be handled by splitting the \acp{pf} in $M$ such that all $R'_i$ that are represented by the same \ac{prv} $A$ and that are set to the same value $r'_i \in \range{R'_i}$ remain grouped.
Specifically, \ac{lci} needs just a single split on the \acp{pf} per group and thus avoids manipulating the parents of each individual \ac{rv} separately.
In contrast, in a propositional model, every object has to be treated individually and therefore the parents for each \ac{rv} need to be manipulated separately.
Given the way we specified the semantics of an intervention in a \ac{pcfg}, it immediately follows that \ac{lci} correctly computes the effect of interventions.

\begin{proposition}
	The result computed by \ac{lci} (\cref{alg:pcfg_lci}) is the correct answer to the interventional query $P(Q \mid do(R'_1 = r'_1, \allowbreak \ldots, \allowbreak R'_k = r'_k))$ in the given \ac{pcfg} $M$.
\end{proposition}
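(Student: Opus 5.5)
The plan is to reduce correctness of \ac{lci} to \cref{prop:pcfg_lci_intervention} together with two standard facts: splitting preserves semantics, and \ac{lve} is exact.

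First, I will argue that the model $M'$ obtained in \cref{line:pcfg_lci_pf_splitting} encodes the same full joint distribution as $M$. Splitting a \ac{pf} $\phi(\mathcal A)_{| C}$ into $\phi(\mathcal A)_{| C_1}$ and $\phi(\mathcal A)_{| C_2}$ partitions $\gr(\phi(\mathcal A)_{| C})$ into $\gr(\phi(\mathcal A)_{| C_1}) \cup \gr(\phi(\mathcal A)_{| C_2})$ without changing any ground factor. Hence $\gr(\boldsymbol \Phi)$ is the same multiset before and after splitting, so $\psi_M$, $Z$ and $P_M$ are unchanged~\citep{DeSalvoBraz2005a}. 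The graph structure also survives on the ground level: edge directions are inherited, so $\gr(M)$ and $\gr(M')$ coincide as \acp{cfg}. In particular, every ground factor still represents the same conditional distribution $P(R_{j_z} \mid R_{j_1}, \ldots, R_{j_{z-1}})$.

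Second, I will show that after splitting, each \ac{pf} in $M'$ either has all its groundings with child in $\{R'_1, \ldots, R'_k\}$ (in fact, intervened to the same value), or none of them. This is the point that needs care, since \cref{line:pcfg_lci_set_parent_factors} modifies an entire \ac{pf} rather than individual ground factors. The claim follows because the splits are performed on the intervention variables, grouped by \ac{prv} and assigned value. Consequently, the loops in \cref{line:pcfg_lci_outer_loop,line:pcfg_lci_middle_loop,line:pcfg_lci_inner_loop,line:pcfg_lci_set_parent_factors} replace exactly those ground factors whose child is some $R'_z$ by the indicator factor $\phi'_j$ with value $1$ iff $r_{j_z} = r'_z$. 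All other ground factors stay untouched. The resulting model is precisely the $M'$ of \cref{prop:pcfg_lci_intervention}, applied to $\gr(M')=\gr(M)$.

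Third, I will invoke \cref{prop:pcfg_lci_intervention} to get $P_M(Q \mid do(R'_1 = r'_1, \ldots, R'_k = r'_k)) = P_{M'}(Q \mid R'_1 = r'_1, \ldots, R'_k = r'_k)$. Then I will use that \ac{lve} answers probabilistic queries exactly on any \ac{pfg}. Its semantics ignore edge directions and coincide with \cref{eq:pcfg_joint_distribution}, so $D$ in \cref{line:pcfg_lci_call_lve} equals this conditional probability. One side condition is that the evidence has non-zero probability in $M'$. This holds because the modified factors put mass one on the intervened values and the normaliser is one, as shown in the proof of \cref{prop:pcfg_lci_intervention}; I will note it briefly. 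The main obstacle I expect is stating the splitting invariant of the second step rigorously, namely that every modified \ac{pf} has only intervened children with a common value. The rest is direct composition of earlier results.
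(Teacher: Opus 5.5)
Your proposal is correct and follows the paper's route: the paper's proof is a one-line reduction, saying that \ac{lci} carries out the model modification of \cref{prop:pcfg_lci_intervention} and then answers the resulting conditional query in $M'$. Your additional steps make explicit what the paper leaves implicit, namely that splitting leaves $\gr(M)$ unchanged, that each split \ac{pf} has only intervened children sharing one value (so the lifted overwrite in \cref{line:pcfg_lci_set_parent_factors} matches the ground-level replacement), that \ac{lve} is exact, and that the evidence has positive probability in $M'$.
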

\begin{proof}
	As \ac{lci} directly applies \cref{prop:pcfg_lci_intervention} by setting the parent factors of all intervention variables accordingly, the result of the interventional query $P(Q \mid do(R'_1 = r'_1, \allowbreak \ldots, \allowbreak R'_k = r'_k))$ in the original model $M$ is equivalent to the result of the probabilistic query $P(Q \mid R'_1 = r'_1, \allowbreak \ldots, \allowbreak R'_k = r'_k)$ in the modified model $M'$ computed by \ac{lci}.
\end{proof}

Moreover, by calling \ac{lve}, \ac{lci} allows for tractable probabilistic inference problems with respect to domain sizes of \acp{lv}.
Thus, \ac{lci} runs in polynomial time with respect to domain sizes of \acp{lv} for all \acp{pcfg} belonging to the class of domain-liftable models.
The class of domain-liftable models includes all \acp{pcfg} containing only \acp{pf} with at most two \acp{lv} and all \acp{pcfg} containing only \acp{prv} having at most one \ac{lv}~\citep{VanDenBroeck2011a}.

\begin{proposition}
	\Ac{lci} (\cref{alg:pcfg_lci}) allows for tractable probabilistic inference problems with respect to domain sizes of \acp{lv} for the class of domain-liftable models.
\end{proposition}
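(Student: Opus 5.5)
The plan is to reduce the claim to the known domain-liftability of \ac{lve} by showing two things. First, the preprocessing in \cref{line:pcfg_lci_pf_splitting,line:pcfg_lci_outer_loop,line:pcfg_lci_middle_loop,line:pcfg_lci_inner_loop,line:pcfg_lci_set_parent_factors} runs in time polynomial in the domain sizes. Second, the model $M'$ handed to \ac{lve} in \cref{line:pcfg_lci_call_lve}, together with the evidence $R'_1 = r'_1, \ldots, R'_k = r'_k$, still belongs to a class for which lifted inference is polynomial in domain sizes. By \cref{prop:pcfg_lci_intervention} and the correctness proposition above, the output of \ac{lve} on $M'$ is the correct answer. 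Hence the total running time is the sum of the preprocessing cost and the cost of \ac{lve} on $M'$, and it suffices to bound both.

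For the preprocessing, I would count splits. Each intervention variable $R'_i$ is a ground instance of some \ac{prv}, and it has exactly one parent factor in $\gr(M)$ by the standing assumption that every \ac{prv} has exactly one parent \ac{pf} node. Splitting a \ac{pf} on a ground instance replaces it by at most two \acp{pf}, and grouped interventions (same \ac{prv}, same value) need only one split per group. Hence $M'$ has at most $\abs{\boldsymbol \Phi} + k$ \acp{pf}, and $k$ is bounded by $\abs{\gr(\boldsymbol A)}$, which is polynomial in the domain sizes for a fixed model. Computing the new constraints is polynomial in the constraint sizes. The loops over assignments in \cref{line:pcfg_lci_inner_loop} iterate over the potential table of a fixed \ac{pf}, whose size does not depend on domain sizes. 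Counting-converted arguments contribute histograms, polynomially many in the domain size.

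The key step is to argue that $M'$ with evidence remains domain-liftable. Splitting only changes constraints, not the argument lists of \acp{pf}. So every \ac{pf} in $M'$ has the same \acp{lv} and every \ac{prv} has the same parameters as in $M$. Modifying potentials does not alter structure either. Therefore the syntactic criteria cited from \citet{VanDenBroeck2011a} (at most two \acp{lv} per \ac{pf}, or at most one \ac{lv} per \ac{prv}) carry over from $M$ to $M'$ unchanged. It then remains to handle the evidence. I expect this to be the main obstacle, since the result on domain-liftability concerns models and not arbitrary evidence. My approach would be to absorb unary evidence on ground atoms into the model by further splitting and conditioning, again without adding \acp{lv}. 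Unary evidence is known to preserve liftability for these fragments (shattering on evidence produces polynomially many groups). Note that the evidence is exactly on the atoms that were just split off, so no additional shattering beyond the one already done in \cref{line:pcfg_lci_pf_splitting} is needed.

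Combining these pieces gives the result. The preprocessing is polynomial, and $M'$ with evidence lies in the domain-liftable class, so \ac{lve} answers the query in time polynomial in the domain sizes. Consequently \ac{lci} is polynomial in the domain sizes of \acp{lv} for every domain-liftable \ac{pcfg}.
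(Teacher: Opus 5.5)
Your proof follows the paper's argument. Splitting only changes constraints, and the loops only overwrite potential-table entries, so $M'$ has exactly the logvars of $M$ and stays domain-liftable. The preprocessing adds no asymptotic cost, and the final LVE call is then tractable.

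You go beyond the paper only in treating the evidence $R'_1 = r'_1, \ldots, R'_k = r'_k$, and two of your supporting remarks there are wrong.

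\emph{Extra shattering is needed.} \Cref{line:pcfg_lci_pf_splitting} splits only the parfactors in which an intervention variable is the \emph{child}. The intervened atoms also occur as non-child arguments elsewhere. In the running example, $do(Com(Bob) = \high)$ splits only $\phi_1$. Yet $\phi_3(Com(E), Rev, Sal(E))$ and the counting randvar $\#_E[Com(E)]$ in $\phi_2$ must still be split on $Bob$ when LVE absorbs the evidence.

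\emph{The evidence need not be unary.} The two-logvar fragment admits PRVs with two logvars. Intervening on a ground instance of such a PRV gives evidence on a binary atom, which the fact you quote about unary evidence does not cover.

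Neither slip breaks the proof. The extra shattering is polynomial, and both cases are covered by what the paper invokes: LVE is complete for the domain-liftable class for all combinations of queries, evidence, and models in it (the completeness result of Taghipour et al.\ cited in the paper). Citing that result directly, as the paper does, should replace your evidence paragraph.
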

\begin{proof}
	If the input \ac{pcfg} $M$ for \ac{lci} belongs to the class of domain-liftable models, so does the modified model $M'$ obtained after splitting the \acp{pf} in $M$ and changing the parent factors of the intervention variables because newly introduced \acp{pf} contain identical \acp{lv} as the original \acp{pf} that were split.
	Consequently, the input for \ac{lve}, which is given by $M'$, belongs to the class of domain-liftable models and as \ac{lve} is complete for this model class~\citep{Taghipour2013b} (that is, \ac{lve} runs in polynomial time with respect to the domain sizes of the \acp{lv} in its input model for all combinations of queries, evidence, and models in this class), the call of \ac{lve} allows for tractable probabilistic inference problems with respect to domain sizes of \acp{lv} provided that $M$ belongs to the class of domain-liftable models.
	Furthermore, both the splitting procedure in \cref{line:pcfg_lci_pf_splitting} and the loops in \cref{line:pcfg_lci_outer_loop,line:pcfg_lci_middle_loop,line:pcfg_lci_inner_loop,line:pcfg_lci_set_parent_factors} of \cref{alg:pcfg_lci} do not influence the overall time complexity of \ac{lci} (as the loops iterate over potential tables that must be considered anyway during inference).
	Thus, \ac{lci} allows for tractable probabilistic inference problems with respect to domain sizes of \acp{lv} for the class of domain-liftable models.
\end{proof}

To summarise, \ac{lci} is a simple, yet effective algorithm to perform lifted causal inference.
\Ac{lci} can also handle queries with multiple query variables, provided that the lifted inference algorithm which is called in \cref{line:pcfg_lci_call_lve} can handle multiple query variables as well.
Next, we take a look at our experiments, which highlight the practical performance of \ac{lci} to compute the effect of interventions in a \ac{pcfg} on a lifted level.

\subsection{Experiments} \label{sec:pcfg_eval}
In this subsection, we evaluate the runtimes needed to compute the result of interventional queries in \acp{cbn}, \acp{cfg}, and \acp{pcfg}.
For our experiments, we use a slightly modified version of the \ac{pcfg} $M$ given in \cref{fig:pcfg_example_pcfg}, whose ground \ac{cfg} directly corresponds to a \ac{cbn}.
In addition to the \ac{pcfg} $M$, we also investigate runtimes for causal inference in the \ac{cfg} obtained by grounding $M$, and its equivalent \ac{cbn}.
To obtain the equivalent \ac{cbn}, we apply the transformation from directed \ac{fg} to \ac{cbn} proposed by \citet{Frey2003a}.
Hence, all three models, the \ac{pcfg}, the \ac{cfg}, and the \ac{cbn}, encode the same underlying full joint probability distribution.
As a remark, we note that the \ac{pcfg} used in our experiments to demonstrate the practical efficiency of lifted causal inference is rather small with four \acp{pf} and \acp{prv}, respectively, and the gain we obtain from lifted inference might further increase for models consisting of more \acp{prv}.
Our experiments can thus be seen as a proof of concept demonstrating the practical efficiency of lifted causal inference and more extensive experiments on \acp{pcfg} with various graph structures and on \acp{pcfg} with more \acp{prv} are left for future work.

\begin{figure}[t]
	\centering
	\begin{tikzpicture}[x=1pt,y=1pt]
		\definecolor{fillColor}{RGB}{255,255,255}
		\path[use as bounding box,fill=fillColor,fill opacity=0.00] (0,0) rectangle (260.17,115.63);
		\begin{scope}
		\path[clip] (  0.00,  0.00) rectangle (260.17,115.63);
		\definecolor{drawColor}{RGB}{255,255,255}
		\definecolor{fillColor}{RGB}{255,255,255}
		
		\path[draw=drawColor,line width= 0.6pt,line join=round,line cap=round,fill=fillColor] (  0.00,  0.00) rectangle (260.17,115.63);
		\end{scope}
		\begin{scope}
		\path[clip] ( 44.91, 30.69) rectangle (254.67,110.13);
		\definecolor{fillColor}{RGB}{255,255,255}
		
		\path[fill=fillColor] ( 44.91, 30.69) rectangle (254.67,110.13);
		\definecolor{drawColor}{RGB}{247,192,26}
		
		\path[draw=drawColor,line width= 0.6pt,line join=round] ( 57.25, 45.01) --
			( 77.81, 44.82) --
			( 98.38, 45.23) --
			(118.94, 45.89) --
			(139.51, 46.75) --
			(160.07, 48.72) --
			(180.64, 52.98) --
			(201.20, 55.97) --
			(221.77, 57.79) --
			(242.33, 62.08);
		\definecolor{drawColor}{RGB}{37,122,164}
		
		\path[draw=drawColor,line width= 0.6pt,dash pattern=on 2pt off 2pt ,line join=round] ( 57.25, 34.30) --
			( 77.81, 35.32) --
			( 98.38, 40.59) --
			(118.94, 47.19) --
			(139.51, 54.22) --
			(160.07, 62.81) --
			(180.64, 72.52) --
			(201.20, 83.15) --
			(221.77, 94.43) --
			(242.33,106.52);
		\definecolor{drawColor}{RGB}{78,155,133}
		
		\path[draw=drawColor,line width= 0.6pt,dash pattern=on 4pt off 2pt ,line join=round] ( 57.25, 43.60) --
			( 77.81, 45.68) --
			( 98.38, 48.57) --
			(118.94, 52.01) --
			(139.51, 56.57) --
			(160.07, 61.06) --
			(180.64, 66.57) --
			(201.20, 74.25) --
			(221.77, 84.47) --
			(242.33, 96.09);
		\definecolor{fillColor}{RGB}{37,122,164}
		
		\path[fill=fillColor] ( 57.25, 37.35) --
			( 59.89, 32.77) --
			( 54.61, 32.77) --
			cycle;
		
		\path[fill=fillColor] ( 77.81, 38.37) --
			( 80.46, 33.79) --
			( 75.17, 33.79) --
			cycle;
		
		\path[fill=fillColor] ( 98.38, 43.64) --
			(101.02, 39.06) --
			( 95.74, 39.06) --
			cycle;
		
		\path[fill=fillColor] (118.94, 50.24) --
			(121.59, 45.66) --
			(116.30, 45.66) --
			cycle;
		
		\path[fill=fillColor] (139.51, 57.28) --
			(142.15, 52.70) --
			(136.87, 52.70) --
			cycle;
		
		\path[fill=fillColor] (160.07, 65.86) --
			(162.72, 61.29) --
			(157.43, 61.29) --
			cycle;
		
		\path[fill=fillColor] (180.64, 75.57) --
			(183.28, 70.99) --
			(178.00, 70.99) --
			cycle;
		
		\path[fill=fillColor] (201.20, 86.20) --
			(203.85, 81.62) --
			(198.56, 81.62) --
			cycle;
		
		\path[fill=fillColor] (221.77, 97.48) --
			(224.41, 92.90) --
			(219.13, 92.90) --
			cycle;
		
		\path[fill=fillColor] (242.33,109.57) --
			(244.98,105.00) --
			(239.69,105.00) --
			cycle;
		\definecolor{fillColor}{RGB}{78,155,133}
		
		\path[fill=fillColor] ( 55.29, 41.64) --
			( 59.21, 41.64) --
			( 59.21, 45.57) --
			( 55.29, 45.57) --
			cycle;
		\definecolor{fillColor}{RGB}{247,192,26}
		
		\path[fill=fillColor] ( 57.25, 45.01) circle (  1.96);
		\definecolor{fillColor}{RGB}{78,155,133}
		
		\path[fill=fillColor] ( 75.85, 43.71) --
			( 79.78, 43.71) --
			( 79.78, 47.64) --
			( 75.85, 47.64) --
			cycle;
		\definecolor{fillColor}{RGB}{247,192,26}
		
		\path[fill=fillColor] ( 77.81, 44.82) circle (  1.96);
		\definecolor{fillColor}{RGB}{78,155,133}
		
		\path[fill=fillColor] ( 96.42, 46.60) --
			(100.34, 46.60) --
			(100.34, 50.53) --
			( 96.42, 50.53) --
			cycle;
		\definecolor{fillColor}{RGB}{247,192,26}
		
		\path[fill=fillColor] ( 98.38, 45.23) circle (  1.96);
		\definecolor{fillColor}{RGB}{78,155,133}
		
		\path[fill=fillColor] (116.98, 50.05) --
			(120.91, 50.05) --
			(120.91, 53.97) --
			(116.98, 53.97) --
			cycle;
		\definecolor{fillColor}{RGB}{247,192,26}
		
		\path[fill=fillColor] (118.94, 45.89) circle (  1.96);
		\definecolor{fillColor}{RGB}{78,155,133}
		
		\path[fill=fillColor] (137.55, 54.61) --
			(141.47, 54.61) --
			(141.47, 58.53) --
			(137.55, 58.53) --
			cycle;
		\definecolor{fillColor}{RGB}{247,192,26}
		
		\path[fill=fillColor] (139.51, 46.75) circle (  1.96);
		\definecolor{fillColor}{RGB}{78,155,133}
		
		\path[fill=fillColor] (158.11, 59.10) --
			(162.04, 59.10) --
			(162.04, 63.02) --
			(158.11, 63.02) --
			cycle;
		\definecolor{fillColor}{RGB}{247,192,26}
		
		\path[fill=fillColor] (160.07, 48.72) circle (  1.96);
		\definecolor{fillColor}{RGB}{78,155,133}
		
		\path[fill=fillColor] (178.68, 64.61) --
			(182.60, 64.61) --
			(182.60, 68.54) --
			(178.68, 68.54) --
			cycle;
		\definecolor{fillColor}{RGB}{247,192,26}
		
		\path[fill=fillColor] (180.64, 52.98) circle (  1.96);
		\definecolor{fillColor}{RGB}{78,155,133}
		
		\path[fill=fillColor] (199.24, 72.29) --
			(203.17, 72.29) --
			(203.17, 76.21) --
			(199.24, 76.21) --
			cycle;
		\definecolor{fillColor}{RGB}{247,192,26}
		
		\path[fill=fillColor] (201.20, 55.97) circle (  1.96);
		\definecolor{fillColor}{RGB}{78,155,133}
		
		\path[fill=fillColor] (219.81, 82.51) --
			(223.73, 82.51) --
			(223.73, 86.43) --
			(219.81, 86.43) --
			cycle;
		\definecolor{fillColor}{RGB}{247,192,26}
		
		\path[fill=fillColor] (221.77, 57.79) circle (  1.96);
		\definecolor{fillColor}{RGB}{78,155,133}
		
		\path[fill=fillColor] (240.37, 94.13) --
			(244.30, 94.13) --
			(244.30, 98.06) --
			(240.37, 98.06) --
			cycle;
		\definecolor{fillColor}{RGB}{247,192,26}
		
		\path[fill=fillColor] (242.33, 62.08) circle (  1.96);
		\end{scope}
		\begin{scope}
		\path[clip] (  0.00,  0.00) rectangle (260.17,115.63);
		\definecolor{drawColor}{RGB}{0,0,0}
		
		\path[draw=drawColor,line width= 0.6pt,line join=round] ( 44.91, 30.69) --
			( 44.91,110.13);
		
		\path[draw=drawColor,line width= 0.6pt,line join=round] ( 46.33,107.67) --
			( 44.91,110.13) --
			( 43.49,107.67);
		\end{scope}
		\begin{scope}
		\path[clip] (  0.00,  0.00) rectangle (260.17,115.63);
		\definecolor{drawColor}{gray}{0.30}
		
		\node[text=drawColor,anchor=base east,inner sep=0pt, outer sep=0pt, scale=  0.88] at ( 39.96, 42.93) {10};
		
		\node[text=drawColor,anchor=base east,inner sep=0pt, outer sep=0pt, scale=  0.88] at ( 39.96, 63.01) {100};
		
		\node[text=drawColor,anchor=base east,inner sep=0pt, outer sep=0pt, scale=  0.88] at ( 39.96, 83.08) {1000};
		
		\node[text=drawColor,anchor=base east,inner sep=0pt, outer sep=0pt, scale=  0.88] at ( 39.96,103.16) {10000};
		\end{scope}
		\begin{scope}
		\path[clip] (  0.00,  0.00) rectangle (260.17,115.63);
		\definecolor{drawColor}{gray}{0.20}
		
		\path[draw=drawColor,line width= 0.6pt,line join=round] ( 42.16, 45.96) --
			( 44.91, 45.96);
		
		\path[draw=drawColor,line width= 0.6pt,line join=round] ( 42.16, 66.04) --
			( 44.91, 66.04);
		
		\path[draw=drawColor,line width= 0.6pt,line join=round] ( 42.16, 86.11) --
			( 44.91, 86.11);
		
		\path[draw=drawColor,line width= 0.6pt,line join=round] ( 42.16,106.19) --
			( 44.91,106.19);
		\end{scope}
		\begin{scope}
		\path[clip] (  0.00,  0.00) rectangle (260.17,115.63);
		\definecolor{drawColor}{RGB}{0,0,0}
		
		\path[draw=drawColor,line width= 0.6pt,line join=round] ( 44.91, 30.69) --
			(254.67, 30.69);
		
		\path[draw=drawColor,line width= 0.6pt,line join=round] (252.21, 29.26) --
			(254.67, 30.69) --
			(252.21, 32.11);
		\end{scope}
		\begin{scope}
		\path[clip] (  0.00,  0.00) rectangle (260.17,115.63);
		\definecolor{drawColor}{gray}{0.20}
		
		\path[draw=drawColor,line width= 0.6pt,line join=round] ( 57.25, 27.94) --
			( 57.25, 30.69);
		
		\path[draw=drawColor,line width= 0.6pt,line join=round] ( 77.81, 27.94) --
			( 77.81, 30.69);
		
		\path[draw=drawColor,line width= 0.6pt,line join=round] ( 98.38, 27.94) --
			( 98.38, 30.69);
		
		\path[draw=drawColor,line width= 0.6pt,line join=round] (118.94, 27.94) --
			(118.94, 30.69);
		
		\path[draw=drawColor,line width= 0.6pt,line join=round] (139.51, 27.94) --
			(139.51, 30.69);
		
		\path[draw=drawColor,line width= 0.6pt,line join=round] (160.07, 27.94) --
			(160.07, 30.69);
		
		\path[draw=drawColor,line width= 0.6pt,line join=round] (180.64, 27.94) --
			(180.64, 30.69);
		
		\path[draw=drawColor,line width= 0.6pt,line join=round] (201.20, 27.94) --
			(201.20, 30.69);
		
		\path[draw=drawColor,line width= 0.6pt,line join=round] (221.77, 27.94) --
			(221.77, 30.69);
		
		\path[draw=drawColor,line width= 0.6pt,line join=round] (242.33, 27.94) --
			(242.33, 30.69);
		\end{scope}
		\begin{scope}
		\path[clip] (  0.00,  0.00) rectangle (260.17,115.63);
		\definecolor{drawColor}{gray}{0.30}
		
		\node[text=drawColor,anchor=base,inner sep=0pt, outer sep=0pt, scale=  0.88] at ( 57.25, 19.68) {8};
		
		\node[text=drawColor,anchor=base,inner sep=0pt, outer sep=0pt, scale=  0.88] at ( 77.81, 19.68) {16};
		
		\node[text=drawColor,anchor=base,inner sep=0pt, outer sep=0pt, scale=  0.88] at ( 98.38, 19.68) {32};
		
		\node[text=drawColor,anchor=base,inner sep=0pt, outer sep=0pt, scale=  0.88] at (118.94, 19.68) {64};
		
		\node[text=drawColor,anchor=base,inner sep=0pt, outer sep=0pt, scale=  0.88] at (139.51, 19.68) {128};
		
		\node[text=drawColor,anchor=base,inner sep=0pt, outer sep=0pt, scale=  0.88] at (160.07, 19.68) {256};
		
		\node[text=drawColor,anchor=base,inner sep=0pt, outer sep=0pt, scale=  0.88] at (180.64, 19.68) {512};
		
		\node[text=drawColor,anchor=base,inner sep=0pt, outer sep=0pt, scale=  0.88] at (201.20, 19.68) {1024};
		
		\node[text=drawColor,anchor=base,inner sep=0pt, outer sep=0pt, scale=  0.88] at (221.77, 19.68) {2048};
		
		\node[text=drawColor,anchor=base,inner sep=0pt, outer sep=0pt, scale=  0.88] at (242.33, 19.68) {4096};
		\end{scope}
		\begin{scope}
		\path[clip] (  0.00,  0.00) rectangle (260.17,115.63);
		\definecolor{drawColor}{RGB}{0,0,0}
		
		\node[text=drawColor,anchor=base,inner sep=0pt, outer sep=0pt, scale=  1.10] at (149.79,  7.64) {Number of employees $d$};
		\end{scope}
		\begin{scope}
		\path[clip] (  0.00,  0.00) rectangle (260.17,115.63);
		\definecolor{drawColor}{RGB}{0,0,0}
		
		\node[text=drawColor,rotate= 90.00,anchor=base,inner sep=0pt, outer sep=0pt, scale=  1.10] at ( 13.08, 70.41) {time (ms)};
		\end{scope}
		\begin{scope}
		\path[clip] (  0.00,  0.00) rectangle (260.17,115.63);
		
		\path[] ( 41.63, 63.09) rectangle (119.51,117.45);
		\end{scope}
		\begin{scope}
		\path[clip] (  0.00,  0.00) rectangle (260.17,115.63);
		\definecolor{drawColor}{RGB}{247,192,26}
		
		\path[draw=drawColor,line width= 0.6pt,line join=round] ( 48.57,104.72) -- ( 60.13,104.72);
		\end{scope}
		\begin{scope}
		\path[clip] (  0.00,  0.00) rectangle (260.17,115.63);
		\definecolor{fillColor}{RGB}{247,192,26}
		
		\path[fill=fillColor] ( 54.35,104.72) circle (  1.96);
		\end{scope}
		\begin{scope}
		\path[clip] (  0.00,  0.00) rectangle (260.17,115.63);
		\definecolor{drawColor}{RGB}{37,122,164}
		
		\path[draw=drawColor,line width= 0.6pt,dash pattern=on 2pt off 2pt ,line join=round] ( 48.57, 90.27) -- ( 60.13, 90.27);
		\end{scope}
		\begin{scope}
		\path[clip] (  0.00,  0.00) rectangle (260.17,115.63);
		\definecolor{fillColor}{RGB}{37,122,164}
		
		\path[fill=fillColor] ( 54.35, 93.32) --
			( 56.99, 88.74) --
			( 51.71, 88.74) --
			cycle;
		\end{scope}
		\begin{scope}
		\path[clip] (  0.00,  0.00) rectangle (260.17,115.63);
		\definecolor{drawColor}{RGB}{78,155,133}
		
		\path[draw=drawColor,line width= 0.6pt,dash pattern=on 4pt off 2pt ,line join=round] ( 48.57, 75.82) -- ( 60.13, 75.82);
		\end{scope}
		\begin{scope}
		\path[clip] (  0.00,  0.00) rectangle (260.17,115.63);
		\definecolor{fillColor}{RGB}{78,155,133}
		
		\path[fill=fillColor] ( 52.39, 73.85) --
			( 56.31, 73.85) --
			( 56.31, 77.78) --
			( 52.39, 77.78) --
			cycle;
		\end{scope}
		\begin{scope}
		\path[clip] (  0.00,  0.00) rectangle (260.17,115.63);
		\definecolor{drawColor}{RGB}{0,0,0}
		
		\node[text=drawColor,anchor=base west,inner sep=0pt, outer sep=0pt, scale=  0.80] at ( 67.08,101.97) {LVE (PCFG)};
		\end{scope}
		\begin{scope}
		\path[clip] (  0.00,  0.00) rectangle (260.17,115.63);
		\definecolor{drawColor}{RGB}{0,0,0}
		
		\node[text=drawColor,anchor=base west,inner sep=0pt, outer sep=0pt, scale=  0.80] at ( 67.08, 87.52) {VE (CBN)};
		\end{scope}
		\begin{scope}
		\path[clip] (  0.00,  0.00) rectangle (260.17,115.63);
		\definecolor{drawColor}{RGB}{0,0,0}
		
		\node[text=drawColor,anchor=base west,inner sep=0pt, outer sep=0pt, scale=  0.80] at ( 67.08, 73.06) {VE (CFG)};
		\end{scope}
	\end{tikzpicture}		
	\caption{A comparison of the runtimes required to compute interventional distributions on different graphical models encoding equivalent full joint probability distributions.}
	\label{fig:pcfg_plot_eval}
\end{figure}

We test the required runtime to compute the result of an interventional query for each of the three graphical models on different graph sizes by setting the domain size of the employees to $d \in \{ 8, \allowbreak 16, \allowbreak 32, \allowbreak 64, \allowbreak 128, \allowbreak 256, \allowbreak 512, \allowbreak 1024, \allowbreak 2048, \allowbreak 4096 \}$, that is, $\abs{\domain{E}} = d$.
\Cref{fig:pcfg_plot_eval} shows the runtimes needed to compute the result of the probabilistic query in the modified graph when running \ac{ve} on the \ac{cfg}, \ac{ve} on the \ac{cbn}, and \ac{lve} on the \ac{pcfg}.
As we have seen, \ac{ve} algorithm is the propositional counterpart of \ac{lve} and operates on a propositional (ground) model, such as a \ac{cbn} or an \ac{cfg}.
Consequently, \ac{ve} considers every object (e.g., every employee) individually for computations, independent of whether objects are indistinguishable or not.
In contrast, \ac{lve} treats indistinguishable objects as a group by using a representative for computations instead of considering each of those objects separately.
The results emphasise that the \ac{lci} algorithm, which internally exploits \ac{lve}, overcomes scalability issues for large domain sizes as the runtime of \ac{lve}, in contrast to the runtimes of \ac{ve} on the \ac{cbn} and the \ac{cfg}, does not exponentially increase for increasing values of $d$ (y-axis is log-scaled).
Even though the splitting of \acp{pf} results in a less compressed lifted representation, it becomes evident that the performance of \ac{lve} is not significantly affected by the splitting.

While \ac{lci} in combination with a given \ac{pcfg} solves the problem of efficiently computing the effect of interventions on a lifted level, in practice, we often face the problem of not knowing all the underlying causal relationships.
Thus, in the upcoming section, we relax the assumption of knowing all causal relationships by allowing for partial causal knowledge.
In particular, we introduce \acp{pdpcfg} a generalisation of \acp{pcfg} and investigate the implications of not knowing all causal relationships for answering interventional queries.
Moreover, we have hitherto assumed that every factor in a \ac{pcfg} encodes a conditional probability distribution and we also abandon this assumption in the next section.

\section{Partially Directed Parametric Causal Factor Graphs} \label{sec:pdpcfg_pdpcfg}
We now move on to define a \ac{pdpcfg} as a lifted representation that is able to incorporate partial causal knowledge by combining a \ac{pfg} with a partially directed graph to model causal relationships.
The major advantage of a \ac{pdpcfg} over a \ac{pcfg} is that not all causal relationships between the involved \acp{rv} need to be known, thereby reducing the amount of prior knowledge required and thus making the formalism more suitable for many practical settings.

\begin{definition}[Partially Directed Parametric Causal Factor Graph]
	A \emph{\ac{pdpcfg}} $M = (\boldsymbol V, \boldsymbol E, \boldsymbol \Phi)$ consists of a partially directed graph $(\boldsymbol V, \boldsymbol E)$ with node set $\boldsymbol V = \boldsymbol A \cup \boldsymbol G$ and edge set $\boldsymbol E \subseteq \boldsymbol A \times \boldsymbol G$.
	The set of nodes $\boldsymbol V = \boldsymbol A \cup \boldsymbol G$ is partitioned into a set of \acp{prv} $\boldsymbol A = \{ A_1, \ldots, A_n \}$ and a set of \ac{pf} names (\ac{pf} nodes) $\boldsymbol G = \{ g_1, \ldots, g_m \}$.
	For every \ac{pf} name $g_j \in \boldsymbol G$, there is a function definition (\ac{pf}) $\phi_j(\mathcal A_j)_{| C} \in \boldsymbol \Phi$ with $\mathcal A_j$ being a sequence of \acp{prv} from $\boldsymbol A$ and $C$ being a constraint on the \acp{lv} of $\mathcal A_j$ such that $\phi\colon \times_{A \in \mathcal A_j} \range{A} \mapsto \mathbb{R}_{\geq 0}$ maps range values in $\mathcal A_j$ to a positive real number (potential).
	As usual, in every function definition, at least one potential has to be non-zero and we may omit $| \top$ in $\phi_j(\mathcal A_j)_{| \top}$.
	For each \ac{pf} name $g_j \in \boldsymbol G$ with corresponding function definition $\phi_j(\mathcal A_j)_{| C} \in \boldsymbol \Phi$, there is either an undirected edge $\{ A_i, g_j \} \in \boldsymbol E$ or a directed edge $(g_j, A_i) \in \boldsymbol E$ for every \ac{prv} $A_i \in \mathcal A_j$.
	We stipulate that for every \ac{pf} node $g_j \in \boldsymbol G$, there is at most one outgoing directed edge $(g_j, A_i) \in \boldsymbol E$ among the edges incident to $g_j$.
	In case a \ac{pf} $\phi_j \in \boldsymbol \Phi$ has only a single argument $A \in \boldsymbol A$, its corresponding \ac{pf} node $g_j \in \boldsymbol G$ is connected to $A$ via a directed edge $(g_j, A) \in \boldsymbol E$.
	Furthermore, each directed edge $A_k - g_j \to A_i$ from a \ac{prv} $A_k \in \boldsymbol A$ to a \ac{prv} $A_i \in \boldsymbol A$ via a \ac{pf} node $g_j \in \boldsymbol G$ corresponds to a direct causal relationship between $A_k$ and $A_i$.
	The directed edges in $\boldsymbol E$ are not allowed to form any directed cycles, i.e., $\boldsymbol E$ contains no sequence of edges $\{ A_1, \allowbreak g_1 \}, \allowbreak (g_1, \allowbreak A_2), \allowbreak \ldots, \allowbreak \{ A_{k - 1}, \allowbreak g_k \}, \allowbreak (g_k, \allowbreak A_1)$ starting from an arbitrary \ac{prv} $A_1 \in \boldsymbol A$ such that the sequence ends again at $A_1$ while every second edge in the sequence is directed and the sequence follows the arrow directions of the directed edges.
	The semantics of $M$ is given by grounding with respect to constraints and building a full joint distribution over $\boldsymbol R = \gr(\boldsymbol A)$.
	The joint potential for an assignment $\boldsymbol R = \boldsymbol r$ is defined as
	\begin{align}
		\psi_M(\boldsymbol R = \boldsymbol r) = \prod_{\phi_j \in \boldsymbol \Phi} \prod_{\phi_k \in \gr(\phi_j)} \phi_k(\mathcal R_k = \boldsymbol r_k),
	\end{align}
	where $\boldsymbol r_k$ is a projection of $\boldsymbol r$ to the argument list $\mathcal R_k$ of $\phi_k$.
	The normalised joint potential then yields the full joint probability distribution over $\boldsymbol R$ that is encoded by $M$, that is,
	\begin{align}
		P_M(\boldsymbol R = \boldsymbol r) = \frac{1}{Z} \psi_M(\boldsymbol R = \boldsymbol r),
	\end{align}
	where $Z$ is the normalisation constant, defined as
	\begin{align}
		Z = \sum_{\boldsymbol r \in \times_{R \in \boldsymbol R} \range{R}} \psi_M(\boldsymbol R = \boldsymbol r).
	\end{align}
\end{definition}

A \ac{pdpcfg} thus offers the possibility to omit edge directions if no information about the underlying causal relationships is available.
Grounding a \ac{pdpcfg} $M$ yields a partially directed \ac{cfg}, which encodes the same underlying full joint probability distribution as $M$.
If all causal relationships are known (and hence all \ac{pf} nodes have an outgoing edge), a \ac{pdpcfg} is identical to a \ac{pcfg}.
In a \ac{pdpcfg} $M = (\boldsymbol A \cup \boldsymbol G, \boldsymbol E, \boldsymbol \Phi)$, we follow the same notations for the parent \acp{prv} $\Pa_{\boldsymbol A}(M, A)$ of a \ac{prv} $A \in \boldsymbol A$, the parent \acp{prv} $\Pa(M, g)$ of a \ac{pf} node $g \in \boldsymbol G$, the child \acp{prv} $\Ch_{\boldsymbol A}(M, A)$ of a \ac{prv} $A \in \boldsymbol A$, the child \acp{prv} $\Ch(M, g)$ of a \ac{pf} node $g \in \boldsymbol G$, the descendant \acp{prv} $\De_{\boldsymbol A}(M, A)$ of a \ac{prv} $A \in \boldsymbol A$, and the descendant \acp{prv} $\De(M, g)$ of a \ac{pf} node $g \in \boldsymbol G$ as in a \ac{pcfg}.
Additionally, we define the (undirected) neighbour \acp{prv} of a \ac{prv} $A \in \boldsymbol A$ in $M$ as $\Ne_{\boldsymbol A}(M, A) = \{ A' \in \boldsymbol A \mid \exists g \in \boldsymbol G \colon \Ch(M, g) = \emptyset \land \{ g, A' \} \in \boldsymbol E \land \{ g, A \} \in \boldsymbol E \}$.
In contrast to a \ac{pcfg}, the set of child \acp{prv} $\Ch(M, g)$ of a \ac{pf} node $g \in \boldsymbol G$ in a \ac{pdpcfg} $M$ may be empty.
More specifically, as every \ac{pf} node has at most one outgoing edge, it holds that $\abs{\Ch(M, g)} \leq 1$ for every \ac{pf} node $g \in \boldsymbol G$.
If a \ac{pf} node $g \in \boldsymbol G$ corresponds to a \ac{pf} with a single argument, it always holds that $\abs{\Ch(M, g)} = 1$.
From now on, we also drop the assumption that every ground factor $\phi_j(R_{j_1}, \allowbreak \ldots, \allowbreak R_{j_z}) \in \gr(\boldsymbol \Phi)$ in a \ac{pdpcfg} $M$ encodes a conditional probability distribution $P(R_{j_z} \mid R_{j_1}, \allowbreak \ldots, \allowbreak R_{j_{z - 1}})$.
Let us next consider a modified version of our running example, where the underlying causal relationships are only partially known, resulting in a \ac{pdpcfg} instead of a fully directed \ac{pcfg}.

\begin{figure}[t]
	\centering
	\begin{tikzpicture}[rv/.append style={minimum height=2.2em, minimum width=5.6em}]
		\node[rv, draw, inner sep = 1.8pt] (I) {$Int(E)$};
		\node[rv, inner sep = 1.8pt, right = 2.5em of I] (C) {$Com(E)$};
		\node[rv, above right = 1.0em and 3.5em of C] (R) {$Rev$};
		\node[rv, inner sep = 1.8pt, below right = 1.0em and 3.5em of C] (S) {$Sal(E)$};

		\pfsat{$(I.east)!0.5!(C.west)$}{270}{$g_1$}{G1a}{G1}{G1b}
		\factorat{$(C)!0.5!(R)$}{270}{$g_2$}{G2}
		\pfsat{$(C)!0.5!(S)$}{270}{$g_3$}{G3a}{G3}{G3b}

		\begin{pgfonlayer}{bg}
			\draw (I) -- (G1);
			\draw (G1) -- (C);
			\draw (C) -- (G2);
			\draw[arc] (G2) -- (R);
			\draw (C) -- (G3);
			\draw (R) -- (G3);
			\draw[arc] (G3) -- (S);
		\end{pgfonlayer}
	\end{tikzpicture}
	\caption{A \ac{pdpcfg} that extends the \ac{pcfg} depicted in \cref{fig:pcfg_example_pcfg} in the sense that an additional \ac{prv} $Int(E)$ has been added to the model. We omit the specification of the potential tables of the (par)factors for brevity.}
	\label{fig:pdpcfg_example_pdpcfg}
\end{figure}

\begin{example}[Partially Directed Parametric Causal Factor Graph]
	\Cref{fig:pdpcfg_example_pdpcfg} depicts a \ac{pdpcfg} $M$, which extends the \ac{pcfg} given in \cref{fig:pcfg_example_pcfg}.
	In particular, there is an additional \ac{prv} $Int(E)$ in $M$, which represents the intelligence of an employee.
	Moreover, for the sake of the example, there is no information available about the causal relationship between $Int(E)$ and $Com(E)$.
	As $g_1$ has no outgoing directed edge, we have $\Ne_{\boldsymbol A}(M, Com(E)) = \{ Int(E) \}$ and $\Ne_{\boldsymbol A}(M, Int(E)) = \{ Com(E) \}$, whereas the remaining \acp{prv} have no undirected neighbour \acp{prv}.
	We omit the set notation of $M$ for brevity.
\end{example}

Separation in a \ac{pdpcfg} is defined as in a \ac{pcfg}, that is, the conditions specifying when a path is blocked are identical.
Again, it is also possible to check whether \acp{prv} (instead of ground \acp{rv}) are conditionally independent in a highly efficient manner on a lifted level in a \ac{pdpcfg}.
In a \ac{pdpcfg}, every \ac{prv} $A(\mathcal L)_{| C}$ is represented by a single variable node and thus, checking for conditional independence statements that involve $A$ can be done by looking at this single variable node instead of taking into account all groundings of $A$ individually.

In accordance with our previous assumptions, whenever we deal with a \ac{pdpcfg} in this article, we demand that whenever a probability distribution $P$ is modelled using a \ac{pdpcfg} $M$, $P$ satisfies the global Markov property with respect to $M$.
We further stipulate that all directed edges in a \ac{pdpcfg} $M$ are causal and hence accurately represent causal relationships between the involved \acp{rv}.

We next show how the computation of the effect of interventions can efficiently be realised in a \ac{pdpcfg}.
An important challenge is that the effect of an intervention might differ depending on the actual causal relationships between the \acp{rv}.
As there might be multiple possible causal explanations and we do not know the correct one, it is not always possible to uniquely determine the effect of an intervention.

The semantics of an intervention is defined on a fully directed graph.
In particular, the interventional distribution is defined as a factorisation over the conditional probability distributions of all \acp{rv}, which are no intervention variables, given their parents.
A \ac{pdpcfg}, however, might contain \ac{pf} nodes without any outgoing directed edges, thereby possibly leading to unknown sets of parents for some \acp{rv}.
Thus, when computing the effect of an intervention, we have to take all possible parent sets of the intervention variables into account.
In general, not all combinations of orienting the undirected edges in a \ac{pdpcfg} are consistent with the conditional independence statements holding in the underlying probability distribution.
More specifically, every \ac{pdpcfg} $M$ represents a set of fully directed \acp{pcfg} obtained by orienting the undirected edges in $M$ such that every \ac{pf} node has exactly one outgoing directed edge and the resulting model entails the same conditional independence statements as $M$.
We formalise this concept in the following definition.

\begin{definition}[Consistent Extension]
	Let $M = (\boldsymbol A \cup \boldsymbol G, \allowbreak \boldsymbol E, \allowbreak \boldsymbol \Phi)$ denote a \ac{pdpcfg}.
	A \ac{pcfg} $M' = (\boldsymbol A \cup \boldsymbol G, \allowbreak \boldsymbol E', \allowbreak \boldsymbol \Phi)$ is a \emph{consistent extension} of $M$ if
	\begin{enumerate}
		\item every directed edge $(g, A) \in \boldsymbol E$ is also in $\boldsymbol E'$,
		\item for every \ac{pf} node $g \in \boldsymbol G$ with $\Ch(M, g) = \emptyset$, exactly one edge $\{ A, g \} \in \boldsymbol E$ is replaced by an edge $(g, A)$ in $\boldsymbol E'$, and
		\item $M'$ entails the same conditional independence statements as $M$.
	\end{enumerate}
	We denote the set of all consistent extensions of $M$ as $[M]$.
\end{definition}

In other words, a consistent extension of a \ac{pdpcfg} $M$ is a \ac{pcfg} obtained by orienting the undirected edges in $M$ such that every \ac{pf} node has exactly one outgoing directed edge and the implied conditional independence statements in the extension remain the same as in $M$.
The set of consistent extensions $[M]$ of a \ac{pdpcfg} $M$ might be empty.
If $[M] \neq \emptyset$, we say that $M$ is extendable.
The concept of a consistent extension is closely related to the concept of a Markov equivalence class, which is a set of directed acyclic graphs that entail the same conditional independence statements~\citep{Verma1990a,Andersson1997a} (and thus, the set of consistent extensions of a partially directed acyclic graph is a subset of a Markov equivalence class).

\begin{example}[Consistent Extension]
	Consider again the \ac{pdpcfg} $M$ depicted in \cref{fig:pdpcfg_example_pdpcfg}.
	The set of consistent extensions $[M]$ of $M$ contains two fully directed \acp{pdpcfg} $M_1$ and $M_2$, which are illustrated in \cref{fig:pdpcfg_example_consistent_extensions_01} and \cref{fig:pdpcfg_example_consistent_extensions_02}, respectively.
	In $M_1$, the edge $g_1 - Com(E)$ has been replaced by an edge $g_1 \to Com(E)$ and in $M_2$, the edge $Int(E) - g_1$ has been replaced by an edge $g_1 \to Int(E)$.
	Both $M_1$ and $M_2$ entail the same conditional independence statements as $M$ and could possibly model the correct underlying causal relationships but we do not know whether $M_1$ or $M_2$ is actually the correct model.
\end{example}

\begin{figure}[t]
	\centering
	\begin{subfigure}{0.49\textwidth}
		\centering
		\resizebox{\textwidth}{!}{
			\begin{tikzpicture}[rv/.append style={minimum height=2.2em, minimum width=5.6em}]
				\node[rv, draw, inner sep = 1.8pt] (I) {$Int(E)$};
				\node[rv, inner sep = 1.8pt, right = 2.5em of I] (C) {$Com(E)$};
				\node[rv, above right = 1.0em and 3.5em of C] (R) {$Rev$};
				\node[rv, inner sep = 1.8pt, below right = 1.0em and 3.5em of C] (S) {$Sal(E)$};
			
				\pfsat{$(I.east)!0.45!(C.west)$}{270}{$g_1$}{G1a}{G1}{G1b}
				\factorat{$(C)!0.5!(R)$}{270}{$g_2$}{G2}
				\pfsat{$(C)!0.5!(S)$}{270}{$g_3$}{G3a}{G3}{G3b}
			
				\begin{pgfonlayer}{bg}
					\draw (I) -- (G1);
					\draw[arc] (G1) -- (C);
					\draw (C) -- (G2);
					\draw[arc] (G2) -- (R);
					\draw (C) -- (G3);
					\draw (R) -- (G3);
					\draw[arc] (G3) -- (S);
				\end{pgfonlayer}
			\end{tikzpicture}
		}
		\caption{}
		\label{fig:pdpcfg_example_consistent_extensions_01}
	\end{subfigure}
	\begin{subfigure}{0.49\textwidth}
		\centering
		\resizebox{\textwidth}{!}{
			\begin{tikzpicture}[rv/.append style={minimum height=2.2em, minimum width=5.6em}]
				\node[rv, draw, inner sep = 1.8pt] (I) {$Int(E)$};
				\node[rv, inner sep = 1.8pt, right = 2.5em of I] (C) {$Com(E)$};
				\node[rv, above right = 1.0em and 3.5em of C] (R) {$Rev$};
				\node[rv, inner sep = 1.8pt, below right = 1.0em and 3.5em of C] (S) {$Sal(E)$};
			
				\pfsat{$(I.east)!0.55!(C.west)$}{270}{$g_1$}{G1a}{G1}{G1b}
				\factorat{$(C)!0.5!(R)$}{270}{$g_2$}{G2}
				\pfsat{$(C)!0.5!(S)$}{270}{$g_3$}{G3a}{G3}{G3b}
			
				\begin{pgfonlayer}{bg}
					\draw[arc] (G1) -- (I);
					\draw (G1) -- (C);
					\draw (C) -- (G2);
					\draw[arc] (G2) -- (R);
					\draw (C) -- (G3);
					\draw (R) -- (G3);
					\draw[arc] (G3) -- (S);
				\end{pgfonlayer}
			\end{tikzpicture}
		}
		\caption{}
		\label{fig:pdpcfg_example_consistent_extensions_02}
	\end{subfigure}
	\caption{A graphical illustration of the set of consistent extensions $[M] = \{ M_1, M_2 \}$ of the \ac{pdpcfg} $M$ shown in \cref{fig:pdpcfg_example_pdpcfg}. (a) shows the \ac{pcfg} $M_1$, where $Com(E) - g_1$ has been oriented as $g_1 \to Com(E)$, and (b) shows the \ac{pcfg} $M_2$, where $Int(E) - g_1$ has been oriented as $g_1 \to Int(E)$.}
	\label{fig:pdpcfg_example_consistent_extensions}
\end{figure}

We remark that the definition of a consistent extension refers to edges in the \ac{pdpcfg} $M$ instead of referring to edges in the ground model $\gr(M)$.
Thus, we assume that every edge $\{ A, g \}$ in $M$ represents a set of edges in $\gr(M)$ such that all of the edges in this set are oriented in the same way.
For instance, in the ground graph of our running example, we do not allow any orientation where, e.g., $IntA - f_1 \to ComA$ and $IntB \gets f_2 - ComB$ occur at the same time.
As we assume that $Alice$ and $Bob$ are indistinguishable, we also assume identical edge orientations for their corresponding \acp{rv}.
Defining consistent extensions on the lifted level is, however, not necessary to apply the approaches presented here.
If wanted, the set of consistent extensions of a \ac{pdpcfg} $M$ can also be defined with respect to the ground model of $M$ (however, a definition with respect to the ground model is rather unintuitive as objects are not really indistinguishable if surrounding edges are oriented differently).

To determine the effect of an intervention, we need to know the parents of the intervention variables.
As any \ac{pdpcfg} represents a set of consistent extensions, there are various possible parent sets for the intervention variables in general.
Fortunately, we do not always have to consider all consistent extensions of a given \ac{pdpcfg} $M$ to compute the effect of an intervention because there might be consistent extensions with identical parent sets for the intervention variables, thereby leading to the same effect of the intervention.
Therefore, in case all parents of the \acp{rv} on which we intervene are known, we can uniquely determine the effect of an intervention even if there are still undirected edges present in $M$.
This result has been shown for propositional partially directed acyclic graphs~\citep{Maathuis2009a,Nandy2017a} and we now transfer this result to \acp{pdpcfg}.

\begin{theorem} \label{th:pdpcfg_known_parents}
	Let $M = (\boldsymbol A \cup \boldsymbol G, \allowbreak \boldsymbol E, \allowbreak \boldsymbol \Phi)$ denote a \ac{pdpcfg}, let $\boldsymbol R = \gr(\boldsymbol A) = \{ R_1, \allowbreak \ldots, \allowbreak R_n \}$ and let $do(R'_1 = r'_1, \allowbreak \ldots, \allowbreak R'_k = r'_k)$ be an intervention on $\{ R'_1, \allowbreak \ldots, \allowbreak R'_k \} \subseteq \boldsymbol R$.
	If $\Ne_{\boldsymbol R}(\gr(M), R'_1) = \emptyset, \allowbreak \ldots, \allowbreak \Ne_{\boldsymbol R}(\gr(M), R'_k) = \emptyset$, then the interventional distribution $P_{M'}(R_1 = r_1, \allowbreak \ldots, \allowbreak R_n = r_n \mid do(R'_1 = r'_1, \allowbreak \ldots, \allowbreak R'_k = r'_k))$ under the intervention $do(R'_1 = r'_1, \allowbreak \ldots, \allowbreak R'_k = r'_k)$ is identical in all consistent extensions $M' \in [M]$ of $M$.
\end{theorem}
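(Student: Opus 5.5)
We start from \cref{def:pcfg_interventional_distribution_pcfg}. In any consistent extension $M' \in [M]$, the interventional distribution is the truncated product
\begin{align*}
	\prod_{R_i \in \boldsymbol R \setminus \{ R'_1, \ldots, R'_k \}} P(r_i \mid \pa_{\boldsymbol R}(\gr(M'), R_i))
\end{align*}
on assignments consistent with the intervention, and $0$ otherwise. Every $M' \in [M]$ encodes the same joint distribution $P$, because edge directions do not affect the product of potentials. The dependence on $M'$ therefore enters only through the parent sets $\Pa_{\boldsymbol R}(\gr(M'), R_i)$ of the non-intervened variables. Since these parent sets do vary across extensions, the plan is not to compare the products factor by factor. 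Instead we rewrite the truncated product as an expression in $P$ and the parent sets of the intervention variables only, and then show those parent sets are the same in every extension.

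First, divide the full Markov factorisation (\cref{eq:pcfg_causal_markov_factorisation}) of $P$ with respect to $\gr(M')$ by the factors of the intervention variables. For assignments with $r_j = r'_j$ this gives
\begin{align*}
	P_{M'}(\boldsymbol r \mid do(R'_1 = r'_1, \ldots, R'_k = r'_k)) = \frac{P(\boldsymbol r)}{\prod_{j=1}^{k} P(r'_j \mid \pa_{\boldsymbol R}(\gr(M'), R'_j))}.
\end{align*}
Where the denominator vanishes, we use positivity or the usual convention; this is a side issue to note. The right-hand side depends on $M'$ only through $\Pa_{\boldsymbol R}(\gr(M'), R'_j)$, $j = 1, \ldots, k$. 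This is the step that converts the claim into a purely graphical statement. It is the argument of \citet{Maathuis2009a} transferred to ground \acp{cfg}.

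Second, show that $\Pa_{\boldsymbol R}(\gr(M'), R'_j) = \Pa_{\boldsymbol R}(\gr(M), R'_j)$ for every $M' \in [M]$. Each ground factor $f$ adjacent to $R'_j$ in $\gr(M)$ falls into one of three cases.
\begin{itemize}
	\item[(a)] $f \to R'_j$ is already directed. Then condition~1 of a consistent extension keeps it, so the other arguments of $f$ stay parents of $R'_j$.
	\item[(b)] $f$ has a directed edge to some other variable. Then $f$ already has its unique outgoing edge, so $f - R'_j$ stays undirected and $R'_j$ remains a parent of $f$, not a child.
	\item[(c)] $f$ has no outgoing edge. Then $f$ would make $R'_j$ share an undirected factor with its other arguments. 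By $\Ne_{\boldsymbol R}(\gr(M), R'_j) = \emptyset$, no such factor with another argument exists.
\end{itemize}
The remaining sub-case of (c) is a factor with $R'_j$ as its only argument. The definition of a \ac{pdpcfg} forces such a factor to point into $R'_j$, so it contributes no parents anyway. Hence all factors pointing into $R'_j$ in $M'$ already do so in $M$, and no new factor can be oriented into $R'_j$.

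The main obstacle is the lifted-versus-ground bookkeeping in this second step. The hypothesis is stated on $\gr(M)$, but extensions orient lifted edges uniformly. So we must check that the ground neighbour set is exactly the grounding of the relevant lifted edges, in particular that a count-converted argument like $\#_E[Com(E)]$ still yields ground undirected neighbours. Once that holds, a uniform lifted orientation cannot introduce a ground parent. Combining both steps, the displayed formula is the same function of $P$ and of identical parent sets in every extension, which proves the theorem.
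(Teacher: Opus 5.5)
Your proposal is correct and follows essentially the same route as the paper. In both, the truncated product is the common joint $P$ divided by the removed factors $P(r'_j \mid \pa_{\boldsymbol R}(\gr(M'), R'_j))$, and these factors coincide across all $M' \in [M]$ because $\Ne_{\boldsymbol R}(\gr(M), R'_j) = \emptyset$ leaves no undirected factor through which an extension could add a parent to $R'_j$. Your division identity and case analysis spell out what the paper states in words. Two small remarks: the factorisation of $P$ along $\gr(M')$ that you invoke is justified by condition~3 of a consistent extension (Markov equivalence to $M$), and the lifted-versus-ground issue you flag is harmless, since each ground factor inherits the edge types of its parfactor, so your ground-level case analysis already covers it.
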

\begin{proof}
	The interventional distribution of a \ac{pcfg} $M' \in [M]$ is given by (\cref{def:pcfg_interventional_distribution_pcfg}):
	\begin{align*}
		P_{M'}&(R_1 = r_1, \ldots, R_n = r_n \mid do(R'_1 = r'_1, \ldots, R'_k = r'_k)) \\
		&=
		\begin{cases}
			\prod\limits_{R_i \in \{ R_1, \ldots, R_n \} \setminus \{ R'_1, \ldots, R'_k \}} P(r_i \mid \pa_{\boldsymbol R}(\gr(M'), R_i)) & \text{if } \forall j \in \{1, \ldots, k\}\colon r_j = r'_j \\
			0 & \text{otherwise}.
		\end{cases}
	\end{align*}
	Given that $\Ne_{\boldsymbol R}(\gr(M), R'_1) = \emptyset, \ldots, \Ne_{\boldsymbol R}(\gr(M), R'_k) = \emptyset$, the parents $\Pa_{\boldsymbol R}(\gr(M), R'_1), \allowbreak \ldots, \allowbreak \Pa_{\boldsymbol R}(\gr(M), R'_k)$ of $R'_1, \ldots, R'_k$ in $M$ are fully known and identical in all $M' \in [M]$, i.e., $\Pa_{\boldsymbol R}(\gr(M), R'_i) = \Pa_{\boldsymbol R}(\gr(M'), R'_i)$ for all $M' \in [M]$ and all $i \in \{ 1, \allowbreak \ldots, \allowbreak k \}$.
	The conditional probability distributions being removed from the product of the interventional distribution thus are identical for all $M' \in [M]$.
	Hence, it remains to be shown that the factorisation of all ground \acp{rv} that are not in $\{ R'_1, \ldots, R'_k \}$ is equivalent for all consistent extensions $M' \in [M]$ of $M$.
	We know that every \ac{pcfg} $M' \in [M]$ entails exactly the same conditional independence statements as $M$ and thus, the factorisation induced by any \ac{pcfg} $M' \in [M]$ is valid, i.e., all \acp{pcfg} $M' \in [M]$ encode the same underlying full joint probability distribution $P$ as $M$.
	Therefore, as the parents of $\{ R'_1, \ldots, R'_k \}$ are identical in all $M' \in [M]$ and all $M' \in [M]$ encode the same probability distribution, the product over the conditional distributions of $R_i \in \{ R_1, \ldots, R_n \} \setminus \{ R'_1, \ldots, R'_k \}$ given their respective parents is identical in all $M' \in [M]$ (just as all \ac{bn} structures over a fixed set of \acp{rv} entailing the same conditional independence statements induce equivalent factorisations of the underlying probability distribution).
	Consequently, the interventional distribution is identical in all consistent extensions $M' \in [M]$ of $M$.
\end{proof}

A direct consequence of \cref{th:pdpcfg_known_parents} is that the result of any interventional query is uniquely determined if there are no undirected edges connected to the intervention variables in the corresponding ground graph of a \ac{pdpcfg}.

\begin{corollary}
	Let $M = (\boldsymbol A \cup \boldsymbol G, \allowbreak \boldsymbol E, \allowbreak \boldsymbol \Phi)$ denote a \ac{pdpcfg}, let $\boldsymbol R = \gr(\boldsymbol A) = \{ Q, \allowbreak R_1, \allowbreak \ldots, \allowbreak R_{\ell}, \allowbreak R'_1, \allowbreak \ldots, \allowbreak R'_k \}$ and let $P(Q \mid do(R'_1 = r'_1, \allowbreak \ldots, \allowbreak R'_k = r'_k))$ be an interventional query.
	If $\Ne_{\boldsymbol R}(\gr(M), R'_1) = \emptyset, \allowbreak \ldots, \allowbreak \Ne_{\boldsymbol R}(\gr(M), R'_k) = \emptyset$, then the result of $P(Q \mid do(R'_1 = r'_1, \allowbreak \ldots, \allowbreak R'_k = r'_k))$ is identical in all consistent extensions $M' \in [M]$ of $M$.
\end{corollary}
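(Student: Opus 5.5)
The corollary should follow from \cref{th:pdpcfg_known_parents} by marginalising the interventional distribution onto $Q$. I will write the query as a marginal of the interventional distribution and observe that marginalisation is a fixed operation that does not depend on the extension.

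First, fix an arbitrary consistent extension $M' \in [M]$. By \cref{def:pcfg_interventional_distribution_pcfg}, the interventional distribution $P_{M'}(\boldsymbol R = \boldsymbol r \mid do(R'_1 = r'_1, \ldots, R'_k = r'_k))$ is a joint distribution over all of $\boldsymbol R$. The interventional query $P_{M'}(Q \mid do(R'_1 = r'_1, \ldots, R'_k = r'_k))$ is its marginal on $Q$. Concretely,
\begin{align*}
	P_{M'}(Q = q \mid do(\ldots)) = \sum_{r_1 \in \range{R_1}} \ldots \sum_{r_{\ell} \in \range{R_{\ell}}} \sum_{\hat r'_1 \in \range{R'_1}} \ldots \sum_{\hat r'_k \in \range{R'_k}} P_{M'}(q, r_1, \ldots, r_{\ell}, \hat r'_1, \ldots, \hat r'_k \mid do(\ldots)).
\end{align*}
Only the terms with $\hat r'_j = r'_j$ survive, because all other terms are zero by definition. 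This identity is exactly the truncated product formula of \cref{def:pcfg_truncated_product_formula_pcfg}, so it matches how the paper defines the query's result.

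Second, invoke \cref{th:pdpcfg_known_parents}. Its hypothesis $\Ne_{\boldsymbol R}(\gr(M), R'_i) = \emptyset$ for all $i$ is exactly the hypothesis of the corollary. The theorem therefore gives that the summand $P_{M'}(q, r_1, \ldots \mid do(\ldots))$ is the same function of the assignment for every $M' \in [M]$.

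Third, the summation ranges $\range{R_i}$ are determined by $\boldsymbol A$, which all extensions share, since only edges are reoriented. The sums therefore agree term by term, and the query result is identical across $[M]$. This holds for every value $q$, so it holds for the whole distribution over $Q$.

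There is no real obstacle here. The only point needing care is the first step: the query must be defined as a marginal of the interventional distribution, and the case distinction (zero unless the intervention values match) must be handled so that the sum reduces to the truncated product formula. If $[M] = \emptyset$, the statement holds vacuously.
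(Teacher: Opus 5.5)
Your proposal is correct and takes the same route as the paper. The paper gives no separate proof and states the corollary as a direct consequence of \cref{th:pdpcfg_known_parents}: the query result is a marginal of an interventional distribution that coincides across all of $[M]$. Your explicit check that summing out the non-query variables, with only $\hat r'_j = r'_j$ surviving, reproduces the truncated product formula of \cref{def:pcfg_truncated_product_formula_pcfg} is exactly the bookkeeping the paper leaves implicit.
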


Even though in practice, there might be undirected edges connected to the intervention variables, \cref{th:pdpcfg_known_parents} implies that we do not have to consider all possible edge directions of the undirected edges in a \ac{pdpcfg} when computing the effect of an intervention.
Instead, we only have to consider the possible directions of the undirected edges that are relevant for the intervention, i.e., the directions of the undirected edges that are connected to the intervention variables.
Hence, we might not have to consider all consistent extensions of the given \ac{pdpcfg}.
All terms required to answer the interventional query according to the truncated product formula can be computed by querying the \ac{pdpcfg} $M$, as the semantics of $M$ is well-defined even if there are undirected edges in $M$ (that is, the underlying full joint probability distribution is well-defined because its definition is independent of the edge directions in $M$).
Intuitively, it becomes clear that the effect of an intervention is not guaranteed to be uniquely determined if there are undirected edges connected to the intervention variables because there might be various consistent extensions with different parent sets, resulting in multiple possible disjoint effects of the intervention.
This result has been shown for propositional partially directed acyclic graphs~\citep{Maathuis2009a} and we next show that it also holds for \acp{pdpcfg}.

\begin{theorem} \label{th:pdpcfg_unknown_parents}
	Let $M = (\boldsymbol A \cup \boldsymbol G, \allowbreak \boldsymbol E, \allowbreak \boldsymbol \Phi)$ denote a \ac{pdpcfg}, let $\boldsymbol R = \gr(\boldsymbol A) = \{ Q, \allowbreak R_1, \allowbreak \ldots, \allowbreak R_{\ell}, \allowbreak R'_1, \allowbreak \ldots, \allowbreak R'_k \}$ and let $P(Q \mid do(R'_1 = r'_1, \allowbreak \ldots, \allowbreak R'_k = r'_k))$ be an interventional query.
	If there exists a \ac{rv} $R'_i \in \{ R'_1, \ldots, R'_k \}$ such that $\Ne_{\boldsymbol R}(\gr(M), R'_i) \neq \emptyset$, then there might be consistent extensions $M_1, M_2 \in [M]$ of $M$ such that the result of $P(Q \mid do(R'_1 = r'_1, \allowbreak \ldots, \allowbreak R'_k = r'_k))$ is not identical in $M_1$ and $M_2$.
\end{theorem}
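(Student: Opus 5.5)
The statement is existential ("there might be"), so I would prove it by exhibiting one concrete \ac{pdpcfg} $M$ that satisfies the hypothesis and for which two consistent extensions $M_1, M_2 \in [M]$ give different answers to the same interventional query. Since every \ac{cfg} is a \ac{pcfg} with parameterless \acp{prv}, and the same holds for \acp{pdpcfg}, a propositional counterexample suffices. I would take the minimal one: two binary \acp{rv} $Q$ and $R'$, a unary \ac{pf} $\phi_0$ with $g_0 \to R'$, and a single \ac{pf} node $g_1$ with $\phi_1(R', Q)$ and only undirected edges $R' - g_1 - Q$. This node has no child, so $\Ne_{\boldsymbol R}(\gr(M), R') = \{Q\} \neq \emptyset$. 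The query is $P(Q \mid do(R' = r'))$.

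\textbf{Step 1: the consistent extensions.} Orienting $g_1$ gives two candidates. In $M_1$ we set $g_1 \to Q$, which yields the chain $R' \to Q$. In $M_2$ we set $g_1 \to R'$, so that $R'$ has two incoming factors $g_0, g_1$ and $Q$ becomes its parent. With only two variables and one dependency, neither graph implies any conditional independence, and neither does $M$ via the separation criteria. Hence both satisfy the consistent-extension definition. Both are acyclic, and each \ac{pf} node has exactly one outgoing edge.

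\textbf{Step 2: evaluate the query in each extension.} Choose potentials so that $R'$ and $Q$ are genuinely dependent in the joint $P_M$, for instance $\phi_1$ with strong agreement weights. The joint is the same in both extensions, since it does not depend on edge directions.
\begin{itemize}
\item In $M_2$, $R'$ is not an ancestor of $Q$. The truncated product formula (\cref{def:pcfg_truncated_product_formula_pcfg}) removes the factor for $R'$, so $P_{M_2}(Q \mid do(R'=r')) = P(Q)$.
\item In $M_1$, $Q$ has parent $R'$, so $P_{M_1}(Q \mid do(R'=r')) = P(Q \mid R' = r')$.
\end{itemize}
The conditional probabilities in both cases are read off the common joint $P_M$. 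Since the potentials make $Q$ and $R'$ dependent, $P(Q \mid R'=r') \neq P(Q)$ for some $r'$, and so the two results differ. I would verify this with explicit numbers, e.g.\ $\phi_0$ uniform and $\phi_1$ equal to $3$ on agreement and $1$ on disagreement.

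\textbf{Main obstacle.} The subtle point is that the interventional distribution in \cref{def:pcfg_interventional_distribution_pcfg} uses conditionals $P(r_i \mid \pa)$ of the joint, not raw factors. This matters because ground factors need no longer be conditional distributions, so both extensions must be evaluated through the joint $P_M$. The proof also has to check that the chosen extensions really are consistent, i.e.\ that they entail the same (here, empty) set of independencies. The tiny example makes this transparent, which is why I would pick it rather than something like the $Int(E)$–$Com(E)$ running example. I would, however, remark that the same argument lifts to the running example: it amounts to comparing $P(Rev \mid do(Int(e)=\high))$ in $M_1$ and $M_2$ from \cref{fig:pdpcfg_example_consistent_extensions}.
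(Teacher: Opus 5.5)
Your proposal is correct, but it proves the statement differently from the paper.

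\textbf{The paper's route.} It argues generically: since $R'_i$ has undirected neighbours, two consistent extensions may give $R'_i$ different parent sets. Then \cref{def:pcfg_interventional_distribution_pcfg} removes different conditionals from the product, and so the interventional distributions differ.

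\textbf{Your route.} You read ``might'' as an existence claim and settle it with an explicit two-variable witness. You verify that both orientations of $g_1$ are consistent extensions and evaluate the query in each. For your potentials this gives $P(Q{=}1 \mid R'{=}1) = 3/4$ versus $P(Q{=}1) = 1/2$.

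\textbf{What each buys.} The paper names the general mechanism, the undetermined parent set of the intervention variable, which is exactly what \ac{elci} enumerates. Your witness makes the existential claim airtight by checking two things the paper takes for granted:
\begin{enumerate}
\item that consistent extensions with different parent sets actually exist, and
\item that this changes the answer to $P(Q \mid do(\cdot))$, not only the joint interventional distribution.
\end{enumerate}
The second point is not automatic. With constant $\phi_1$ your two answers would coincide. Likewise, in the running example $P(Rev \mid do(Com(e) = \high))$ is identical in both extensions of \cref{fig:pdpcfg_example_consistent_extensions}, although $Com(e)$ has parent $Int(e)$ in one and no parent in the other. Conversely, your witness gives only existence and says nothing about when the answer is non-unique.

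\textbf{One thing to state explicitly.} Your claim that $M$ implies no independence requires reading the third blocking condition of \cref{def:pcfg_cfg_separation} as not applying to a factor node without a child. Read literally, its condition on ``the child of $f$'' holds vacuously, so $M$ would imply $R' \upmodels Q$ and $[M]$ would be empty. The paper's own example $[M] = \{M_1, M_2\}$ presupposes the same reading as yours, so this is a looseness in the definitions rather than in your argument.
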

\begin{proof}
	If there exists a \ac{rv} $R'_i \in \{R'_1, \ldots, R'_k\}$ such that $\Ne_{\boldsymbol R}(\gr(M), R'_i) \neq \emptyset$ holds, there might exist $M_1, M_2 \in [M]$ such that $\Pa_{\boldsymbol R}(\gr(M_1), R'_i) \neq \Pa_{\boldsymbol R}(\gr(M_2), R'_i)$.
	Then, by definition of the interventional distribution (\cref{def:pcfg_interventional_distribution_pcfg}), the conditional probability distributions being removed from the product differ in $M_1$ and $M_2$, thereby 
	yielding different interventional distributions for $M_1$ and $M_2$.
\end{proof}

Generally, there might be scenarios in which it is possible to uniquely determine the result of an interventional query even if there are undirected edges connected to the intervention variables, as possibly not all undirected edges can be oriented in both directions.
In particular, some orientations might introduce a cycle or change the conditional independence statements implied by the graph structure and hence do not result in a consistent extension.
In other words, it might be possible that there is just a single possible orientation of the parents of the intervention variables and in this case, the result of any interventional query can be uniquely determined.

Next, we gather the theoretical insights from this section to introduce the \ac{elci} algorithm, which efficiently computes the effect of an intervention in a \ac{pdpcfg}.

\section{The Extended Lifted Causal Inference Algorithm} \label{sec:pdpcfg_lci}
Combining the insights from \cref{th:pdpcfg_known_parents,th:pdpcfg_unknown_parents} naturally leads to an algorithm to compute the effect of interventions in a \ac{pdpcfg}.
The idea is that all possible parent sets of the intervention variables have to be considered.
If there is just one possible set of parents, the effect of the intervention can be uniquely determined, otherwise there are multiple possible effects that are enumerated.
This idea is incorporated in the IDA algorithm and its variants~\citep{Guo2021a,Liu2020a,Maathuis2009a} for interventions $do(R' = r')$ with a single intervention variable $R'$ in propositional causal models.
\citet{Nandy2017a} also consider the case of multiple intervention variables $R'_1, \ldots, R'_k$ in propositional causal models, however, they operate in a different setting as they assume observational data generated by an unknown linear structural equation model with independent errors.
\Cref{alg:pdpcfg_elci} displays the \ac{elci} algorithm, which extends the idea of just considering the possible parent sets of intervention variables to handle arbitrary interventions $do(R'_1 = r'_1, \ldots, R'_k = r'_k)$ with $k \geq 1$ in a \ac{pdpcfg}.

\begin{algorithm}[t]
	\caption{Extended Lifted Causal Inference}
	\label{alg:pdpcfg_elci}
	\alginput{A \ac{pdpcfg} $M = (\boldsymbol A \cup \boldsymbol G, \boldsymbol E, \boldsymbol \Phi)$, and an interventional query $P(Q \mid do(R'_1 = r'_1, \allowbreak \ldots, \allowbreak R'_k = r'_k))$ with $\{Q, \allowbreak R'_1, \allowbreak \ldots, \allowbreak R'_k\} \subseteq \boldsymbol R = \gr(\boldsymbol A) = \{ Q, \allowbreak R_1, \allowbreak \ldots, \allowbreak R_{\ell}, \allowbreak R'_1, \allowbreak \ldots, \allowbreak R'_k \}$.} \\
	\algoutput{The set of all possible answers to the interventional query $P(Q \mid do(R'_1 = r'_1, \allowbreak \ldots, \allowbreak R'_k = r'_k))$ in $M$.}
	\begin{algorithmic}[1]
		\State $\boldsymbol P \gets \emptyset$\; \label{line:pdpcfg_elci_init}
		\State $M \gets$ \Ac{pdpcfg} after splitting \acp{pf} in $M$ on each $R'_i \in \{ R'_1, \ldots, R'_k \}$\; \label{line:pdpcfg_elci_split}
		\ForEach{$\boldsymbol C_1 \subseteq \Ne_{\boldsymbol R}(M, R'_1), \ldots, \boldsymbol C_k \subseteq \Ne_{\boldsymbol R}(M, R'_k)$ s.t.\ $\boldsymbol C_1, \ldots, \boldsymbol C_k$ are cliques} \label{line:pdpcfg_elci_outer_loop}
			\State $M' \gets M$\; \label{line:pdpcfg_elci_init_modified_model}
			\ForEach{intervention variable $R'_i \in \{ R'_1, \ldots, R'_k \}$} \label{line:pdpcfg_elci_middle_loop}
				\ForEach{undirected neighbour \ac{rv} $C \in \boldsymbol C_i$ of $R'_i$} \label{line:pdpcfg_elci_inner_loop}
					\State Orient $C - f - R'_i$ as $C - f \to R'_i$ in $M'$\; \label{line:pdpcfg_elci_orient_parents}
				\EndForEach
			\EndForEach
			\If{$[M'] = \emptyset$} \label{line:pdpcfg_elci_if_no_extension}
				\State \Continue\; \label{line:pdpcfg_elci_if_no_extension_continue}
			\EndIf
			\State $M'' \gets$ Any consistent extension from $[M']$\; \label{line:pdpcfg_elci_consistent_extension}
			\State $D \gets \sum\limits_{r_1 \in \range{R_1}} \ldots \sum\limits_{r_{\ell} \in \range{R_{\ell}}} \prod\limits_{R_i \in \{ Q, R_1, \ldots, R_{\ell} \}} P(r_i \mid \pa_{\boldsymbol R}(M'', R_i))$\; \label{line:pdpcfg_elci_formula}
			\State Add $D$ to $\boldsymbol P$\; \label{line:pdpcfg_elci_add_result}
		\EndForEach
		\State \Return $\boldsymbol P$\; \label{line:pdpcfg_elci_return}
	\end{algorithmic}
\end{algorithm}

Given a \ac{pdpcfg} $M = (\boldsymbol A \cup \boldsymbol G, \boldsymbol E, \boldsymbol \Phi)$ and an interventional query $P(Q \mid do(R'_1 = r'_1, \allowbreak \ldots, \allowbreak R'_k = r'_k))$, \ac{elci} proceeds as follows to compute the set of all possible results for $P(Q \mid do(R'_1 = r'_1, \allowbreak \ldots, \allowbreak R'_k = r'_k))$.
First, after initialising an empty set $\boldsymbol P$ to which possible query results are added (\cref{line:pdpcfg_elci_init}), \ac{elci} splits the \acp{pf} in $M$ based on each $R'_i \in \{R'_1, \ldots, R'_k\}$ (\cref{line:pdpcfg_elci_split}).
In particular, \ac{elci} splits every \ac{pf} $\phi \in \boldsymbol \Phi$ for which there is an instance $\phi_j \in \gr(\phi)$ such that any intervention variable $R'_i \in \{R'_1, \allowbreak \ldots, \allowbreak R'_k\}$ is a child of $\phi_j$.
\Ac{elci} then iterates over all possible combinations of parent sets (i.e., over all combinations of subsets of undirected neighbours) of the intervention variables $R'_1, \ldots, R'_k$ (\cref{line:pdpcfg_elci_outer_loop,line:pdpcfg_elci_init_modified_model,line:pdpcfg_elci_middle_loop,line:pdpcfg_elci_inner_loop,line:pdpcfg_elci_orient_parents}).
When considering the subsets of undirected neighbours, it is necessary that all subsets are jointly valid, that is, they are not allowed to alter the conditional independence statements encoded by the model and they must not introduce any directed cycles when oriented towards $R'_1, \ldots, R'_k$.
To ensure the validity of these subsets, they are required to form a clique.
A clique $\boldsymbol C$ is a subset of nodes such that all pairs of nodes in $\boldsymbol C$ are directly connected via a \ac{pf} node, that is, for each pair of nodes $C_1 \in \boldsymbol C$, $C_2 \in \boldsymbol C$ with $C_1 \neq C_2$ it holds that there exists a \ac{pf} node $g \in \boldsymbol G$ such that there is an edge between $C_1$ and $g$ as well as an edge between $C_2$ and $g$ in $\boldsymbol E$ (either directed or undirected).
By ensuring that the subsets of undirected neighbours form cliques, the orientation of the incident edges towards $R'_1, \ldots, R'_k$ does not introduce any pattern $C_1 - g_1 \to R'_i \gets g_2 - C_2$ where $C_1$ and $C_2$ are not directly connected via a \ac{pf} node, as due to the clique property, $C_1$ and $C_2$ are always guaranteed to be directly connected via a factor node.
In consequence, the conditional independence statements encoded by $M'$ are guaranteed to be equivalent to those encoded by $M$~\citep{Maathuis2009a}.
Having obtained a possible combination of parent sets of $R'_1, \ldots, R'_k$, \ac{elci} next extends the modified model $M'$ to any \ac{pcfg} from the set of consistent extensions of $M'$, if such a consistent extension exists (\cref{line:pdpcfg_elci_consistent_extension,line:pdpcfg_elci_if_no_extension,line:pdpcfg_elci_if_no_extension_continue}).
In case there is no consistent extension (e.g., due to $M'$ containing a directed cycle), \ac{elci} continues with the next possible combination of parent sets.
If there is a consistent extension $M'' \in [M']$, then the result of the provided query is given by applying the truncated product formula (\cref{eq:pcfg_truncated_product_formula_pcfg}) in $M''$.
As the parents of the intervention variables are fixed in $M'$, the result of the given interventional query is identical in all consistent extensions of $M'$ according to \cref{th:pdpcfg_known_parents}.
Thus, \ac{elci} computes the result of the interventional query in $M''$ by applying the truncated product formula and then adds the result to $\boldsymbol P$ (\cref{line:pdpcfg_elci_formula,line:pdpcfg_elci_add_result}).
We remark that the computation can be simplified if it is known that the factors in $M$ encode conditional probability distributions.
Then, the modification from \cref{prop:pcfg_lci_intervention} can be applied as in the \ac{lci} algorithm (\cref{alg:pcfg_lci}).
After the result of the interventional query in $M''$ has been computed, \ac{elci} repeats the above steps for the next parent set of the intervention variables $R'_1, \allowbreak \ldots, \allowbreak R'_k$ until all possible parent sets have been taken into account.
In the end, \ac{elci} returns the set $\boldsymbol P$ containing all possible results for the interventional query $P(Q \mid do(R'_1 = r'_1, \allowbreak \ldots, \allowbreak R'_k = r'_k))$ (\cref{line:pdpcfg_elci_return}).
In case there is no causal explanation for the given \ac{pdpcfg} $M$ (that is, $M$ has no consistent extension at all), \ac{elci} returns an empty set.
Such a situation might occur if $M$ already contains a directed cycle or if there are undirected edges that cannot be oriented without introducing a directed cycle or altering the conditional independence statements implied by the model.

\Ac{elci} is also able to handle multiple query variables at once (then, \cref{line:pdpcfg_elci_formula} is adjusted such that only non-query variables are summed out).
To compute a consistent extension in \cref{line:pdpcfg_elci_consistent_extension}, \ac{elci} might just call any of the efficient extension algorithms that are already available~\citep{Verma1992a,Wienoebst2021a,Luttermann2023a}.
Each of these algorithms operates on a partially directed acyclic graph and hence can be directly applied to the underlying causal graph of the \ac{pdpcfg}.
While the set of probabilistic queries obtained from the truncated product formula refers to ground \acp{rv}, these queries can be answered using lifted probabilistic inference, e.g., by running the \ac{ljt} algorithm (which is specifically designed to efficiently handle sets of queries) on $M$.
The correctness of \ac{elci} directly follows from \cref{th:pdpcfg_known_parents}.

\begin{proposition}
	The result computed by \ac{elci} (\cref{alg:pdpcfg_elci}) is correct, i.e., the returned set contains all possible results for the interventional query $P(Q \mid do(R'_1 = r'_1, \allowbreak \ldots, \allowbreak R'_k = r'_k))$ in the given \ac{pdpcfg} $M$.
\end{proposition}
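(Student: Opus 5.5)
To prove the statement, I would show two inclusions. First, \emph{soundness}: every element added to $\boldsymbol P$ is the value of the interventional query in some consistent extension of $M$. Second, \emph{completeness}: for every $M^\ast \in [M]$, the value of the query in $M^\ast$ appears in $\boldsymbol P$. Before either inclusion, I would note that the splitting in \cref{line:pdpcfg_elci_split} leaves the multiset of ground factors unchanged, so it changes neither the encoded distribution nor the ground graph. The set of consistent extensions, viewed at the ground level, is therefore unaffected, and I may work with the split model throughout.

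For soundness, I would fix one iteration of the outer loop that passes the check in \cref{line:pdpcfg_elci_if_no_extension}. Every $M'' \in [M']$ is a \ac{pcfg} that keeps all directed edges of $M$. It orients each remaining undirected edge once, and it entails the same conditional independence statements as $M'$. The clique condition guarantees, by the argument of \citet{Maathuis2009a} cited in the algorithm description, that $M'$ entails the same statements as $M$. Hence $[M'] \subseteq [M]$. In $M'$, every undirected neighbour of each $R'_i$ has been oriented either towards $R'_i$ or (implicitly, in every extension) away from it; I would make precise that the parents of every $R'_i$ are then identical across $[M']$. By \cref{th:pdpcfg_known_parents}, applied to $M'$, the interventional distribution is the same in all $M'' \in [M']$. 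Summing out the non-query variables then yields exactly the quantity $D$ of \cref{line:pdpcfg_elci_formula}, which is the truncated product formula of \cref{def:pcfg_truncated_product_formula_pcfg} evaluated in $M''$. So $D$ is a genuine possible answer.

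For completeness, I would take an arbitrary $M^\ast \in [M]$ and define $\boldsymbol C_i$ as the set of undirected neighbours of $R'_i$ in $M$ whose connecting edge is oriented into $R'_i$ in $M^\ast$. I must show that each $\boldsymbol C_i$ is a clique. If two members $C_1, C_2 \in \boldsymbol C_i$ were not joined by a \ac{pf} node, then $M^\ast$ would contain a new unshielded collider $C_1 - g_1 \to R'_i \gets g_2 - C_2$. This collider changes the implied conditional independences, because the path through $R'_i$ is blocked in $M$ but $C_1, C_2$ become dependent given $R'_i$. That contradicts consistency. Hence the tuple $(\boldsymbol C_1, \ldots, \boldsymbol C_k)$ is enumerated in \cref{line:pdpcfg_elci_outer_loop}. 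Moreover $M^\ast \in [M']$, so the check in \cref{line:pdpcfg_elci_if_no_extension} does not skip this iteration. The parents of the $R'_i$ in $M^\ast$ coincide with those fixed in $M'$, so by \cref{th:pdpcfg_known_parents} the value computed from the chosen $M''$ equals the value in $M^\ast$.

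I expect the main obstacle to be the precise handling of the clique and conditional-independence arguments. Two points need care. First, I must confirm that orienting only the edges into the $R'_i$ (and leaving the others undirected) makes the parent sets fully determined in every extension of $M'$. Second, I must confirm that, in the lifted setting, the uniform orientation of ground edges sharing a \ac{pf} node is respected. For the first point, I would argue that any remaining undirected edge at $R'_i$ must be oriented outwards in all of $[M']$. Otherwise the extension would differ from $M'$ on an edge that $M'$ treats as undirected, and the corresponding neighbour would have been included in $\boldsymbol C_i$ in a different loop iteration. I would then rely on the propositional IDA results, transferred edgewise via grounding, for the remaining details.
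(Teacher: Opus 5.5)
Your soundness/completeness split uses the same ingredients as the paper's proof: \cref{th:pdpcfg_known_parents} to show that each fixed parent set determines one value, and the clique condition of Maathuis et al.\ as the necessity direction. However, the step everything rests on is false as you argue it. You claim that within one iteration the parents of each $R'_i$ are identical across $[M']$. Your reason is that any undirected edge at $R'_i$ outside $\boldsymbol C_i$ must point away from $R'_i$ in every extension, since otherwise "the neighbour would have been included in $\boldsymbol C_i$ in a different loop iteration." That is a non sequitur. What other iterations enumerate does not constrain the extensions of the current $M'$, and these may orient any still-undirected edge into $R'_i$ as long as no cycle or new collider arises.

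Concretely, let $R'$, $A$, $B$ be pairwise joined by two-argument factors with all edges undirected. Every acyclic orientation is a consistent extension. For $\boldsymbol C = \emptyset$, and likewise for $\{A\}$ and $\{B\}$, the set $[M']$ contains an extension with $A \to R' \gets B$. An arbitrary choice in \cref{line:pdpcfg_elci_consistent_extension} can therefore return the $\{A,B\}$-adjusted value in every iteration. The value of the extension $M^\ast$ with $R' \to A$, $R' \to B$, $A \to B$ is then never added. For $Q = A$ these two values are $P(A)$ and $P(A \mid R' = r')$. For the same reason you cannot apply \cref{th:pdpcfg_known_parents} to $M'$: its hypothesis $\Ne_{\boldsymbol R}(\gr(M'), R'_i) = \emptyset$ fails whenever $\boldsymbol C_i \subsetneq \Ne_{\boldsymbol R}(M, R'_i)$. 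Your completeness step, that the chosen $M''$ has the same parents as $M^\ast$, fails along with it.

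The repair is to state explicitly the reading the paper's proof tacitly adopts when it says "for any fixed set of parents." In each iteration, the parent set of $R'_i$ must be exactly its directed parents in $M$ together with $\boldsymbol C_i$. That is, every factor on a non-selected undirected edge at $R'_i$ must send its outgoing edge elsewhere. Equivalently, choose $M''$ only among consistent extensions of $M$ with these parent sets, and skip the iteration if none exists.

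With this reading, each $R'_i$ has no undirected neighbours, \cref{th:pdpcfg_known_parents} applies, and both of your inclusions go through:
\begin{itemize}
\item Soundness holds because $M'' \in [M]$.
\item Completeness holds because, with $\boldsymbol C_i$ defined from $M^\ast$ as you do, $M^\ast$ itself has exactly these parent sets. So the iteration is not skipped, and any admissible $M''$ yields the same value as $M^\ast$.
\end{itemize}
Your clique argument for the necessity direction is fine at the paper's level of rigour.
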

\begin{proof}
	From \cref{th:pdpcfg_known_parents}, we know that the result of any interventional query is identical in all consistent extensions of a \ac{pdpcfg} $M$ if the parents of the intervention variables are known.
	Thus, to compute the set of possible results for $P(Q \mid do(R'_1 = r'_1, \allowbreak \ldots, \allowbreak R'_k = r'_k))$ in $M$, it is sufficient to consider all possible parent sets of the intervention variables $R'_1, \allowbreak \ldots, \allowbreak R'_k$ and compute the result of the given query in the resulting models.
	Due to \citet{Maathuis2009a}, it holds that any subset of undirected neighbours of the intervention variables needs to form a clique in order to obtain a valid orientation when orienting the edges towards the intervention variables (as otherwise, the conditional independence statements induced by the graph change). 
	Consequently, by ensuring that undirected neighbours of $R'_1, \allowbreak \ldots, \allowbreak R'_k$ form cliques before orienting them towards $R'_1, \allowbreak \ldots, \allowbreak R'_k$, \ac{elci} does not miss any possible parent set of the intervention variables.
	For any fixed set of parents of $R'_1, \allowbreak \ldots, \allowbreak R'_k$, due to \cref{th:pdpcfg_known_parents} it is then sufficient to consider any consistent extension and compute the result for the given query in it.
	As \ac{elci} applies the truncated product formula from \cref{def:pcfg_truncated_product_formula_pcfg} to compute the result of the given query, the correctness of \ac{elci} follows.
\end{proof}

Given our assumption that the graph structure is identical for all groundings, it also holds that, e.g., given an intervention $do(Com(E) = \high)$, \ac{elci} has to consider only two possible parent sets regardless of the number of employees while there are $2^{\abs{\domain{E}}}$ possible parent sets in an equivalent propositional model to consider.
In a propositional model, it is also possible to reduce the number of possible parent sets when background knowledge is introduced, i.e., when knowing that specific \acp{rv} are actually representable by a single \ac{prv}.

\begin{corollary}
	Let $M = (\boldsymbol A \cup \boldsymbol G, \boldsymbol E, \boldsymbol \Phi)$ be a \ac{pdpcfg}.
	When intervening on a \ac{prv} $A(\mathcal L)_{| C} \in \boldsymbol A$, under the assumption that the graph structure is identical for all groundings, it holds that
	\begin{enumerate}
		\item \ac{elci} considers $O(2^{\abs{\Ne_{\boldsymbol A}(M, A)}})$ possible parent sets in the worst case, and
		\item in a propositional model, $O(2^{\sum_{R \in \gr(A)} \abs{\Ne_{\boldsymbol R}(\gr(M), R)}})$ possible parent sets have to be considered in the worst case.
	\end{enumerate}
\end{corollary}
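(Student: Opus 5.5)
The corollary is a counting statement, so I would bound the number of iterations of the outer loop in \cref{line:pdpcfg_elci_outer_loop} of \cref{alg:pdpcfg_elci}. An intervention $do(A(\mathcal L)_{| C} = a)$ is, by definition, the joint intervention on every ground \ac{rv} in $\gr(A_{| C})$. The loop ranges over tuples $(\boldsymbol C_1, \ldots, \boldsymbol C_k)$ with $\boldsymbol C_i \subseteq \Ne_{\boldsymbol R}(M, R'_i)$, and only clique tuples survive. For a worst-case bound I drop the clique restriction and simply count all tuples of subsets. Each tuple gives at most one element of $\boldsymbol P$, so this count bounds the number of parent sets considered.

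For the first item, I use the assumption that the graph structure is identical for all groundings. As in the discussion after the definition of a consistent extension, every edge $\{A', g\}$ of $M$ stands for a set of ground edges that must all be oriented the same way. So for every $R \in \gr(A)$, choosing which undirected ground neighbours become parents is the same as choosing a subset of the undirected lifted neighbours $\Ne_{\boldsymbol A}(M, A)$, and that choice is shared by all groundings. I also need that the split in \cref{line:pdpcfg_elci_split} does not change this. The splits only separate groundings of the same \acp{pf} into constraint-restricted copies that carry the same edge types, so the lifted neighbour set of the intervened group is unchanged up to these copies, which inherit a common orientation. Hence a parent configuration is a single subset $\boldsymbol C \subseteq \Ne_{\boldsymbol A}(M, A)$, giving at most $2^{\abs{\Ne_{\boldsymbol A}(M, A)}}$ iterations, i.e.\ $O(2^{\abs{\Ne_{\boldsymbol A}(M, A)}})$.

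For the second item, a propositional model has no information tying orientations of different groundings together. Each $R \in \gr(A)$ independently picks a subset of $\Ne_{\boldsymbol R}(\gr(M), R)$. The number of tuples is $\prod_{R \in \gr(A)} 2^{\abs{\Ne_{\boldsymbol R}(\gr(M), R)}} = 2^{\sum_{R} \abs{\Ne_{\boldsymbol R}(\gr(M), R)}}$, which gives the stated bound.

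I expect the main obstacle to be justifying that these are genuinely worst-case counts and not just loose upper bounds, since $O(\cdot)$ only needs the upper bound but the comparison wants it to be attainable. To show attainment I would exhibit a structure where all neighbours are pairwise connected through a common \ac{pf} node, so every subset is a clique, and where no orientation creates a cycle. One example is a single undirected \ac{pf} over $A$ and its neighbours, with distinct groundings of $A$ not sharing neighbours. In this setting every subset actually appears in the propositional loop. The lifted bound then holds verbatim, while the propositional count is exponential in the domain size, consistent with the $2^{\abs{\domain{E}}}$ example in the text.
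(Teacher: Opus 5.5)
Your counting argument is correct and is essentially the paper's own justification, which is just the remark before the corollary generalising the example $do(Com(E) = \high)$ (two parent sets lifted versus $2^{\abs{\domain{E}}}$ in the ground model): shared lifted orientations reduce the choice to a single subset of $\Ne_{\boldsymbol A}(M, A)$, while independent per-grounding choices multiply to $2^{\sum_{R \in \gr(A)} \abs{\Ne_{\boldsymbol R}(\gr(M), R)}}$. The attainment part is not needed for the $O(\cdot)$ claim, and your example would not give distinct parent sets anyway: with one undirected factor $f$ over $A(x)$ and all its neighbours, orienting $f \to A(x)$ makes every other argument of $f$ a parent at once, so the $2^{\abs{\Ne_{\boldsymbol R}(\gr(M), A(x))}}$ subsets the loop visits collapse to two parent sets per grounding (genuine attainment needs a separate undirected factor per neighbour, with the neighbours pairwise linked so the clique condition holds).
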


We refrain from empirically evaluating \ac{elci} as we have already shown the superiority of \ac{lci} to the propositional case.
\Ac{elci} (\ac{lci}, respectively) enables the computation of answers to interventional queries on a lifted level and hence can also be plugged into parameterised decision models~\citep{Gehrke2018b} to compute the action that maximises the expected utility under the semantics of interventions (instead of using the semantics of conditioning).
A parameterised decision model originally extends a \ac{pfg} by action nodes and utility nodes.
Instead of using an undirected \ac{pfg} as a basis, we can use a \ac{pdpcfg} (or a \ac{pcfg}) as a basis for a parameterised decision model and then compute the expected utility of an action using \ac{elci} (\ac{lci}, respectively), thereby allowing for first-order decision making.

\section{Conclusion} \label{sec:pdpcfg_conclusion}
We introduce \acp{pcfg} to combine lifted probabilistic inference with causal knowledge.
To leverage the power of lifted inference for the computation of the effect of interventions, we further present the \ac{lci} algorithm, which operates on a lifted level and thus allows us to drastically speed up causal inference compared to running causal inference on an equivalent propositional (ground) model.
\Ac{lci} is a simple, yet effective algorithm to compute the effect of interventions.
Moreover, we introduce \acp{pdpcfg} as lifted causal models that allow to incorporate partial causal knowledge, thereby enabling lifted causal inference without the requirement of having a fully specified causal graph at hand.
A \ac{pdpcfg} generalises the concept of a \ac{pcfg} by incorporating both undirected and directed edges in the graph structure.
To compute the effect of interventions in a \ac{pdpcfg}, we introduce the \ac{elci} algorithm, which enumerates all possible results for an interventional query without grounding the entire model.

An interesting direction for future work is to relax the causal sufficiency assumption (\cref{def:pcfg_causal_sufficiency}) and allow for hidden confounders (i.e., confounding variables that are not observed and hence not included in the set of \acp{rv} over which the model is defined) in a \ac{pdpcfg}.
Under the presence of hidden confounders, the result of an interventional query might not be uniquely determinable anymore.
Thus, the identification problem (i.e., the problem of determining whether a causal effect can be uniquely identified) becomes relevant if hidden confounders are present.
In particular, an interesting question is whether the $do$-calculus introduced by \citet{Pearl1995a}, which allows to rewrite an interventional query to obtain a probabilistic query free of $do$-expressions, can be applied to a \ac{pdpcfg} with hidden confounders.

\backmatter

\bmhead{Acknowledgements}
This work is supported by the BMBF project AnoMed 16KISA057 and extends the works \citep{Luttermann2024b} and \citep{Luttermann2024g}.
The authors also would like to thank the anonymous reviewers for their valuable feedback and suggestions to improve the manuscript.
This version of the article has been accepted for publication, after peer review but is not the Version of Record and does not reflect post-acceptance improvements, or any corrections.
The Version of Record is available online at: \url{https://doi.org/10.1007/s10472-026-10009-1}.

\bmhead{Data Availability Statement}
The source code including the data set generators used for the experiments in this article is available at \url{https://github.com/StatisticalRelationalAI/LiftedCausalInference}.

\bmhead{Conflict of Interest}
The authors declare that they have no conflict of interest.

\bibliography{references.bib}

@article{Pearl1986a,
	author    = {Judea Pearl},
	title     = {{Fusion, Propagation, and Structuring in Belief Networks}},
	journal   = {{Artificial Intelligence}},
	volume    = {29},
	year      = {1986},
	pages     = {241--288},
	publisher = {{Elsevier}},
}

@book{Pearl1988a,
	author    = {Judea Pearl},
	title     = {{Probabilistic Reasoning in Intelligent Systems: Networks of Plausible Inference}},
	year      = {1988},
	publisher = {{Morgan Kaufmann Publishers Inc.}},
}

@inproceedings{Verma1990a,
	author    = {Thomas Verma and Judea Pearl},
	title     = {{Equivalence and Synthesis of Causal Models}},
	booktitle = {{Proceedings of the Sixth Conference on Uncertainty in Artificial Intelligence (UAI-1990)}},
	year      = {1990},
	pages     = {255--270},
	publisher = {{Elsevier}},
}

@inproceedings{Verma1992a,
	author    = {Thomas Verma and Judea Pearl},
	title     = {{An Algorithm for Deciding if a Set of Observed Independencies
	Has a Causal Explanation}},
	booktitle = {{Proceedings of the Eighth Conference on Uncertainty in Artificial Intelligence (UAI-1992)}},
	year      = {1992},
	pages     = {323--330},
	publisher = {{Morgan Kaufmann Publishers Inc.}},
}

@article{Pearl1995a,
	author    = {Judea Pearl},
	title     = {{Causal Diagrams for Empirical Research}},
	journal   = {{Biometrika}},
	volume    = {82},
	year      = {1995},
	pages     = {669--688},
	publisher = {{Oxford University Press}},
}

@book{Lauritzen1996a,
	author    = {Steffen L. Lauritzen},
	title     = {{Graphical Models}},
	year      = {1996},
	publisher = {{Clarendon Press}},
}

@article{Andersson1997a,
	author    = {Steen A. Andersson and David Madigan and Michael D. Perlman},
	title     = {{A Characterization of Markov Equivalence Classes for Acyclic Digraphs}},
	journal   = {{The Annals of Statistics}},
	volume    = {25},
	year      = {1997},
	pages     = {505--541},
	publisher = {{Institute of Mathematical Statistics}},
}

@inproceedings{Frey1997a,
	author    = {Brendan J. Frey and Frank R. Kschischang and Hans-Andrea Loeliger and Niclas Wiberg},
	title     = {{Factor Graphs and Algorithms}},
	booktitle = {{Proceedings of the Thirty-Fifth Annual Allerton Conference on Communication, Control, and Computing}},
	year      = {1997},
	pages     = {666--680},
	publisher = {{Allerton House}},
}

@inproceedings{Shachter1998a,
	author    = {Ross D. Shachter},
	title     = {{Bayes-Ball: Rational Pastime (For Determining Irrelevance and Requisite Information in Belief Networks and Influence Diagrams)}},
	booktitle = {{Proceedings of the Fourteenth Conference on Uncertainty in Artificial Intelligence (UAI-1998)}},
	year      = {1998},
	pages     = {480--487},
	publisher = {{Morgan Kaufmann Publishers Inc.}},
}

@book{Spirtes2000a,
	author    = {Peter Spirtes and Clark Glymour and Richard Scheines},
	title     = {{Causation, Prediction, and Search}},
	year      = {2000},
	publisher = {{MIT Press}},
	edition   = {2nd},
}

@article{Kschischang2001a,
	author    = {Frank R. Kschischang and Brendan J. Frey and Hans-Andrea Loeliger},
	title     = {{Factor Graphs and the Sum-Product Algorithm}},
	journal   = {{IEEE Transactions on Information Theory}},
	volume    = {47},
	year      = {2001},
	pages     = {498--519},
	publisher = {{IEEE}},
}

@inproceedings{Frey2003a,
	author    = {Brendan J. Frey},
	title     = {{Extending Factor Graphs so as to Unify Directed and Undirected Graphical Models}},
	booktitle = {{Proceedings of the Nineteenth Conference on Uncertainty in Artificial Intelligence (UAI-2003)}},
	year      = {2003},
	pages     = {257--264},
	publisher = {{Morgan Kaufmann Publishers Inc.}},
}

@inproceedings{Poole2003a,
	author    = {David Poole},
	title     = {{First-Order Probabilistic Inference}},
	booktitle = {{Proceedings of the Eighteenth International Joint Conference on Artificial Intelligence (IJCAI-2003)}},
	year      = {2003},
	pages     = {985--991},
	publisher = {{Morgan Kaufmann Publishers Inc.}},
}

@inproceedings{DeSalvoBraz2005a,
	author    = {Rodrigo {De Salvo Braz} and Eyal Amir and Dan Roth},
	title     = {{Lifted First-Order Probabilistic Inference}},
	booktitle = {{Proceedings of the Nineteenth International Joint Conference on Artificial Intelligence (IJCAI-2005)}},
	year      = {2005},
	pages     = {1319--1325},
	publisher = {{Morgan Kaufmann Publishers Inc.}},
}

@inproceedings{DeSalvoBraz2006a,
	author    = {Rodrigo {De Salvo Braz} and Eyal Amir and Dan Roth},
	title     = {{MPE and Partial Inversion in Lifted Probabilistic Variable Elimination}},
	booktitle = {{Proceedings of the Twenty-First National Conference on Artificial Intelligence (AAAI-2006)}},
	year      = {2006},
	pages     = {1123--1130},
	publisher = {{AAAI Press}},
}

@inproceedings{Milch2008a,
	author    = {Brian Milch and Luke S. Zettlemoyer and Kristian Kersting and Michael Haimes and Leslie Pack Kaelbling},
	title     = {{Lifted Probabilistic Inference with Counting Formulas}},
	booktitle = {{Proceedings of the Twenty-Third AAAI Conference on Artificial Intelligence (AAAI-2008)}},
	year      = {2008},
	pages     = {1062--1068},
	publisher = {{AAAI Press}},
}

@inproceedings{Kisynski2009a,
	author    = {Jacek Kisy\'{n}ski and David Poole},
	title     = {{Constraint Processing in Lifted Probabilistic Inference}},
	booktitle = {{Proceedings of the Twenty-Fifth Conference on Uncertainty in Artificial Intelligence (UAI-2009)}},
	year      = {2009},
	pages     = {293--302},
	publisher = {{AUAI Press}},
}

@article{Maathuis2009a,
	author    = {Marloes H. Maathuis and Markus Kalisch and Peter Bühlmann},
	title     = {{Estimating High-Dimensional Intervention Effects from Observational Data}},
	journal   = {{The Annals of Statistics}},
	volume    = {37},
	year      = {2009},
	pages     = {3133--3164},
	publisher = {{Institute of Mathematical Statistics}},
}

@book{Pearl2009a,
	author    = {Judea Pearl},
	title     = {{Causality: Models, Reasoning and Inference}},
	year      = {2009},
	publisher = {{Cambridge University Press}},
	edition   = {2nd},
}

@inproceedings{Maier2010a,
	author    = {Marc Maier and Brian Taylor and Huseyin Oktay and David Jensen},
	title     = {{Learning Causal Models of Relational Domains}},
	booktitle = {{Proceedings of the Twenty-Fourth AAAI Conference on Artificial Intelligence (AAAI-2010)}},	
	year      = {2010},
	pages     = {531--538},
	publisher = {{AAAI Press}},
}

@inproceedings{Meert2010a,
	author    = {Wannes Meert and Nima Taghipour and Hendrik Blockeel},
	title     = {{First-Order Bayes-Ball}},
	booktitle = {{Proceedings of the Fourteenth European Conference on Machine Learning and Principles and Practice of Knowledge Discovery in Databases (ECML PKDD-2010)}},
	year      = {2010},
	pages     = {369--384},
	publisher = {{Springer}},
}

@inproceedings{VanDenBroeck2011a,
	author    = {Guy {Van den Broeck}},
	title     = {{On the Completeness of First-Order Knowledge Compilation for Lifted Probabilistic Inference}},
	booktitle = {{Advances in Neural Information Processing Systems 24 (NIPS-2011)}},
	year      = {2011},
	pages     = {1386--1394},
	publisher = {{Curran Associates, Inc.}},
}

@inproceedings{Apsel2012a,
	author    = {Udi Apsel and Ronen I. Brafman},
	title     = {{Lifted MEU by Weighted Model Counting}},
	booktitle = {{Proceedings of the Twenty-Sixth AAAI Conference on Artificial Intelligence (AAAI-2012)}},
	year      = {2012},
	pages     = {1861--1867},
	publisher = {{AAAI Press}},
}

@inproceedings{Winn2012a,
	author    = {John Winn},
	title     = {{Causality with Gates}},
	booktitle = {{Proceedings of the Fifteenth International Conference on Artificial Intelligence and Statistics (AISTATS-2012)}},
	year      = {2012},
	pages     = {1314--1322},
	publisher = {{PMLR}},
}

@inproceedings{Maier2013a,
	author    = {Marc Maier and Katerina Marazopoulou and David Arbour and David Jensen},
	title     = {{A Sound and Complete Algorithm for Learning Causal Models from Relational Data}}, 
	booktitle = {{Proceedings of the Twenty-Ninth Conference on Uncertainty in Artificial Intelligence (UAI-2013)}},
	year      = {2013},
	pages     = {371--380},
	publisher = {{AUAI Press}},
}

@article{Taghipour2013a,
	author    = {Nima Taghipour and Daan Fierens and Jesse Davis and Hendrik Blockeel},
	title     = {{Lifted Variable Elimination: Decoupling the Operators from the Constraint Language}},
	journal   = {{Journal of Artificial Intelligence Research}},
	volume    = {47},
	year      = {2013},
	pages     = {393--439},
	publisher = {{AI Access Foundation}},
}

@inproceedings{Taghipour2013b,
	author    = {Nima Taghipour and Daan Fierens and Guy {Van den Broeck} and Jesse Davis and Hendrik Blockeel},
	title     = {{Completeness Results for Lifted Variable Elimination}},
	booktitle = {{Proceedings of the Sixteenth International Conference on Artificial Intelligence and Statistics (AISTATS-2013)}},
	year      = {2013},
	pages     = {572--580},
	publisher = {{PMLR}},
}

@inproceedings{Niepert2014a,
	author    = {Mathias Niepert and Guy {Van den Broeck}},
	title     = {{Tractability through Exchangeability: A New Perspective on Efficient Probabilistic Inference}},
	booktitle = {{Proceedings of the Twenty-Eighth AAAI Conference on Artificial Intelligence (AAAI-2014)}},
	year      = {2014},
	pages     = {2467--2475},
	publisher = {{AAAI Press}},
}

@inproceedings{Lee2015a,
	author    = {Sanghack Lee and Vasant Honavar},
	title     = {{Lifted Representation of Relational Causal Models Revisited: Implications for Reasoning and Structure Learning}}, 
	booktitle = {{Proceedings of the UAI 2015 Conference on Advances in Causal Inference}},
	year      = {2015},
	pages     = {56--65},
	publisher = {{CEUR}},
}

@inproceedings{Arbour2016a,
	author    = {David Arbour and Dan Garant and David Jensen},
	title     = {{Inferring Network Effects from Observational Data}},
	booktitle = {{Proceedings of the Twenty-Second ACM SIGKDD International Conference on Knowledge Discovery and Data Mining (KDD-2016)}},
	year      = {2016},
	pages     = {715--724},
	publisher = {{ACM Press}},
}

@inproceedings{Braun2016a,
	author    = {Tanya Braun and Ralf M\"oller},
	title     = {{Lifted Junction Tree Algorithm}},
	booktitle = {{Proceedings of the Thirty-Ninth German Conference on Artificial Intelligence (KI-2016)}},
	year      = {2016},
	pages     = {30--42},
	publisher = {{Springer}},
}

@inproceedings{Lee2016a,
	author    = {Sanghack Lee and Vasant Honavar},
	title     = {{On Learning Causal Models from Relational Data}},
	booktitle = {{Proceedings of the Thirtieth AAAI Conference on Artificial Intelligence (AAAI-2016)}},
	year      = {2016},
	pages     = {3263--3270},
	publisher = {{AAAI Press}},
}

@book{Pearl2016a,
	author    = {Judea Pearl and Madelyn Glymour and Nicholas P. Jewell},
	title     = {{Causal Inference in Statistics: A Primer}},
	year      = {2016},
	publisher = {{Wiley}},
	edition   = {1st},
}

@inbook{Genesereth2017a,
	author    = {Michael Genesereth and Eric J. Kao},
	title     = {{Relational Logic}},
	booktitle = {{Introduction to Logic}},
	year      = {2017},
	pages     = {63--81},
	publisher = {{Springer}},
}

@article{Nandy2017a,
	author    = {Preetam Nandy and Marloes H. Maathuis and Thomas S. Richardson},
	title     = {{Estimating the Effect of Joint Interventions from Observational Data in Sparse High-Dimensional Settings}},
	journal   = {{The Annals of Statistics}},
	volume    = {45},
	year      = {2017},
	pages     = {647--674},
	publisher = {{Institute of Mathematical Statistics}},
}

@book{Peters2017a,
	author    = {Jonas Peters and Dominik Janzing and Bernhard Schölkopf},
	title     = {{Elements of Causal Inference: Foundations and Learning Algorithms}},
	year      = {2017},
	publisher = {{MIT Press}},
}

@inproceedings{Braun2018a,
	author    = {Tanya Braun and Ralf Möller},
	title     = {{Parameterised Queries and Lifted Query Answering}},
	booktitle = {{Proceedings of the Twenty-Seventh International Joint Conference on Artificial Intelligence (IJCAI-2018)}},
	year      = {2018},
	pages     = {4980--4986},
	publisher = {{IJCAI Organization}},
}

@inproceedings{Gehrke2018a,
	author    = {Marcel Gehrke and Tanya Braun and Ralf M\"oller},
	title     = {{Lifted Dynamic Junction Tree Algorithm}},
	booktitle = {{Proceedings of the Twenty-Third International Conference on Conceptual Structures (ICCS-2018)}},
	year      = {2018},
	pages     = {55--69},
	publisher = {{Springer}},
}

@inproceedings{Gehrke2018b,
	author    = {Marcel Gehrke and Tanya Braun and Ralf M\"oller and Alexander Waschkau and Christoph Strumann and Jost Steinh\"auser},
	title     = {{Lifted Maximum Expected Utility}},
	booktitle = {{Proceedings of the First International Workshop on Artificial Intelligence in Health (AIH-2018)}},
	year      = {2019},
	pages     = {131--141},
	publisher = {{Springer}},
}

@inproceedings{Gehrke2019c,
	author    = {Marcel Gehrke and Tanya Braun and Ralf M\"oller}, 
	title     = {{Lifted Temporal Maximum Expected Utility}}, 
	booktitle = {{Proceedings of the Thirty-Second Canadian Conference on Artificial Intelligence (CANAI-2019)}},
	year      = {2019},
	pages     = {380--386},
	publisher = {{Springer}},
}

@inproceedings{Lee2019a,
	author    = {Sanghack Lee and Vasant Honavar},
	title     = {{Towards Robust Relational Causal Discovery}},
	booktitle = {{Proceedings of The Thirty-Fifth Conference on Uncertainty in Artificial Intelligence (UAI-2019)}},
	year      = {2019},
	pages     = {345--355},
	publisher = {{PMLR}},
}

@phdthesis{Braun2020a,
	author = {Tanya Braun},
	title  = {{Rescued from a Sea of Queries: Exact Inference in Probabilistic Relational Models}},
	school = {{University of Lübeck}},
	year   = {2020},
}

@inproceedings{Gehrke2020a,
	author    = {Marcel Gehrke and Ralf M\"oller and Tanya Braun}, 
	title     = {{Taming Reasoning in Temporal Probabilistic Relational Models}}, 
	booktitle = {{Proceedings of the Twenty-Fourth European Conference on Artificial Intelligence (ECAI-2020)}}, 
	year      = {2020},
	pages     = {2592--2599},
	publisher = {{IOS Press}},
}

@inproceedings{Liu2020a,
	author    = {Liu, Yue and Fang, Zhuangyan and He, Yangbo and Geng, Zhi},
	title     = {{Collapsible IDA: Collapsing Parental Sets for Locally Estimating Possible Causal Effects}},
	booktitle = {{Proceedings of the Thirty-Sixth Conference on Uncertainty in Artificial Intelligence (UAI-2020)}},
	year      = {2020},
	pages     = {290--299},
	publisher = {{PMLR}},
}

@inproceedings{Salimi2020a,
	author    = {Babak Salimi and Harsh Parikh and Moe Kayali and Lise Getoor and Sudeepa Roy and Dan Suciu},
	title     = {{Causal Relational Learning}},
	booktitle = {{Proceedings of the 2020 ACM SIGMOD International Conference on Management of Data}},
	year      = {2020},
	pages     = {241--256},
	publisher = {{ACM Press}},
}

@inproceedings{Guo2021a,
	author    = {Richard Guo and Emilija Perkovic},
	title     = {{Minimal Enumeration of All Possible Total Effects in a Markov Equivalence Class}},
	booktitle = {{Proceedings of The Twenty-Fourth International Conference on Artificial Intelligence and Statistics (AISTATS-2021)}},
	year      = {2021},
	pages     = {2395--2403},
	publisher = {{PMLR}},
}

@inproceedings{Wienoebst2021a,
	author    = {Marcel Wienöbst and Max Bannach and Maciej Li\'{s}kiewicz},
	title     = {{Extendability of Causal Graphical Models: Algorithms and Computational Complexity}},
	booktitle = {{Proceedings of the Thirty-Seventh Conference on Uncertainty in Artificial Intelligence (UAI-2021)}},
	year      = {2021},
	pages     = {1248--1257},
	publisher = {{PMLR}},
}

@inproceedings{Ahsan2022a,
	author    = {Ragib Ahsan and David Arbour and Elena Zheleva},
	title     = {{Relational Causal Models with Cycles: Representation and Reasoning}},
	booktitle = {{Proceedings of the First Conference on Causal Learning and Reasoning (CLeaR-2022)}},
	year      = {2022},
	pages     = {1--18},
	publisher = {{PMLR}},
}

@inproceedings{Braun2022a,
	author    = {Tanya Braun and Marcel Gehrke},
	title     = {{Explainable and Explorable Decision Support}},
	booktitle = {{Proceedings of the Twenty-Seventh International Conference on Conceptual Structures (ICCS-2022)}},
	year      = {2022},
	pages     = {99--114},
	publisher = {{Springer}},
}

@inproceedings{Ahsan2023a,
	author    = {Ragib Ahsan and David Arbour and Elena Zheleva},
	title     = {{Learning Relational Causal Models with Cycles through Relational Acyclification}},
	booktitle = {{Proceedings of the Thirty-Seventh AAAI Conference on Artificial Intelligence (AAAI-2023)}},
	year      = {2023},
	pages     = {12164--12171},
	publisher = {{AAAI Press}},
}

@inproceedings{Luttermann2023a,
	author    = {Malte Luttermann and Marcel Wien\"obst and Maciej Li\'skiewicz},
	title     = {{Practical Algorithms for Orientations of Partially Directed Graphical Models}},
	booktitle = {{Proceedings of the Second Conference on Causal Learning and Reasoning (CLeaR-2023)}},
	year      = {2023},
	pages     = {259--280},
	publisher = {{PMLR}},
}

@inproceedings{Luttermann2024b,
	author    = {Malte Luttermann and Mattis Hartwig and Tanya Braun and Ralf M\"oller and Marcel Gehrke},
	title     = {{Lifted Causal Inference in Relational Domains}},
	booktitle = {{Proceedings of the Third Conference on Causal Learning and Reasoning (CLeaR-2024)}},
	year      = {2024},
	pages     = {827--842},
	publisher = {{PMLR}},
}

@inproceedings{Luttermann2024g,
	author    = {Malte Luttermann and Tanya Braun and Ralf Möller and Marcel Gehrke},
	title     = {{Estimating Causal Effects in Partially Directed Parametric Causal Factor Graphs}},
	booktitle = {{Proceedings of the Sixteenth International Conference on Scalable Uncertainty Management (SUM-2024)}},
	year      = {2024},
	pages     = {265--280},
	publisher = {{Springer}},
}
 
\end{document}